\let\footnote=\endnote
 \def\bibsep{\smallskipamount}%
\begin{document}
%%%%%%%%%%%%%%%%

% Outcomment only when entries are known. Otherwise leave as is and
%   default values will be used.
%\setcounter{page}{1}
%\VOLUME{00}%
%\NO{0}%
%\MONTH{Xxxxx}% (month or a similar seasonal id)
%\YEAR{0000}% e.g., 2005
%\FIRSTPAGE{000}%
%\LASTPAGE{000}%
%\SHORTYEAR{00}% shortened year (two-digit)
%\ISSUE{0000} %
%\LONGFIRSTPAGE{0001} %
%\DOI{10.1287/xxxx.0000.0000}%

% Author's names for the running heads
% Sample depending on the number of authors;
% \RUNAUTHOR{Jones}
% \RUNAUTHOR{Jones and Wilson}
% \RUNAUTHOR{Jones, Miller, and Wilson}
% \RUNAUTHOR{Jones et al.} % for four or more authors
% Enter authors following the given pattern:
%\RUNAUTHOR{}

% Title or shortened title suitable for running heads. Sample:
% \RUNTITLE{Bundling Information Goods of Decreasing Value}
% Enter the (shortened) title:
\RUNTITLE{A Practical Minimax Approach to Causal Inference with Limited Overlap}

% Full title. Sample:
% \TITLE{Bundling Information Goods of Decreasing Value}
% Enter the full title:
\TITLE{A Practical Minimax Approach to Causal Inference with Limited Overlap}

%} % end of the block

\ABSTRACT{Limited overlap between treated and control groups is a key challenge in
observational analysis.  Standard approaches like trimming importance weights
can reduce variance but introduce a fundamental bias.  We propose a
sensitivity framework for contextualizing findings under limited overlap, where
we assess how irregular the outcome function has to be in order for the main
finding to be invalidated. Our approach is based on worst-case confidence
bounds on the bias introduced by standard trimming practices, under explicit
assumptions necessary to extrapolate counterfactual estimates from regions of
overlap to those without.  Empirically, we demonstrate how our sensitivity
framework protects against spurious findings by quantifying uncertainty in
regions with limited overlap.

%%% Local Variables:
%%% mode: latex
%%% TeX-master: "main"
%%% End:
}

% Sample
%\KEYWORDS{deterministic inventory theory; infinite linear programming duality;
%  existence of optimal policies; semi-Markov decision process; cyclic schedule}

% Fill in data. If unknown, outcomment the field
\KEYWORDS{causal inference, limited overlap}
% \HISTORY{This paper was first submitted on July,
%   2020.}

\maketitle
%%%%%%%%%%%%%%%%%%%%%%%%%%%%%%%%%%%%%%%%%%%%%%%%%%%%%%%%%%%%%%%%%%%%%%

% Samples of sectioning (and labeling) in MNSC
% NOTE: (1) \section and \subsection do NOT end with a period
%       (2) \subsubsection and lower need end punctuation
%       (3) capitalization is as shown (title style).
%
%\section{Introduction.}\label{intro} %%1.
%\subsection{Duality and the Classical EOQ Problem.}\label{class-EOQ} %% 1.1.
%\subsection{Outline.}\label{outline1} %% 1.2.
%\subsubsection{Cyclic Schedules for the General Deterministic SMDP.}
%  \label{cyclic-schedules} %% 1.2.1
%\section{Problem Description.}\label{problemdescription} %% 2.

% Text of your paper here

\else

\documentclass[11pt]{article}
\usepackage[numbers]{natbib}
\usepackage{packages}
\usepackage{editing-macros}
\usepackage{formatting}
\usepackage{./statistics-macros}

% \onehalfspacing
% \renewcommand{\baselinestretch}{1.35}

\begin{document}

% Control whitespace around equations
\abovedisplayskip=8pt plus0pt minus3pt
\belowdisplayskip=8pt plus0pt minus3pt

% ------------------------------------------------------------------------
% Main Paper Body
% ------------------------------------------------------------------------

% ------------------------------------------------------------------------
% Default title and authorship
% ------------------------------------------------------------------------
\begin{center}
  {\LARGE A Sensitivity Approach to Causal Inference Under Limited Overlap} \\
  \vspace{.5cm} {\large Yuanzhe Ma \qquad Yian Huang \qquad  Hongseok Namkoong} \\
  \vspace{.2cm}
  {\normalsize   Columbia University} \\
  \vspace{.2cm} \texttt{\{ym2865, yh3209, hongseok.namkoong\}@columbia.edu}

  \end{center}

% ------------------------------------------------------------------------
% Abstract
% ------------------------------------------------------------------------

\begin{abstract}%

\end{abstract}

\fi

%\tableofcontents % This generates the table of contents automatically

\section{Introduction}
\label{section:introduction}

Observational data is widely utilized when randomized experiments are
infeasible or fail to adequately represent target populations.  A key
challenge in observational analysis is the lack of overlap between treatment
and control groups.  Even when a nominally large dataset is collected, the
effective sample size may be prohibitively small when there is a region with
little overlap between treated and control populations.  
As an example, if the
treatment of interest is rarely observed among
older citizens, estimating their
counterfactual (treated) outcome becomes inherently unreliable.  This
challenge is further exacerbated in modern operational contexts, where
high-dimensional covariate representations~\citep{DAmourDiFeLeSe21} increase
data sparsity, making causal identification particularly difficult in regions
of the covariate space with small effective sample size. 
Standard inferential
methods relying on
asymptotic normal approximations
tend to fail silently when the
effective sample size is limited. 
Empirically, different observational
datasets with lack of overlap may lead to contradictory conclusions on the
treatment effects~\citep{NIH16, LiReShetal18, LandgrenSiAuetal18,
  HussainObShSo22, HussainShObDeSo23}.

Theoretically, several authors have quantified how the lack of overlap
deteriorates the convergence rates of typical confidence
intervals~\citep{KhanTa10, BussoDiMc14, Rothe17, YangDi18,HongLeLi19, MaWa20,
  HeilerKa21, KhanNe22, MaSaSaUr23}.  In particular,~\citet{MaWa20} show that
in general, the limiting distribution of the inverse probability weighting
(IPW) estimator may not be Gaussian, calling into question the validity of
standard central limit theorems-based confidence intervals.  Practically, limited overlap results
in large importance weights in typical estimators, leading to high
variances~\citep{KangSc07, KhanTa10}.  To stabilize these estimators,
researchers have developed various trimming and reweighting
methodologies~\citep{CrumpHoImMi06, CrumpRiHoImMi09, LiThLi18,
  JuScVa19,SasakiUr22}, which are commonly applied in fields including
economics and biomedical research~\citep{LaLonde86, HeckmanIcTo97,
  DehejiaWa99, Frolich04, SmithTo05, PetersenPoGrWaVa12, GlynnLuRoPoScSt19}.
Trimming, in particular, has been shown to improve the accuracy of estimators
in certain scenarios~\citep{ColeHe08, Sturmer10, LeeLeSt11}, and has become a
standard practice in cases of limited overlap.  While these approaches are
effective in reducing variance, they change the estimand and introduce
biases that are often difficult to quantify.

\begin{figure}[t] 
  \begin{minipage}[b]{0.49\textwidth}
    \centering \includegraphics[width=   \textwidth, height=6cm]{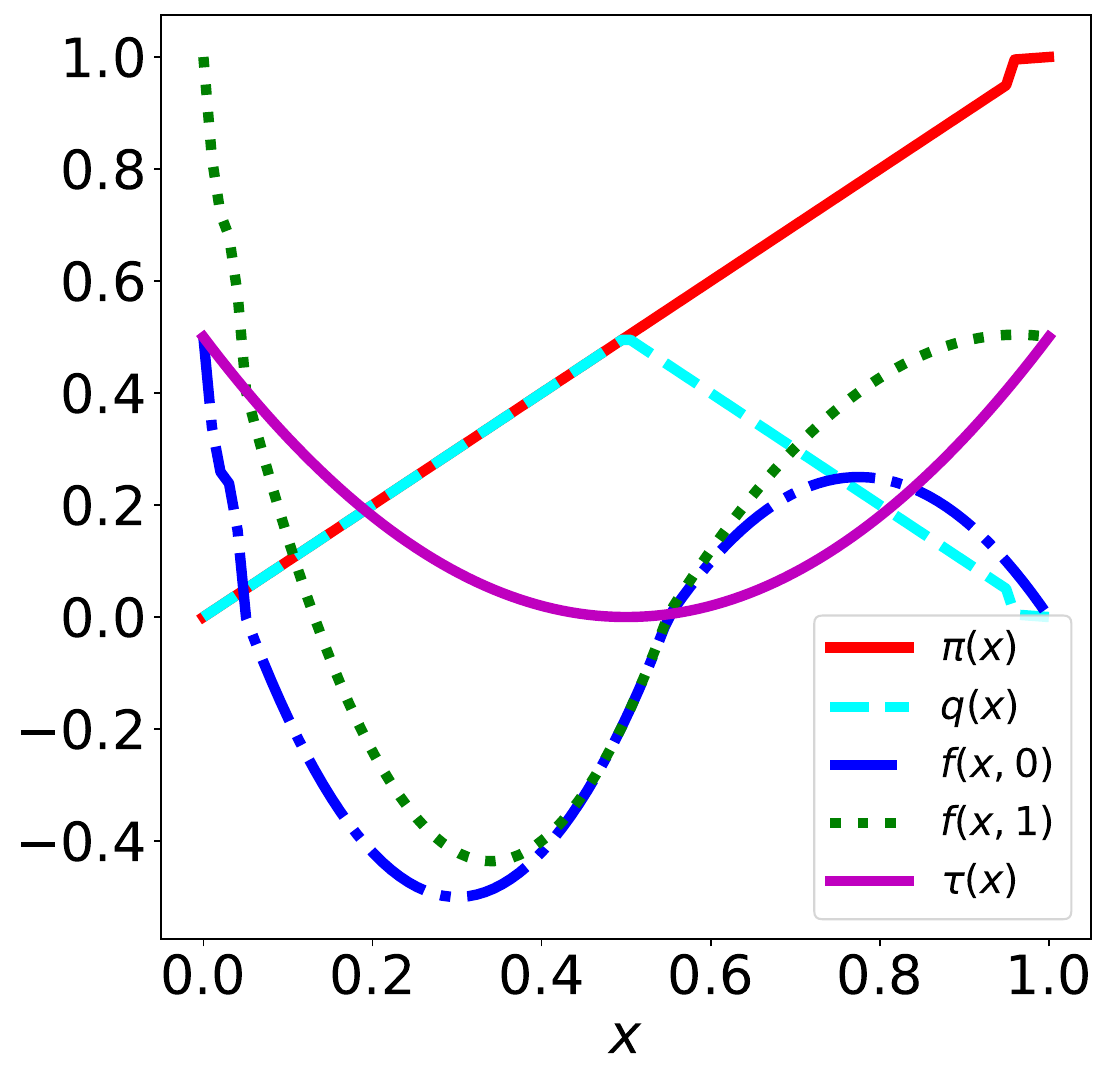}
  \end{minipage}
  \hfill
  \begin{minipage}[b]{0.49\textwidth}
    \centering \includegraphics[width=\textwidth, height=6cm]{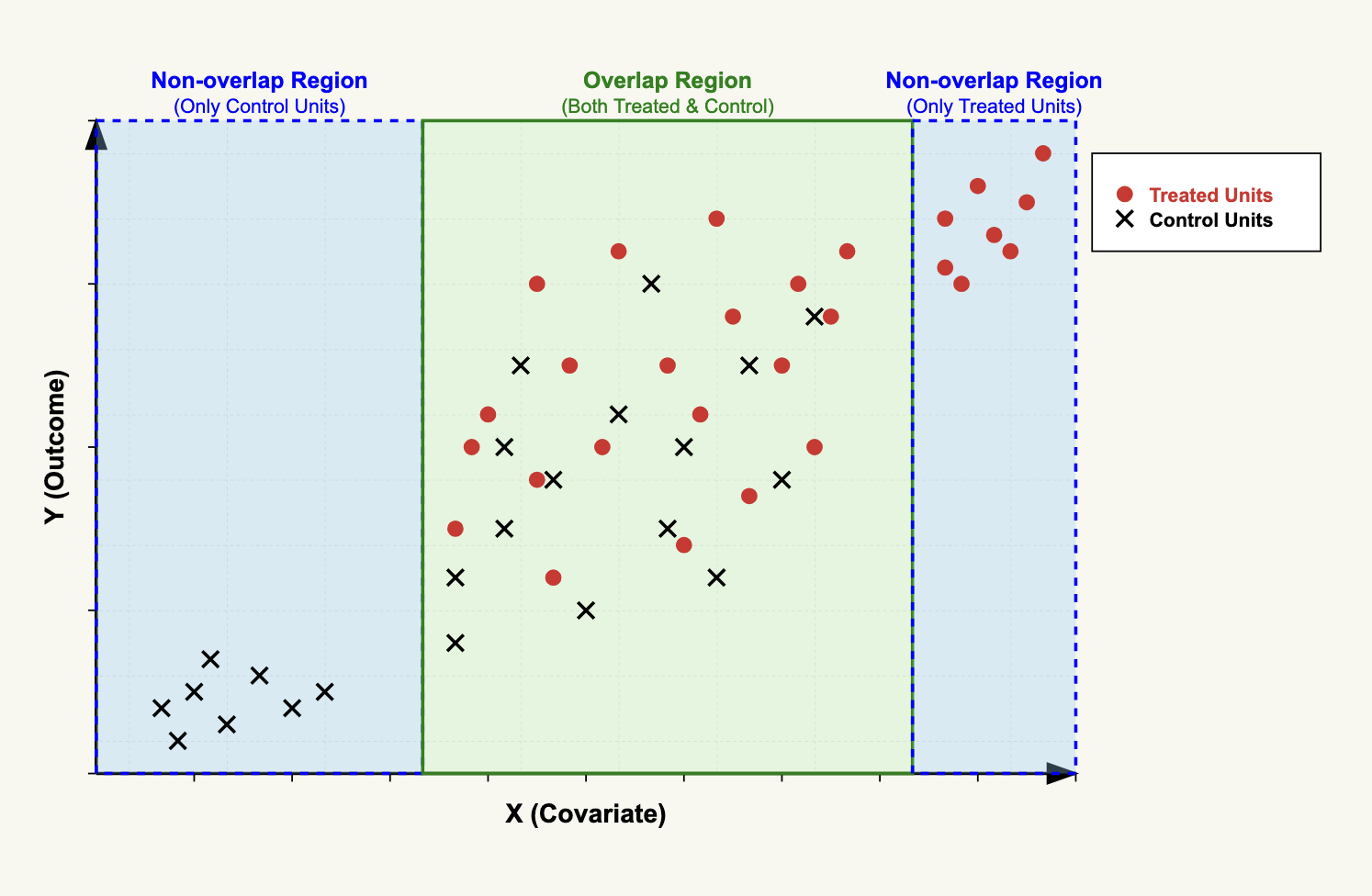}
  \end{minipage}
  \caption{\textbf{Left}: data-generation process used in the simulation setup where
    $\pi(x) = \mathbb{P}(Z = 1 \mid X = x)$ denotes the propensity score,
    $q(x) = \min \set{\pi(x),1-\pi(x)}$ measures whether
    a point has sufficient
    overlap, and $f(x, z)$ represents the potential outcome for a unit with
    covariates $x$ under treatment assignment $z \in \set{0, 1}$.  The individual
    treatment effect is defined as $\tau(x) = f(x, 1) - f(x, 0)$.
    \textbf{Right}: Visualization of one simulated observational dataset.
  }\label{fig:simulation}
\end{figure}

To illustrate, we consider the following simple example:

\begin{example}
  We illustrate the impact of limited overlap in a problem with
  one-dimensional covariate $X \sim \Uni[0,1]$.  We defer full details of the
  data-generating process to Section~\ref{sec:experiment-sim}, and instead
  provide a visual illustration in Figure~\ref{fig:simulation}.  Here, the
  propensity scores take extreme values when $x$ is close to 0 or 1.
  In Figure~\ref{fig:simulation-AIPW-viz}a, we observe how the augmented
  inverse propensity weighting ($\AIPW$) estimator exhibits wide confidence
  intervals that fluctuate around the true treatment effect, reflecting its
  instability under limited overlap.  To remedy this, we consider standard
  heuristics that truncate the importance weights based on 
  a threshold from the
  set
  \begin{align}
    \mc{E} = \set{0.01,0.02,0.03,0.04,0.05} \label{eqn:sim-eps-set}
  \end{align}
  by selecting $\epsilon$ that minimizes the length of the truncated
  $\mathsf{AIPW}$ confidence interval, denoted by
  $\mathsf{AIPW}_{\mathsf{partial}, \epsilon}$:
  \begin{align}
    \epsilon\opt \in \argmin_{\epsilon \in \mc{E}} 
    |\AIPW_{\mathsf{partial}, \epsilon}|. 
    \label{eqn:eps-opt-def}
  \end{align} 
  In Figure~\ref{fig:simulation-AIPW-viz}b, we observe $\AIPWP$ gradually
  deviates from the target estimand as the non-overlap region increases due to
  the bias introduced by trimming.
  \label{example:simulation}
\end{example}

\begin{figure}[t]
  \centering
    \begin{minipage}[b]{0.49\textwidth}
    \includegraphics[width=\textwidth, height=6.5cm]{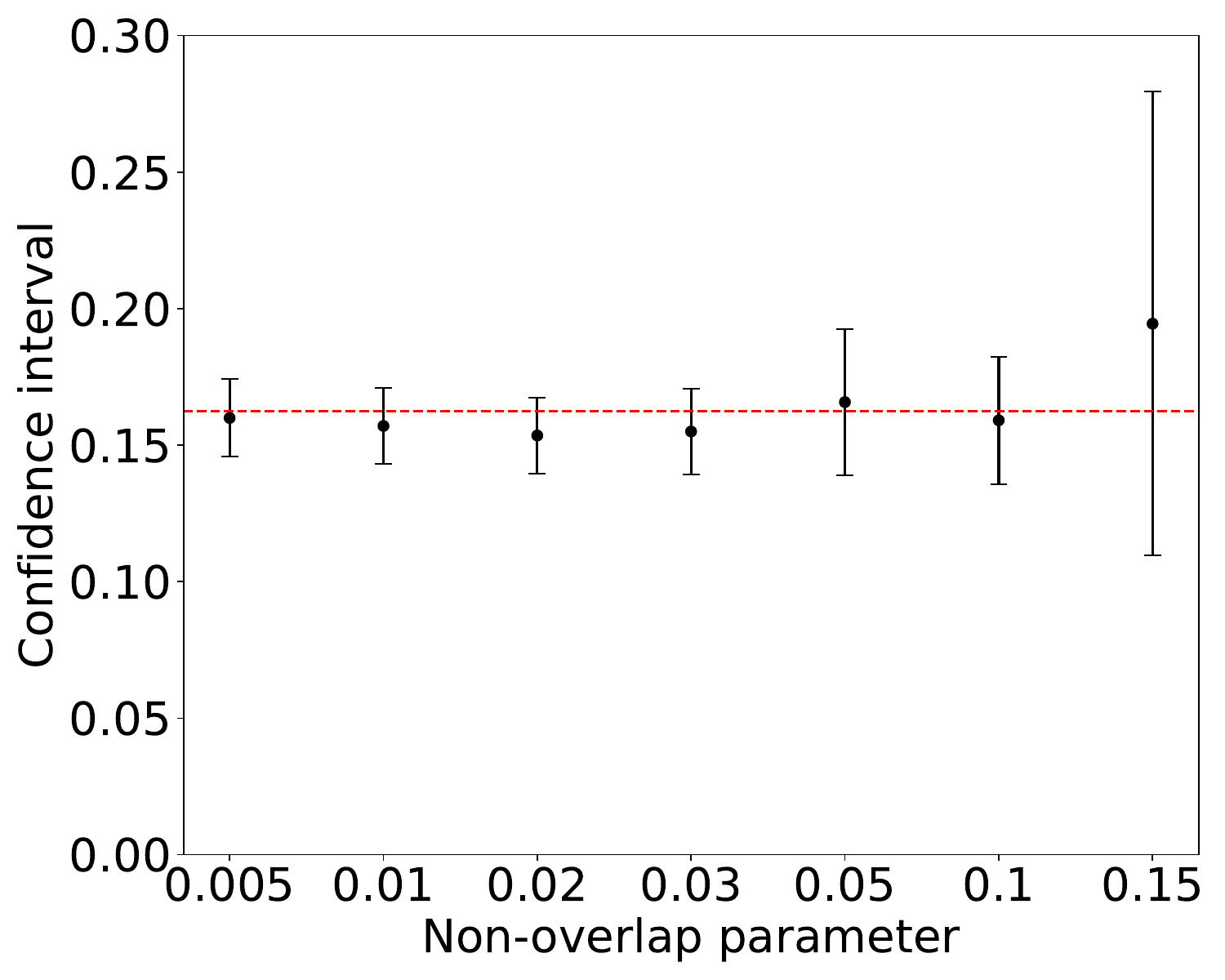}
  \end{minipage}
  \begin{minipage}[b]{0.49\textwidth}
    \includegraphics[width=\textwidth, height=6.5cm]{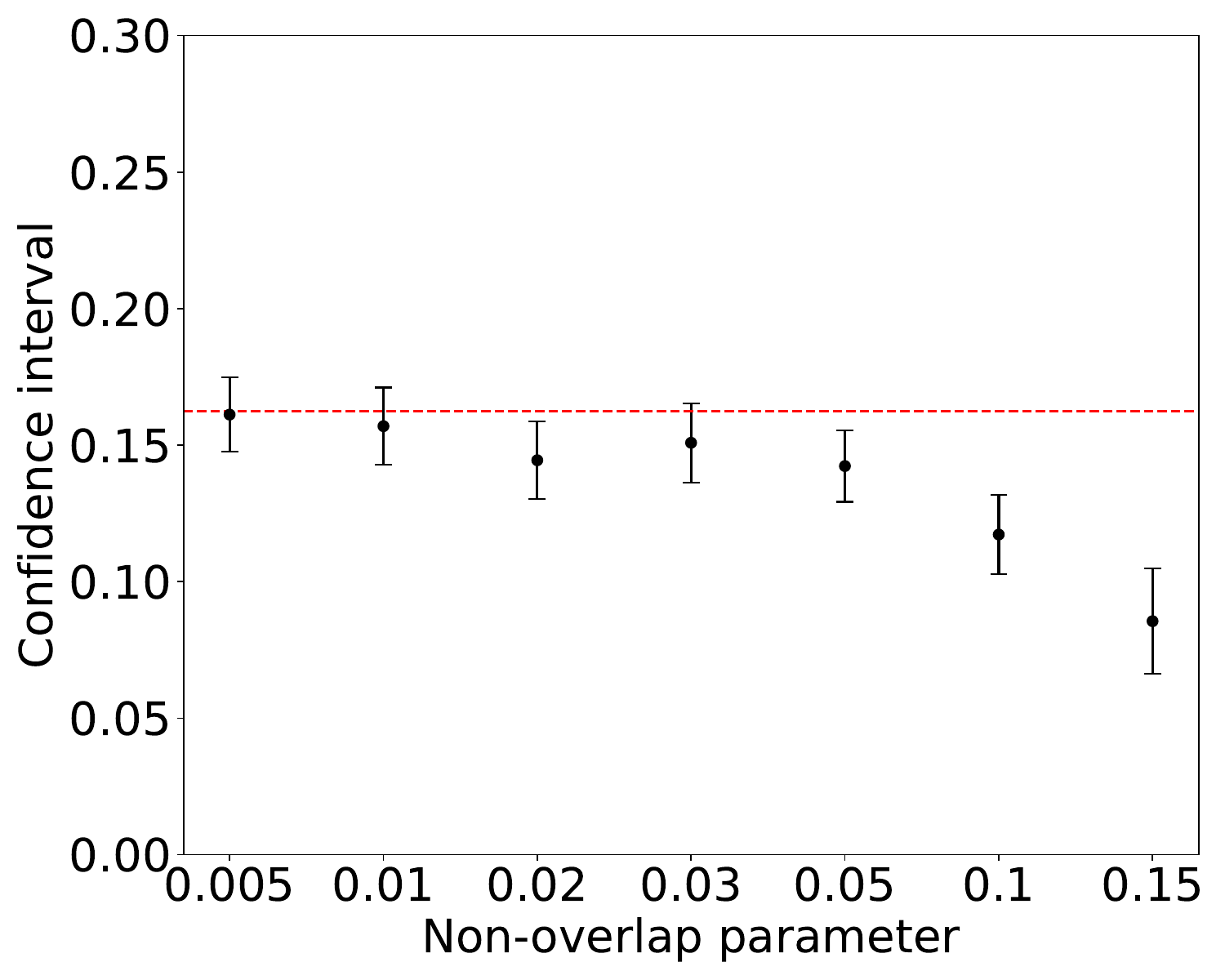}
  \end{minipage}
  % \hfill
  \caption{Confidence intervals from $\AIPW$ (left) and its trimmed variant
    $\AIPWP$ (right) across different overlap levels, with the dotted red line
    representing the true estimand value; higher values on the $x$-axis mean
    more limited overlap.  \textbf{Left}: $\AIPW$ yields very wide confidence
    intervals.
    \textbf{Right}: We follow standard heuristics to truncate data
    in a way such that $\AIPWP$'s confidence interval has the smallest
    length. }
  \label{fig:simulation-AIPW-viz}
\end{figure}

To assess the sensitivity of findings under limited overlap, we propose an
inferential method that ensures always-valid uncertainty quantification under
explicit assumptions on the smoothness of the outcome function.  Unlike
asymptotic methods, our approach allows us to quantify instance-specific
uncertainty that accurately scales with the level of overlap between the
treated and control groups.  Consider binary treatments $z_i \in \set{0,1}$,
covariates $x_i \in \mc{X}$ and potential outcomes $f(x_i,z_i)$ that generate
realized outcome data $y_i = f(x_i,z_i) + \epsilon_i$ for $i = 1,\ldots,n$,
where $\epsilon_i$ is the noise.

To analyze the bias introduced by truncating based on propensity scores, we
allow the modeler to specify a threshold for the \emph{overlap} region (e.g.,
based on a fitted propensity score), and use $S_i$ to denote the indicator for
whether unit $i$ belongs in this region. We separate the average treatment
effect into two estimands
\begin{subequations}
\begin{align}
\tau_+ & \defeq \sum_{i=1}^n \I{S_i = 1} \frac{1}{n} (f(x_i,1) - f(x_i,0)),
\label{eqn:tau-overlap} \\
\tau_- & \defeq \sum_{i=1}^n \I{S_i = 0} \frac{1}{n} (f(x_i,1) - f(x_i,0)),
\label{eqn:tau-non-overlap}
\end{align}
\end{subequations}
and assume that standard inferential tools are appropriate for estimating
$\tau_+$.

Since asymptotic normal approximations
are invalid in regions with limited overlap, we
use a worst-case approach to bound $\tau_-$, where we consider all outcome
functions that satisfy a smoothness condition.  We leverage~\citet{Donoho94}'s
minimax framework to find the tightest possible confidence interval accounting
for the worst-case bias across all admissible outcome functions. Intuitively,
this minimax method finds an optimal affine estimator while adding a bias
correction term to ensure coverage of the resulting confidence interval.

\begin{figure}[h] 
  \centering \includegraphics[width=0.7\textwidth, height=8cm]{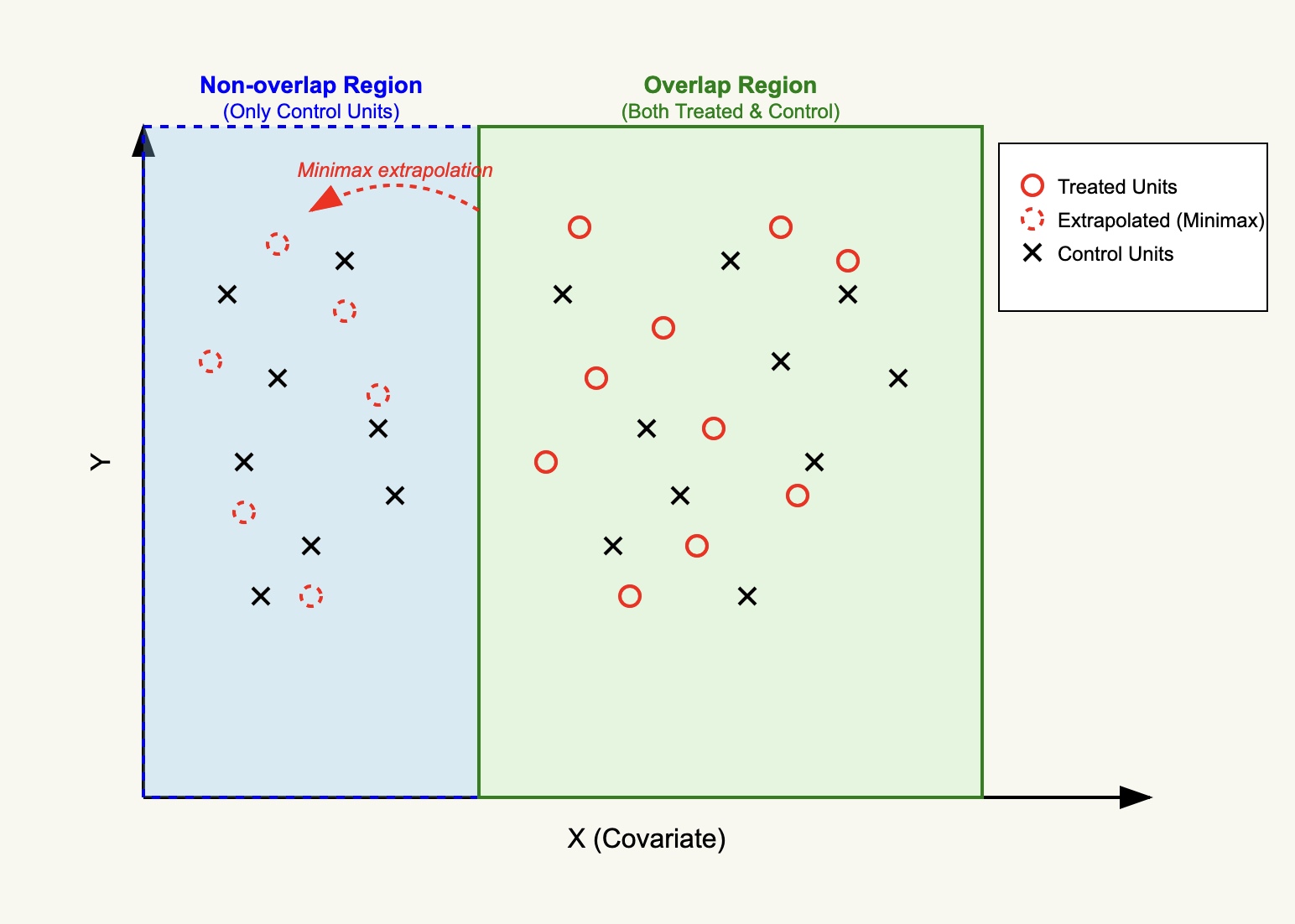}
  \caption{
    Visualization of our method.
    In the overlap region, we use 
    typical asymptotic confidence intervals.
    In the non-overlap region, we use the minimax approach to extrapolate from the overlap region.
    Our method allows the analyst to analyze the potential bias caused by ignoring samples with extreme propensity scores
    and see how 
    this depends on the
    extrapolability
    of data from the non-overlap region to the overlap region.
  }
  \label{fig:viz-our-method}
\end{figure}

~\citet{ArmstrongKo21} consider the set of Lipschitz outcome functions with
smoothness constant $L$ and apply~\citet{Donoho94}'s
minimax inferential
framework to \emph{both} $\tau_+$ and $\tau_-$, leading to overly conservative
confidence intervals.  Instead, we only apply the worst-case confidence region
over $\tau_-$.
Our approach (depicted in Figure~\ref{fig:viz-our-method}) separates the
estimand to make full use of the standard normal approximation, and only
resorts to conservative worst-case bounds when   traditional asymptotic
assumptions   break down.
This decomposition allows practitioners to explicitly quantify the potential
bias introduced by standard trimming or reweighting practices and understand
when their conclusions may be driven by extrapolation beyond well-supported
regions.

Instead of requiring the modeler to commit to a particular level
of smoothness that is fundamentally untenable,
we take a sensitivity
analysis perspective where we consider increasing values of $L$ until the
confidence bounds cross a certain threshold, % (e.g., 0), 
and then contextualize
this level of smoothness by using data from regions of overlap. 
This allows us
to translate abstract smoothness assumptions into interpretable diagnostics
and help analysts validate assumptions necessary for reliable inference when
overlap is limited.  This form of sensitivity analysis provides a continuous
view on model sensitivity, rather than a binary accept/reject judgment.
Ultimately, this enables more informed decisions about whether to trust,
adjust, or reinterpret conclusions drawn from trimmed versions of asymptotic
methods.

We begin by reviewing~\citet{Donoho94}'s minimax inferential framework in
Section~\ref{section:approach}, then introduce our sensitivity analysis
approach in Section~\ref{sec:proposed-approach}. We provide some analytical
insights in Section~\ref{section:analytic} and demonstrate our framework using
both simulated and real datasets in Section~\ref{section:experiment}. Our
results illustrate how our sensitivity framework provides a simple diagnostic
against asymptotic methods that silently fail. When compared to the naive
minimax method~\cite{ArmstrongKo21}, our approach yields significantly shorter
intervals with comparable coverage properties. 
Finally, to inspire future work, 
we discuss potential extensions of our framework to adaptive data collection
settings in Section~\ref{sec:continual-sampling-framework}.

%%% Local Variables:
%%% mode: latex
%%% TeX-master: "main"
%%% End:

\paragraph{Related work}

There is a substantial body of work studying observational analysis under
limited overlap. These methods aim to rigorously characterize the statistical
uncertainty associated with estimators such as IPW, AIPW, and matching-based
methods, often under relaxed or irregular identification conditions.
\citet{KhanTa10} and~\citet{Rothe17} show that in the presence of limited or
vanishing overlap, the rate at which estimators converge to their true values
can be significantly slower than the parametric $\sqrt{n}$ rate, complicating
standard inference. \citet{YangDi18} and~\citet{HeilerKa21} extend this line
of work by proposing robust inferential procedures that adjust for the heavy
tails and large variances caused by extreme weights, which are especially
problematic in regions of limited overlap.

Similarly,~\citet{HongLeLi19} and~\citet{MaWa20} highlight how lack of overlap
can lead to non-Gaussian asymptotic distributions, and they develop corrected
inference procedures that better account for these irregularities.  For
instance,~\citet{MaWa20} explicitly demonstrate the failure of conventional
confidence intervals when the IPW estimator’s asymptotic distribution departs
from normality due to limited overlap.  More recently, using a bias correction
approach,~\citet{MaSaSaUr23} propose doubly robust estimators that achieve
valid asymptotic inference under weaker conditions, including near-zero
overlap and high-dimensional covariates.  On the other hand, our framework
requires domain knowledge on the data-generating function (i.e., specifying
the function class $\mc{F}$ in a parametric form), but it provides
non-asymptotic guarantees grounded in the minimax estimation literature,
allowing for near-optimal finite-sample inference with explicit control over
bias and variance trade-offs within a structured function class.  This
approach enables reliable inference even in challenging regions of the
covariate space, without resorting to overly conservative bounds.

Another related line of work by~\citet{KhanSaUg23} addresses the challenge of
extrapolating treatment effects from the region of covariate overlap to the
non-overlap region.  Their method, which provides a point estimate of the
treatment effect, builds on the partial identification
literature~\citep{Manski90, ImbensMa04, Stoye09, Cui21, KallusZh21}, aiming to
conservatively bound treatment effects in the non-overlap region.  However,
their approach depends critically on the accuracy of the fitted outcome model
$\hat{\mu}$ in the overlap region, which is then extrapolated to the
non-overlap region.  This reliance means their guarantees can degrade when
model fit is imperfect, especially in finite samples.

Our framework is closely related to the literature on minimax estimators,
particularly the foundational work by~\citet{Donoho94} that introduces affine
estimators that achieve near-optimal worst-case performance over specified
function classes.  Building on this,~\citet{ArmstrongKo18, ArmstrongKo21}
apply the minimax approach to causal estimands, particularly without
assuming strong overlap.  Their approach yields estimators and confidence
intervals with optimal worst-case performance, even when standard overlap
assumptions fail.   
However, a key limitation of these methods is that the corresponding
confidence intervals tend to be overly conservative, particularly in finite
samples, due to their focus on worst-case performance across broad function
classes.

In contrast, our method maintains the spirit of minimax optimality, but is
more directly aimed at practical inference in regions of non-overlap, where
conventional estimators perform poorly.  As we detail in
Section~\ref{section:approach}, our framework avoids the excess
conservativeness of prior minimax-based intervals by applying the minimax
approach only to analyze the non-overlap region.

% %%% Local Variables:
%%% mode: latex
%%% TeX-master: "main"
%%% End:

\section{Finite-sample minimax inference on regions of non-overlap}
\label{section:approach}

In this work, we assume that the observed units comprise the entire population
of interest.  This perspective is well-established in the causal inference
literature~\citep{Imbens04, AbadieIm11}, and implies that the sample size $n$
is fixed rather than drawn from a superpopulation.  This modeling choice is
particularly reasonable in many applied policy and healthcare settings, where
the goal is to estimate causal effects for a specific, finite group---such as
all patients in a registry, participants in a program, or residents of a
region.  In these scenarios, inference does not concern some hypothetical
infinite population, but rather the actual individuals in the dataset. By
treating the population as fixed, we avoid assumptions about the
data-generating process beyond what is necessary for causal identification,
and we can directly quantify uncertainty due to lack of overlap or sparsity in
observed units.  This approach also aligns well with our focus on evaluating
uncertainty in finite samples and identifying regions with poor support, where
asymptotic approximations may fail.  We would also like to point out that
though many estimators in the literature focus on the population-level
treatment effects, they can be used to estimate the sample average treatment
effect~\citep{Imbens04}.

Formally, we consider the standard potential outcome framework where the observed tuples 
$\set{x_i,y_i,z_i}_{i=1}^n$ consist of 
treatment $z_i \in \set{0,1}$ alongside covariates $x_i$.
Our goal is to estimate the sample average treatment effect  (ATE)
\begin{align}
\tau \defeq \frac{1}{n} \sum_{i=1}^n \E[Y_i(1) - Y_i(0)|X_i = x_i].
\label{eqn:ATE}
\end{align} 
under the assumption of unconfoundedness
$(Y_i(1), Y_i(0)) \perp Z_i \mid X_i$.  This is also sometimes called
conditional average treatment effect~\citep{Imbens04, AbadieIm11}.  More
generally, for known weights $w_i \ge 0$, we are interested in the weighted
average treatment effect
\begin{align}
  \tau_{\bm{w}}(f) \defeq  \sum_{i=1}^n w_i \tau(f, x_i)
  ~~\mbox{where}~~\tau(f, x) \defeq f(x,1) - f(x,0).
\label{eqn:tau-weighted-ate-def}
\end{align}
Since $f$ is unknown, we omit its explicit dependence in $\tau_{\bm{w}}(f)$
and $\tau(f, x_i)$.

To obtain reliable estimates of the ATE, the strong overlap
assumption~\citep{RosenbaumRu83b,Imbens04} states that the propensity score
$\pi(x) \defeq \P(z = 1 | x = x)$ is bounded away from 0 and 1:
\begin{align*}
\varepsilon < \pi(x_i) < 1 - \varepsilon \text{  for some } \varepsilon > 0 \text{ for all } i.
\end{align*}  
 When facing limited overlap,
researchers truncate/winsorize the fitted propensity score at some level that
makes the resulting estimator stable. This implicitly changes the estimand to
units with sufficient overlap.  As we observed in
Example~\ref{example:simulation}, without truncation, the confidence intervals
typically blow up.  On the other hand, as we show in
Section~\ref{sec:experiment-sim}, truncation could lead to substantial
undercoverage under limited overlap.

Throughout, we assume the propensity score $\pi(x)$ is known and show that
standard asymptotic estimators can still fail under this idealized setting,
highlighting the inherent challenges posed by limited overlap.  As we will
see, our framework requires minimal knowledge of the propensity score and thus
can be easily generalized to the setting with unknown propensity scores.  In
addition, for most parts of the paper, we assume the analyst follows a
particular trimming procedure that divides the data into non-overlap and
overlap regions.

\subsection{Background}

Before we introduce our approach, we first provide background on an
inferential framework built on the theory of minimax estimation for linear
functions~\citep{Donoho94, CaiLo04, ArmstrongKo21}.  Instead of relying on
central limit theorems that assume an infinite stream of data generated by a
fixed data-generating distribution, this framework generates a worst-case
confidence interval over all data-generating distributions that could have
generated the sample the researcher has access to.

We assume there is an unknown data-generating function $f$ with
\begin{align*}
    y_i = f(x_i, z_i) + \epsilon_i, 
\end{align*} 
where $\epsilon_i \sim N(0, \sigma^{2}(x_{i}, z_{i}))$ is the  independent
Gaussian noise  with known variance.  Since we are not in the asymptotic
regime, we impose an explicit distributional assumption on the noise.  This
framework can be generalized to other distributional assumptions as discussed
in~\citet{JuditskyNe09}.  As discussed in Section~\ref{section:experiment}, we
can estimate $\sigma$ from data and the framework remains applicable.

The unknown function $f$ is assumed to be within a known function class
$\mc{F}$, which is prespecified and reflects the researcher's prior knowledge
on all possible functions that could have generated the observed data.  We
focus on generating confidence intervals $\mc{C}$ with guaranteed coverage of
the desired estimand $\tau_{\bm{w}}(f)$ when the true generating function
satisfies $f \in \mc{F}$:
\begin{align}
  \inf_{f \in \mc{F}}\P_f (\tau_{\bm{w}}(f) \in \mc{C}) \ge 1-\alpha.\label{eqn:coverage-def}
\end{align} 
We usually assume $\alpha$ is a fixed quantity, e.g., $\alpha = 0.05$.  
Recalling
that we assume the population and sample size $n$ are fixed, we omit the
dependence of the estimand on $n$ in the paper.  We focus on centrosymmetric
sets ($f \in \mc{F}$ implies $-f \in \mc{F}$) as it simplifies the expression
of the minimax estimator to be introduced (see, e.g.~\cite{ArmstrongKo21}).

After fixing the function class $\mc{F}$ and data $\mc{D}$, the minimax
confidence interval of $\tau_{\bm{w}}$~\eqref{eqn:tau-weighted-ate-def} is
governed by the following quantity that is known as the \textit{modulus of
  continuity}~\citep{Donoho94}:
\begin{align}
\omega_{\mc{F};\mc{D}}(\delta; \bm{w}) \defeq \sup_{f \in\mathcal{F}}\left\{
\sum_{i=1}^n 2 w_i \paran{f(x_i,1)-f(x_i,0)}
:
\sum_{i=1}^{n}\frac{f^2(x_i, z_i)}{\sigma^{2}(x_i, z_i)}\leq
\frac{\delta^{2}}{4}\right\}.  \label{eqn:def-omega}
\end{align}
The parameter $\delta$ controls the weights as we visualize through an example
in the next subsection. We discuss how to compute $\omega_{\mc{F};\mc{D}}$ in
Appendix~\ref{sec:algo}.

Letting the optimal solution of~\eqref{eqn:def-omega} be
$f\opt_\delta(x_i,z_i)$, one then computes $\omega(\delta; \bm{w})$ for every
$\delta > 0$, and each $\delta$ yields a corresponding minimax
estimator~\citep{ArmstrongKo18,ArmstrongKo21} $\hat{\tau}_{\delta}(\bm{w})$ given by
\begin{align}
  \hat{\tau}_{\delta}(\bm{w})  = \sum_{i=1}^n     
  \frac{2\omega'(\delta)}{\delta} \frac{f\opt_\delta(x_i,z_i)}{\sigma^2(x_i,z_i)}
  y_i
  = \paran{\sum_{i=1}^n w_i}
  \sum_{i=1}^n \frac{f\opt_\delta(x_i,z_i)/{\sigma^2(x_i,z_i)}}
  {\sum_{j=1}^n z_j f\opt_\delta(x_j,z_j)/{\sigma^2(x_j,z_j)}} y_i,
\label{eqn:minimax-w-interlval-expression}
\end{align} 
where $f\opt$ is  an 
 optimal solution of~\eqref{eqn:def-omega}.
 In addition,
 \begin{align}
 \mc{C}_\delta(\bm{w}) = 
    \hat{\tau}_{\delta}(\bm{w})  \pm  \cv_\alpha\paran{\frac{
\maxbias(\hat{\tau}_{\delta}(\bm{w}))
}{
\sd(\hat{\tau}_\delta(\bm{w})) 
}} 
\sd(\hat{\tau}_\delta(\bm{w})) 
\label{eqn:CI-expression}
 \end{align}
 is a valid confidence interval that applies a bias correction procedure 
 on top of $\hat{\tau}_{\delta}(\bm{w})$,
where  $\cv_\alpha(b)$ is the $1-\alpha$ 
quantile of a $|N(b,1)|$ 
random variable.
 In~\eqref{eqn:CI-expression},  
\begin{align}
\maxbias(\hat{\tau}_{\delta}(\bm{w}))
\defeq \sup_{f \in \mc{F}} \E_f \Big[\hat{\tau}_{\delta}(\bm{w})  - \tau_{\bm{w}}(f)
\Big]
\label{eqn:max-bias}
\end{align}
is the maximum possible 
bias of $\hl_{\delta}(\bm{w})$, 
which can be shown~\citep{ArmstrongKo21} 
to be equal to
\begin{align}
\maxbias(\hat{\tau}_{\delta}(\bm{w}))
=  \half (\omega(\delta; \bm{w})  - \delta \omega'(\delta; \bm{w})).
\label{eqn:max-bias-equiv-formula}
\end{align}
In addition, the standard error of the estimator can be computed by
\begin{align}
\sd(\hat{\tau}_\delta(\bm{w})) = \omega'(\delta;\bm{w}).
\label{eqn:std-formula}
\end{align}

After obtaining~\eqref{eqn:CI-expression} for every $\delta$,
we construct
the minimax
CI   using  $\DFLCI$,  where $\DFLCI$
is chosen by solving the following  
problem to 
minimize the CI length balancing bias and variance  
of $\hat{\tau}_{\delta}(\bm{w})$:
\begin{align}
\DFLCI(\bm{w}) \in \argmin_{\delta > 0} \set{\cv_\alpha\paran{\frac{
\maxbias(\hat{\tau}_{\delta}(\bm{w}))
}{\sd(\hat{\tau}_\delta(\bm{w}))}} 
\sd(\hat{\tau}_\delta(\bm{w})) 
},
\label{eqn:opt-prob-delta}
\end{align}
which yields the minimax CI, denoted as $\CFLCI(\bm{w})$.
This construction determines the shortest possible length for fixed-length
confidence procedures under a given function class, and the associated
modulus-based confidence interval attains this minimax criterion. Importantly,
the optimal modulus procedure admits an implementation that is affine in the
outcomes (an affine estimator together with a calibrated bias
correction).
As~\citet[Theorem A.3]{ArmstrongKo21} show, this confidence
interval is minimax-optimal among all fixed-length procedures.

In this work, we follow~\citet{ArmstrongKo21} and focus on Lipschitz outcome
functions
\begin{align}
  \mc{F}_{L, \norm{\cdot}} \defeq \set{f: |f(x,d) - f(\Tilde{x},d)| \le L \norm{x-\Tilde{x}}, \forall x,\Tilde{x} \in \mc{X}, d \in \set{0,1}}.  \label{eqn:def-F-L-Lip} 
\end{align}
It is well known that we can replace the constraint $\mc{F}$ with its
finite-sample counterpart and only impose Lipschitz constraints on the
observed $n$ data points~\cite{Beliakov06}, which we denote as
\begin{align}
  \mc{F}_{L, \norm{\cdot},n} \defeq \set{f: |f(x_i,d) - f(x_j,d)| \le L \norm{x_i - x_j}, \forall i,j \in [n], d \in \set{0,1}}.  \label{eqn:def-F-L-Lip-finite-sample} 
\end{align}
Henceforth, we focus on the $\ell_2$ norm
and
use $\mc{F}_{L}$ to represent $\mc{F}_{L, \ell_2,n}$.
 
\subsection{Proposed approach}

\newcommand{\wip}{\red{work in progress}}

Applying the finite-sample minimax approach over the entire
dataset~\cite{ArmstrongKo21} can be pessimistic, as it focuses on the
worst-case function within a given function class.  For the Lipschitz function
class, Theorem 2.3 of~\citet{ArmstrongKo21} demonstrates that the minimax
estimator relies only on information from nearby points---especially when the
Lipschitz constant $L$ is large---for counterfactual imputation.  This occurs
even when distant points, whose outcomes are observed and potentially
informative, are available.  As a result, the minimax method may fail to fully
utilize all available information, leading to practically inefficient
inference, as we demonstrate in the next section. Another notable feature of
the minimax approach is that the resulting confidence interval length does not
depend on the observed outcomes $y_1,\ldots,y_n$.  This further highlights its
conservative nature, particularly in regions where outcome data alone are
insufficient to justify narrower intervals.

This conservatism is well-suited for analyzing regions of
\textit{non-overlap}, where no comparable treated or control units are
observed and traditional estimation methods become unreliable. In such
settings, the inherently cautious nature of minimax inference provides an
honest reflection of the uncertainty stemming from a lack of overlap.

We thus study minimax confidence intervals $\CFLCI(\bm{w})$ on regions of
non-overlap (abbreviated as $\MW$).
Intuitively, $\MW$ can be used to bound the bias of any estimator targeting
\begin{align*}
    \tau_{\bar{\bm{w}}} \defeq \sum_{i=1}^n \left( \frac{1}{n} - w_i \right) \tau(x_i),
\end{align*}
i.e., the complement of $\tau_{\bm{w}}$ relative to the full-sample ATE.  In
addition to reweighted estimands proposed by~\citet{CrumpHoImMi06}, the family
of estimators $\hat{\tau}_{\bm{w}}$ includes standard truncated estimators
such as $\AIPWP$ as discussed before.  By ignoring regions with poor overlap,
these estimators aim to estimate $\tau_{\bar{\bm{w}}}$---the ATE over the
region with overlap--with greater accuracy, at the cost of potentially larger
bias $\tau_{\bar{\bm{w}}} - \tau$ even when these estimators are unbiased.  Our
framework, $\MW$, provides a principled framework to analyze this bias.
 
Though our framework is general, we focus on using it as an auxiliary tool for
typical asymptotic estimators such as $\AIPWP$.  Our proposed method, minimax
partial ($\MP_\epsilon$), is parameterized by the truncation threshold
$\epsilon \in (0,\half)$ on the propensity score chosen by the practitioner
when they truncate the data in $\AIPWP$, and only applies the minimax approach
on the non-overlap region.  Throughout, we define
\begin{align}
    q(x) = \min\set{\pi(x),1-\pi(x)} \label{eqn:q-def}
\end{align}
as a measure of overlap for each point $x$.  Given a fixed $\epsilon$, we
decompose the ATE $\tau$ as follows:
\begin{align}
\tau = \P(q(X) \ge \epsilon) \E[ \tau(X) | q(X) \ge \epsilon] + \P(q(X) < \epsilon) \E[ \tau(X) | q(X) < \epsilon] \defeq  \tau_{+}  + \tau_{-},
    \label{eqn:ATE-decomposition}
\end{align} 
where we assume $X \sim \emp$, the empirical distribution formed by data
$\set{x_i}_{i \in [n]}$.  Based on the above expression, we define
$\MP_\epsilon$ as
\begin{align}
  \MP_\epsilon \defeq  \M_{\bm{w}_\epsilon},
\label{eqn:m-partial-def}
\end{align}   
where $(\bm{w}_\epsilon)_i = \frac{1}{n} \I{q(x_i) < \epsilon}.$

Sometimes we also write $\MP_{\epsilon,L}$ to emphasize that we assume the
function class is $\mc{F}_L$~\eqref{eqn:def-F-L-Lip}.  We omit the dependence
on $\epsilon$ and simply write $\MP$ when the dataset is fixed, assuming that
the truncation threshold $\epsilon$ is selected using a standard procedure.
Using the interval $\MP_\epsilon$, we can construct a metric to evaluate the
underlying uncertainty in the non-overlap region.  For example, letting
$[\MP_{\epsilon}^-,\MP_{\epsilon}^+]$ be the interval $\MP_\epsilon$, we
consider the following metric as a way to estimate $\tau_-$
\begin{align}
  T(\MP_\epsilon)
  \defeq
  \max \set {|\MP_{\epsilon}^-| , |\MP_{\epsilon}^+| },
  \label{eqn:our-metric}
\end{align}
which is the largest absolute value of the interval end points generated by
$\MP_\epsilon$.  The larger this value is, the more bias $\AIPWP$ can lead to.
Similarly, we can use $T(\MP_\epsilon) = |\MP_\epsilon|$ to estimate the 
uncertainty of the bias estimate.

%%% Local Variables:
%%% mode: latex
%%% Tex-master: "main"
%%% End:

\begin{figure}[t] 
\centering \includegraphics[width=0.7\textwidth, height=8cm]{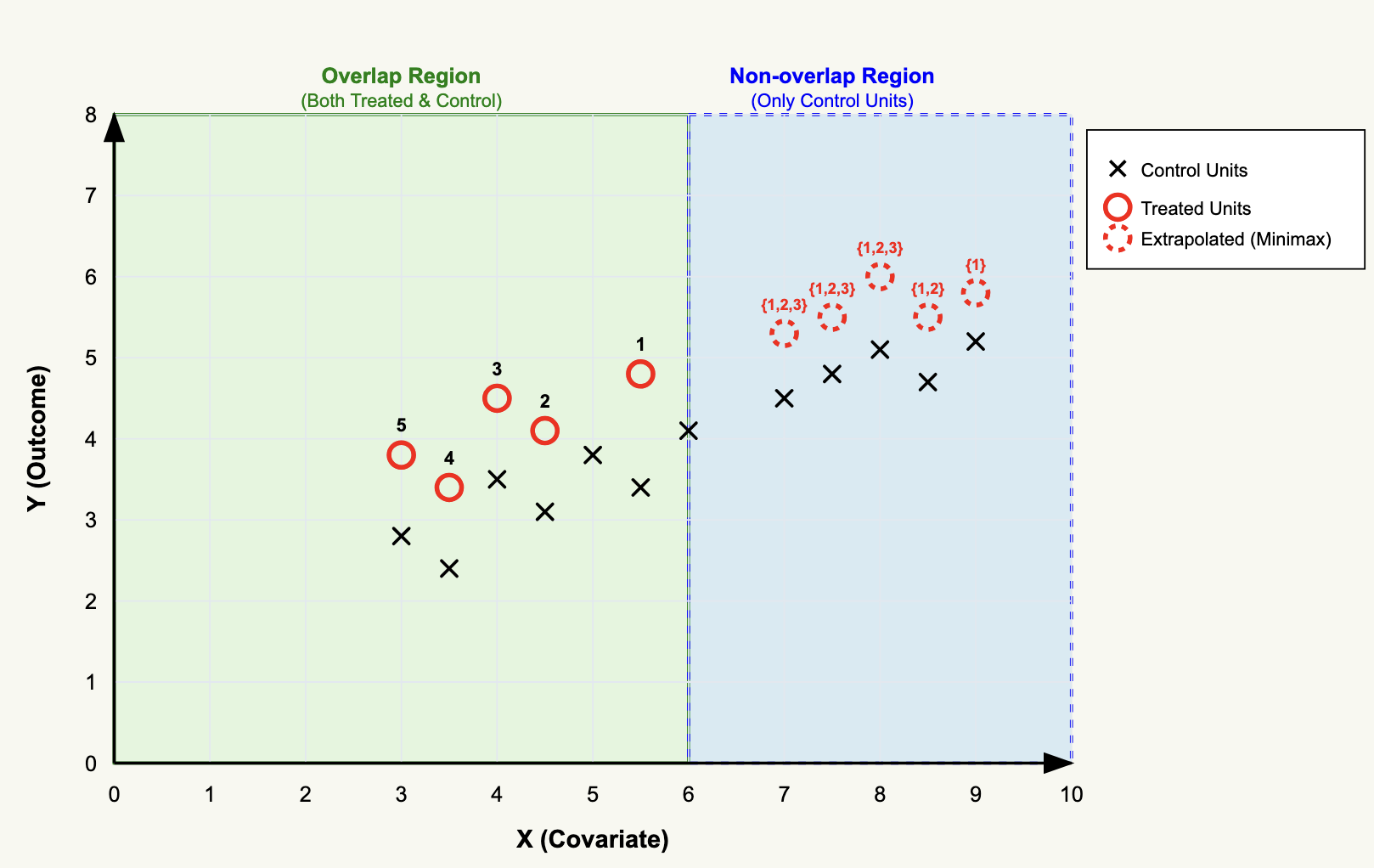}
\caption{  For each point in
  the non-overlap region, we list the set of treated points from the overlap
  region used in its extrapolation.  For example, the leftmost point $i$ in
  the non-overlap region uses points 1, 2, and 3 for extrapolation.  This
  means that point pairs $(i,1), (i,2), (i,3)$ have binding Lipschitz
  constraints for the program that defines the minimax estimator
  $\LFLCI(\bm{w})$~\eqref{eqn:minimax-w-interlval-expression}.  }
\label{fig:viz-counterfactual}
\end{figure}

\subsection{Analytic insights}
\label{section:analytic}

Since the derivation of the minimax estimator is involved, we provide some
analytical intuition.

\paragraph{Matching interpretation} We start by interpreting the minimax
estimator as a nearest-neighbor estimator.
\begin{lemma}
  Let $\mu \ge 0$ and $\Lambda \ge 0$ be the optimal dual variables corresponding
  to the Lipschitz constraints~\eqref{eqn:def-F-L-Lip-finite-sample}.  The
  minimax estimator imputes counterfactual values as
  \begin{align}
    \label{eqn:impute-conterfactual-more-specific}
    w_k \hat{f}(x_k,1 -z_k) = \sum_{j: z_j = 1- z_k} W_{jk} y_j
    ~~~~\mbox{where}~~~~W_{jk} = 
    \begin{cases}
      \frac{\Lambda_{jk}^1}{\mu}, & \text{if } z_k = 0 \\ 
      \frac{\Lambda_{kj}^0}{\mu}, & \text{if } z_k = 1.
    \end{cases}
  \end{align} 
\label{lemma:minimax-as-matching}
\end{lemma}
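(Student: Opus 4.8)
The plan is to read the matching weights straight off the Karush--Kuhn--Tucker (KKT) conditions of the modulus program~\eqref{eqn:def-omega} and then substitute the optimizer $f\opt_\delta$ into the estimator formula~\eqref{eqn:minimax-w-interlval-expression}, collecting the coefficient of each $y_j$ into a ``direct'' piece and an ``imputation'' piece. The nearest-neighbor structure will emerge because the counterfactual coordinates of $f\opt_\delta$ enter only the objective and the Lipschitz constraints, so complementary slackness pins them to their active opposite-treatment neighbors.

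First I would reduce~\eqref{eqn:def-omega} to a finite-dimensional convex program. On the sample, $\mc{F}_L$ constrains only the $2n$ numbers $f(x_i,d)$, $i\in[n]$, $d\in\set{0,1}$, so the modulus is the supremum of the linear objective $\sum_i 2w_i(f(x_i,1)-f(x_i,0))$ over these variables, subject to the single quadratic budget $\sum_i f(x_i,z_i)^2/\sigma^2(x_i,z_i)\le \delta^2/4$ and the directed linear Lipschitz inequalities $f(x_i,d)-f(x_j,d)\le L\|x_i-x_j\|$. Centrosymmetry of $\mc{F}_L$ lets us pick a maximizer with $\tau_{\bm{w}}(f\opt_\delta)\ge 0$, so the program is a linearly and quadratically constrained concave maximization, Slater holds (constants are feasible, $\delta>0$), and strong duality applies. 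I would attach a multiplier $\mu\ge 0$ to the budget (with the standard rescaling $2\omega'(\delta)/\delta$ from~\eqref{eqn:minimax-w-interlval-expression} absorbed into it) and $\Lambda^d_{ij}\ge 0$ to each directed Lipschitz inequality, and then write stationarity coordinate by coordinate.

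The key point is that the $2n$ coordinates play two roles. For an \emph{observed} coordinate $f(x_i,z_i)$ --- which enters objective, budget, and Lipschitz constraints --- stationarity reads
\[
\pm 2w_i \;=\; 2\mu\,\frac{f\opt_\delta(x_i,z_i)}{\sigma^2(x_i,z_i)} \;+\; \sum_j\bigl(\Lambda^{z_i}_{ij}-\Lambda^{z_i}_{ji}\bigr),
\]
which I would solve for $f\opt_\delta(x_i,z_i)/\sigma^2(x_i,z_i)$. For a \emph{counterfactual} coordinate $f(x_i,1-z_i)$ --- absent from the budget --- stationarity gives the flow-balance identity
\[
\sum_j\bigl(\Lambda^{1-z_i}_{ij}-\Lambda^{1-z_i}_{ji}\bigr) \;=\; \pm 2w_i,
\]
and complementary slackness forces $f\opt_\delta(x_i,1-z_i)$ to lie exactly on its binding Lipschitz inequalities, i.e.\ to be pinned by the opposite-treatment units whose constraints are active --- precisely the picture of Figure~\ref{fig:viz-counterfactual}; the standard Lipschitz-extension argument shows these active neighbors can be taken to be opposite-treatment, so the dual mass on same-layer same-group pairs can be taken to vanish. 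Substituting the expression for $f\opt_\delta(x_i,z_i)/\sigma^2(x_i,z_i)$ into~\eqref{eqn:minimax-w-interlval-expression} and collecting the coefficient of a fixed $y_j$, the term $\pm w_j$ is the direct contribution ($\hat f(x_j,z_j)=y_j$ under the usual matching normalization), and the residual dual terms are exactly $\tfrac1\mu\sum_{k:z_k=1-z_j}\Lambda^{(\cdot)}_{(\cdot)}$; comparing with $\hat\tau_\delta(\bm{w})=\sum_i w_i(\hat f(x_i,1)-\hat f(x_i,0))$ identifies $w_k\hat f(x_k,1-z_k)=\sum_{j:z_j=1-z_k}W_{jk}y_j$ with $W_{jk}$ as in~\eqref{eqn:impute-conterfactual-more-specific}, and $W_{jk}\ge 0$ follows from $\mu,\Lambda\ge 0$.

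The step I expect to be the main obstacle is the bookkeeping in that last move: keeping the two layers $d\in\set{0,1}$ and the \emph{directed} pairs $(i,j)$ straight, and using the flow-balance identities of the counterfactual coordinates to show that the same-layer dual terms telescope so that only the opposite-group weights survive, with exactly the subscript order and signs in~\eqref{eqn:impute-conterfactual-more-specific}. A secondary subtlety is pinning the scalar relating the budget multiplier to the $\mu$ in the statement --- i.e.\ verifying that the normalization $2\omega'(\delta)/\delta$ is exactly what turns the residual coefficients into $\Lambda/\mu$ --- and checking that the convention $\hat f(x_i,z_i)=y_i$ used to read off the imputation rule is the canonical one forced by~\eqref{eqn:minimax-w-interlval-expression} rather than an arbitrary allocation; the latter follows because the observed-coordinate stationarity displayed above is literally the identity ``$c_j-(\pm w_j)=\sum_k W_{jk}$'' for the estimator coefficient $c_j$ of $y_j$.
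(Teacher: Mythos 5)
Your proposal is correct and follows essentially the same route as the paper: both read the matching weights off the stationarity and complementary-slackness conditions of the (cross-group) Lipschitz-constrained modulus program --- the observed-coordinate condition $m_i/\sigma^2 = \mu w_i + \sum_j \Lambda_{ij}$ and the counterfactual flow-balance $\sum_i \Lambda^1_{ij} = \mu w_j$ --- and then substitute into the affine estimator~\eqref{eqn:minimax-w-interlval-expression} to collect the coefficient of each $y_j$. The only difference is that the paper imports these first-order conditions directly from \citet{ArmstrongKo21} (their (S1)--(S2)) rather than re-deriving the KKT system, and the normalization issue you flag is resolved there by the identity $\sum_{i\in[n_1]} m_i/\sigma^2 = \mu\sum_i w_i$ obtained by summing the stationarity conditions.
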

\noindent See Appendix~\ref{sec:additional-details-minimax} for the proof.

\begin{figure}[t]
 \begin{minipage}[b]{0.49\textwidth}
\centering \includegraphics[width=\textwidth, height=6cm]{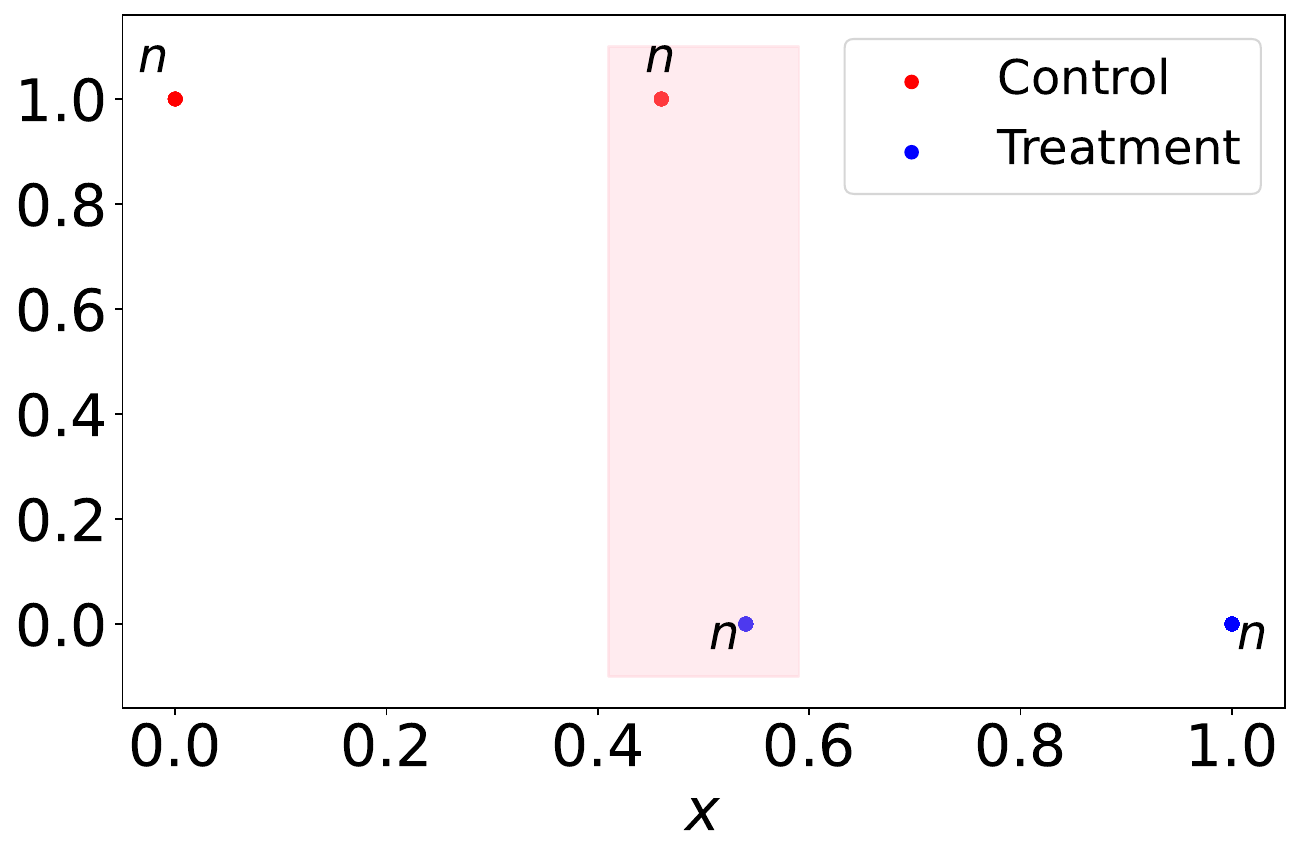}
 \end{minipage}
 \hfill
  \begin{minipage}[b]{0.49\textwidth}
 \centering \includegraphics[width=\textwidth, height=6cm]{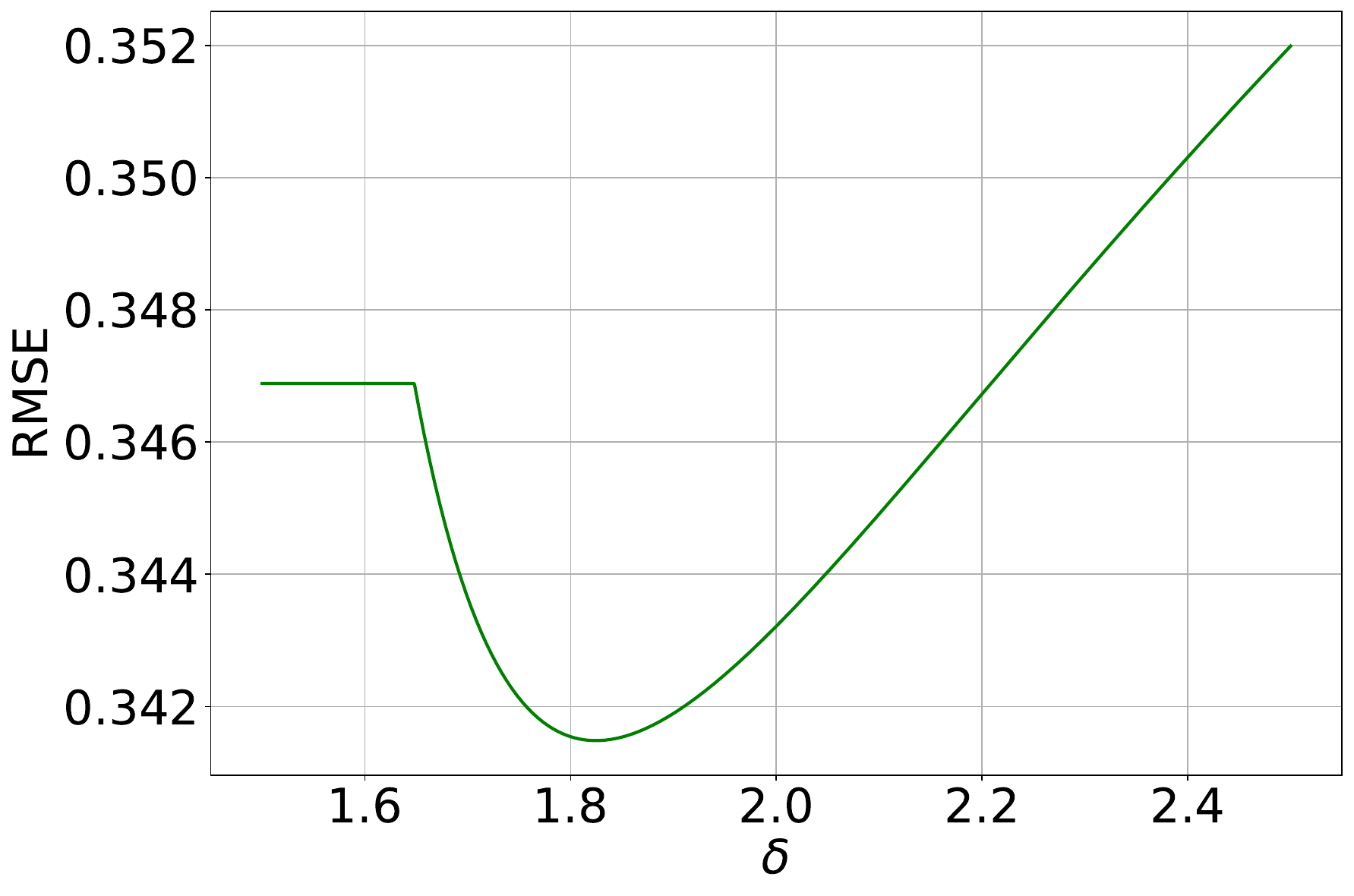}
 \end{minipage}
 \caption{\textbf{Left}: There are $2(k+1)n$ samples in total and the middle region
   in pink is the overlap region. See
   Appendix~\ref{sec:analytic-example-details} for details. \textbf{Right}:
   RMSE of the estimator $\hat{\tau}_{\delta}(\bm{w})$ vs $\delta$ with
   $n=25, k = 10, L = 1, \overrange = 0.1, \xi = 0.01$.  }
\label{fig:example-toy-analytic}
\end{figure}

To illustrate, consider a simple dataset where the data is divided into
overlap and non-overlap regions; in the non-overlap region, no samples receive
treatment and the minimax estimator relies on the treatment data from the
overlap region to perform extrapolation.  Figure~\ref{fig:viz-counterfactual}
shows how extrapolation is being conducted by $\MP$: matching weights
$W_{jk}$~\eqref{eqn:impute-conterfactual-more-specific} are computed only
using $(X,Z)$ data and nonzero only if the corresponding Lipschitz
constraints~\eqref{eqn:def-F-L-Lip-finite-sample} for point $(j,k)$ are
binding.

\paragraph{Extrapolation in regions of non-overlap}
In particular, consider a one-dimensional setting with two clusters of
covariates $x \in \R$ ($n$ points in total): one with positive weights and the other without. That
is, there exists $t \in \R$ such that for all $x_i \le t$, we have $w_i = 0$
and for all $x_i >t$, we have $w_i = \frac{1}{n}$.  We can prove that our
minimax procedure respects the geometry of the covariate space when
extrapolating: if it chooses to extrapolate using a unit $j$ farther from the
non-overlap region, then it must also be using all closer points.
Recall the modulus of
continuity on the non-overlap region (assuming $\sigma^2 = 1$ for simplicity)
  \begin{align}
    \begin{split} 
      \omega(\delta) =  \max_{f \in \mc{F}_L} 
      & ~2 \sum_{i=1}^n 
      w_i
      (f(x_i,1) - f(x_i,0))   \\
      \mathrm{s.t.}  & \quad \sum_{i=1}^n f(x_i,z_i)^2
      \le \frac{\delta^2}{4},
    \end{split} \label{eqn:toy-program}
                       ~~~~\mbox{where}~~~~  w_i = \frac{1}{n} \I{q_i < \epsilon}
  \end{align}
  and for all $i$ with $w_i = 0$, let
  $\overrange_i = \min_j \set{|x_i-x_j| \mid w_j > 0}$ denote its distance to the
  region with positive weights.  The following result shows that extrapolation
  starts from the boundary of the overlap region and proceeds inward.
\begin{lemma}
  Consider two units $i$ and $j$ with $w_i=w_j=0$ and $\overrange_i \ge \overrange_j$. Then for every
  $\delta > 0$, there exists an optimal solution $f\opt_\delta$ to the optimization
  problem~\eqref{eqn:toy-program} with
  $f\opt_\delta(x_j,1) \ge f\opt_\delta(x_i,1)$.
\label{lemma-extrapolate-start-from-near}
\end{lemma}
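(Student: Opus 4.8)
The plan is to construct an optimal solution of \eqref{eqn:toy-program} whose treated-outcome profile is monotone across the non-overlap cluster; the assertion then follows because $\overrange$ simply orders that cluster by distance to the overlap cluster. Write $A=\set{k:w_k=0}$ and $B=\set{k:w_k>0}$. By the two-cluster assumption, $x_k\le t$ for all $k\in A$ and $x_m>t$ for all $m\in B$, so for each $k\in A$ we have $\overrange_k=(\min_{m\in B}x_m)-x_k$; hence $\overrange_i\ge\overrange_j$ is equivalent to $x_i\le x_j$, and it suffices to produce an optimal solution $\tilde f$ of \eqref{eqn:toy-program} for which $x\mapsto\tilde f(x,1)$ is nondecreasing on $\set{x_k:k\in A}$.

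Fix any maximizer $f$ of \eqref{eqn:toy-program} and record two reductions. (a) \emph{Sign}: replacing $f(\cdot,1)$ pointwise by $\max\set{f(\cdot,1),0}$ keeps $f\in\mc{F}_L$ (a pointwise maximum of $L$-Lipschitz functions is $L$-Lipschitz), does not increase $\sum_i f(x_i,z_i)^2$, and does not decrease the objective $2\sum_i w_i(f(x_i,1)-f(x_i,0))$ (since $w_i\ge0$); so we may assume $f(x,1)\ge0$ at every observed covariate. (b) \emph{Minimal extension}: let $g(x)=\max_{m\in B}\set{f(x_m,1)-L|x-x_m|}$, the pointwise-smallest $L$-Lipschitz function on the observed covariates taking the value $f(x_m,1)$ at each $m\in B$, and define $\tilde f$ by $\tilde f(x_k,1)=\max\set{g(x_k),0}$ for $k\in A$, $\tilde f(x_k,1)=f(x_k,1)$ for $k\in B$, and $\tilde f(\cdot,0)=f(\cdot,0)$.

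Next I would check that $\tilde f$ is again an optimal solution of \eqref{eqn:toy-program}. The objective value is unchanged, since $w_k=0$ on $A$ while $\tilde f$ agrees with $f$ on $B$ and on the control coordinate. The budget constraint still holds: Lipschitzness of $f$ gives $g(x_k)\le f(x_k,1)$, so $0\le\tilde f(x_k,1)\le f(x_k,1)$ for $k\in A$ and every squared term weakly decreases. Finally $\tilde f\in\mc{F}_L$: within $A$, within $B$, and on the control coordinate this is immediate, and for a pair $k\in A$, $m\in B$ the lower bound $\tilde f(x_k,1)\ge f(x_m,1)-L|x_k-x_m|$ is built into the definition of $g$, while the upper bound $\tilde f(x_k,1)\le f(x_m,1)+L|x_k-x_m|$ follows by applying the triangle inequality inside the maximum defining $g$ (using Lipschitzness within $B$) together with the sign normalization $f(x_m,1)\ge0$.

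It remains to observe that $\tilde f(\cdot,1)$ is nondecreasing on $A$. For $x\le t<x_m$ the affine map $x\mapsto f(x_m,1)-L|x-x_m|=(f(x_m,1)-Lx_m)+Lx$ is nondecreasing, so its pointwise maximum $g$ is nondecreasing on $\set{x_k:k\in A}$, and hence so is $\max\set{g(\cdot),0}$. Thus $x_i\le x_j$ yields $\tilde f(x_j,1)\ge\tilde f(x_i,1)$, which is exactly the claimed inequality. I expect the cross-cluster Lipschitz verification to be the only delicate step --- specifically, ensuring that pushing the non-overlap values down to the minimal extension does not break the \emph{upper} Lipschitz bounds linking $A$ to $B$, which is precisely where the sign normalization is used; attainment of the maximum in \eqref{eqn:toy-program} is standard \citep{ArmstrongKo21}.
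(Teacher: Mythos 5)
Your proof is correct, and it reaches the conclusion by a genuinely different construction from the paper's. The paper starts from a maximizer, performs the same sign normalization as your step (a), and then removes monotonicity violations \emph{iteratively}: it locates the smallest ``trouble point'' $i^*$ at which the ordering by $\overrange$ fails, clips every farther zero-weight point whose value exceeds $f_{i^*,1}$ down to $f_{i^*,1}$, re-verifies feasibility by a case analysis over Lipschitz pairs (which is longer than yours because the modified solution mixes old and new values), and repeats until no trouble points remain. Your step (b) instead replaces all arm-$1$ values on the non-overlap cluster in one shot by the clipped lower Lipschitz envelope $\max\{g,0\}$ with $g(x)=\max_{m\in B}\{f(x_m,1)-L|x-x_m|\}$, whose monotonicity on $\{x_k:k\in A\}$ is immediate because each function in the maximum is affine increasing to the left of the overlap cluster. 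Both arguments rest on the same two facts---the objective ignores arm-$1$ values at zero-weight points, and pushing those values down (but not below $0$) preserves the quadratic budget---and both need the sign normalization exactly where you say, to protect the upper cross-cluster Lipschitz bounds. What your route buys is a non-iterative, explicit formula for a monotone optimizer and a shorter feasibility check (the cross-cluster upper bound via the triangle inequality inside the max, which you verify correctly); what the paper's route buys is that it perturbs the original optimizer only where monotonicity actually fails and keeps the bookkeeping in terms of the distances $\overrange_i$ rather than coordinates. Your identification $\overrange_i\ge\overrange_j\Leftrightarrow x_i\le x_j$ is valid under the paper's stated two-cluster setup, so no gap there.
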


\begin{figure}[t]
 \begin{minipage}[b]{0.49\textwidth}
\centering \includegraphics[width=\textwidth, height=6cm]{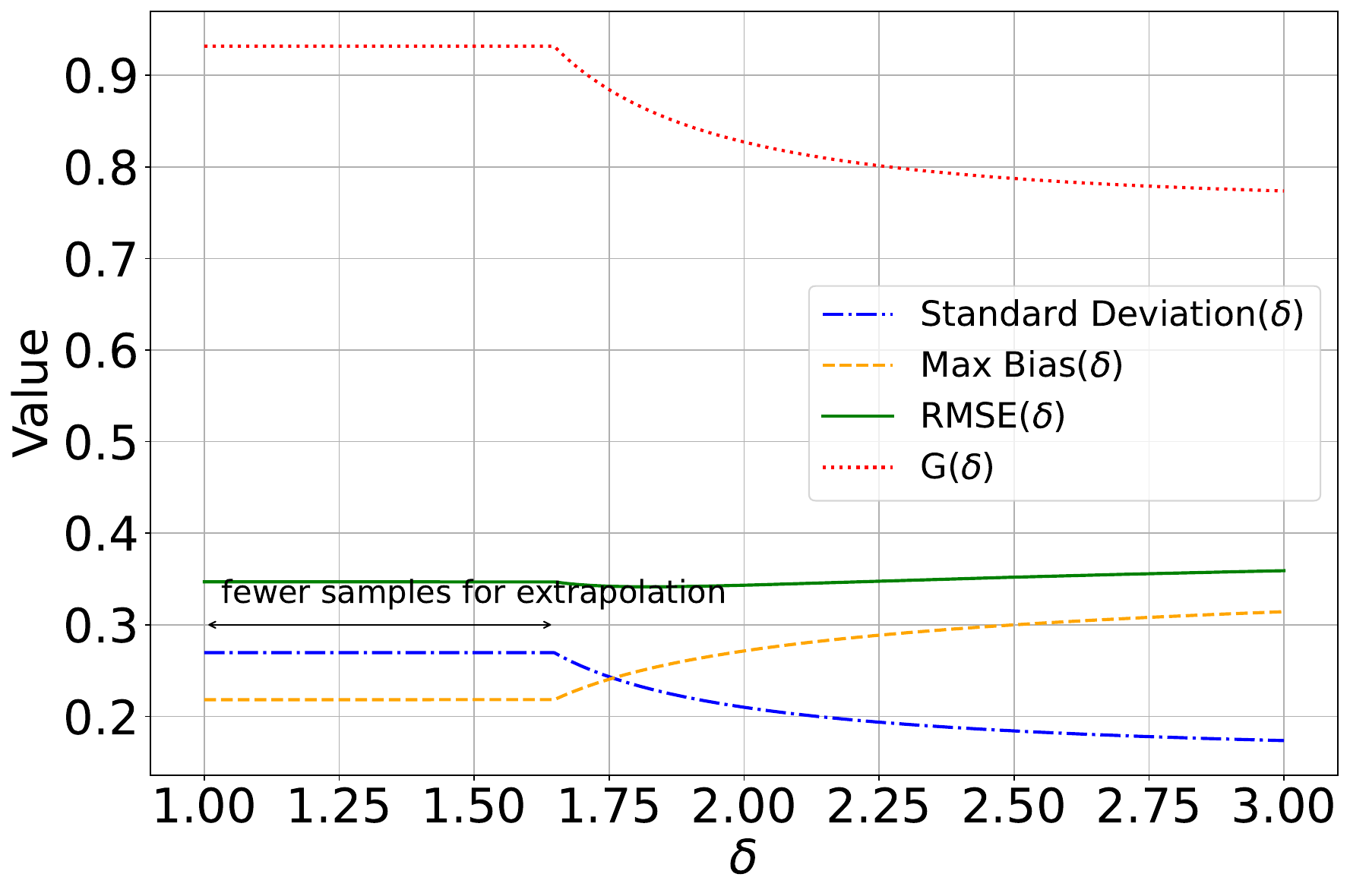}
\end{minipage}
\hfill
 \begin{minipage}[b]{0.49\textwidth}
 \centering \includegraphics[width=\textwidth, height=6cm]{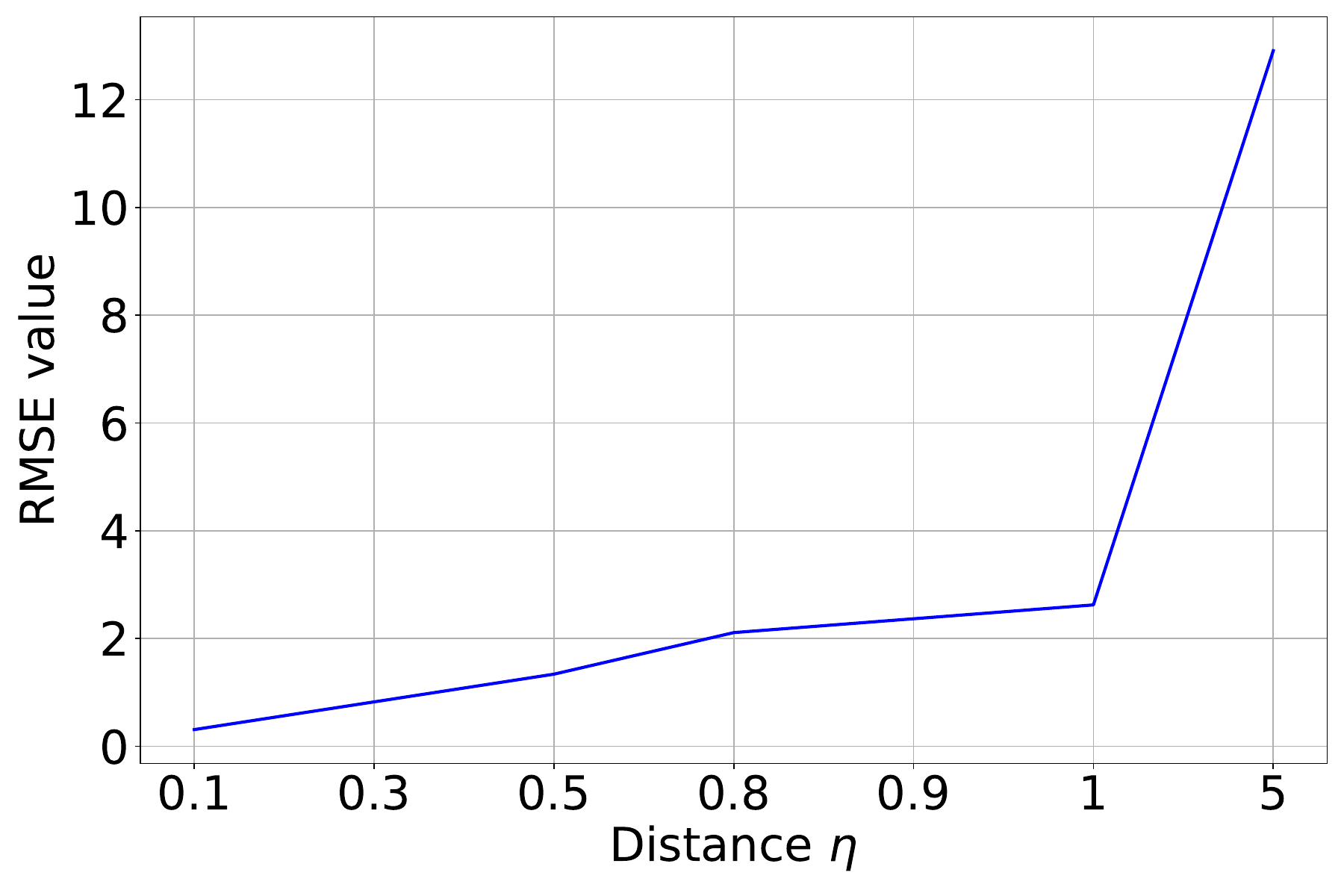}
 \end{minipage}
 \caption{\textbf{Left}: Bias, variance, and the length of the confidence
   interval~\eqref{eqn:CI-expression}
   $G(\delta) = \cv_\alpha\left(\frac{ \maxbias(\hat{\tau}_{\delta}(\bm{w}))
     }{\sd(\hat{\tau}_\delta(\bm{w}))}\right) \cdot
   \sd(\hat{\tau}_\delta(\bm{w}))$.
   \textbf{Right}: RMSE as we vary the
   distance parameter $\overrange$; for each $\overrange$, we compute the optimal (lowest)
   RMSE with respect to $\delta \ge 0$. }
\label{fig:example-toy-vs-eta}
\end{figure}

\paragraph{Estimation error as a function of $\delta$} Finally, we shed light
on the bias~\eqref{eqn:max-bias-equiv-formula} and
variance~\eqref{eqn:std-formula} of our minimax approach as a function of
$\delta$~\eqref{eqn:def-omega}, which controls the total weight the estimator
is allowed to place.  On the left panel of
Figure~\ref{fig:example-toy-analytic}, we consider $2n$ samples with treatment
assignments $z = 1$ and $z = 0$ located within the \emph{overlap region} (with distance $\xi$), and
an additional $2k \cdot n, k > 1$ samples located at a distance $\overrange$ outside of this region
(i.e., without overlap).  This structure allows us to compute the modulus of
continuity $\omega(\delta)$ in closed form for all $\delta > 0$, which we then
use to analyze the worst-case bias and variance.

To understand the bias-variance trade-off more closely, in the left panel of
Figure~\ref{fig:example-toy-vs-eta}, we plot the \emph{worst-case bias}
$\maxbias(\hat{\tau}_{\delta}(\bm{w}))$ and the \emph{standard deviation}
$\sd(\hat{\tau}_{\delta}(\bm{w}))$ separately.  
\begin{itemize}
\item When $\delta$ is \emph{small}, the estimator 
uses fewer samples for extrapolation, 
so it has lower bias and higher variance.
\item When $\delta$ is \emph{large}, 
the estimator uses more samples for extrapolation, 
reducing variance at the cost of increased bias.
\end{itemize}
In the right panel of Figure~\ref{fig:example-toy-vs-eta}, we further explore how the optimal RMSE obtained by
optimizing over $\delta$ scales with the lack of overlap $\overrange$. As $\overrange$
increases, extrapolation becomes less reliable, leading to higher RMSE.

\begin{figure}[t] 
  \begin{minipage}[b]{0.49\textwidth}
    \centering
   \includegraphics[width=\textwidth, height=7cm]{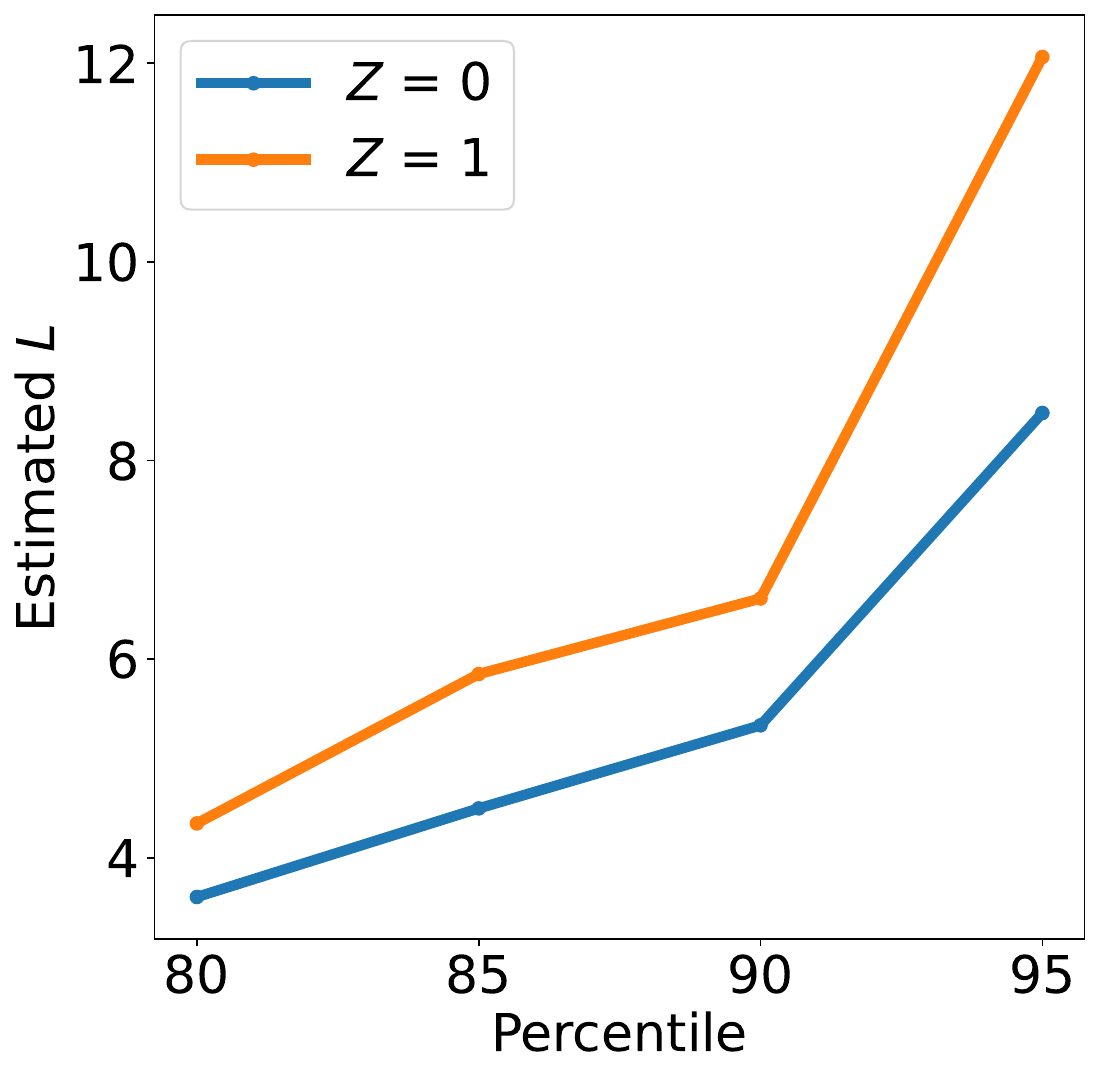}
  \end{minipage}
  \begin{minipage}[b]{0.49\textwidth}
    \centering 
    \includegraphics[width=\textwidth, height=7cm]{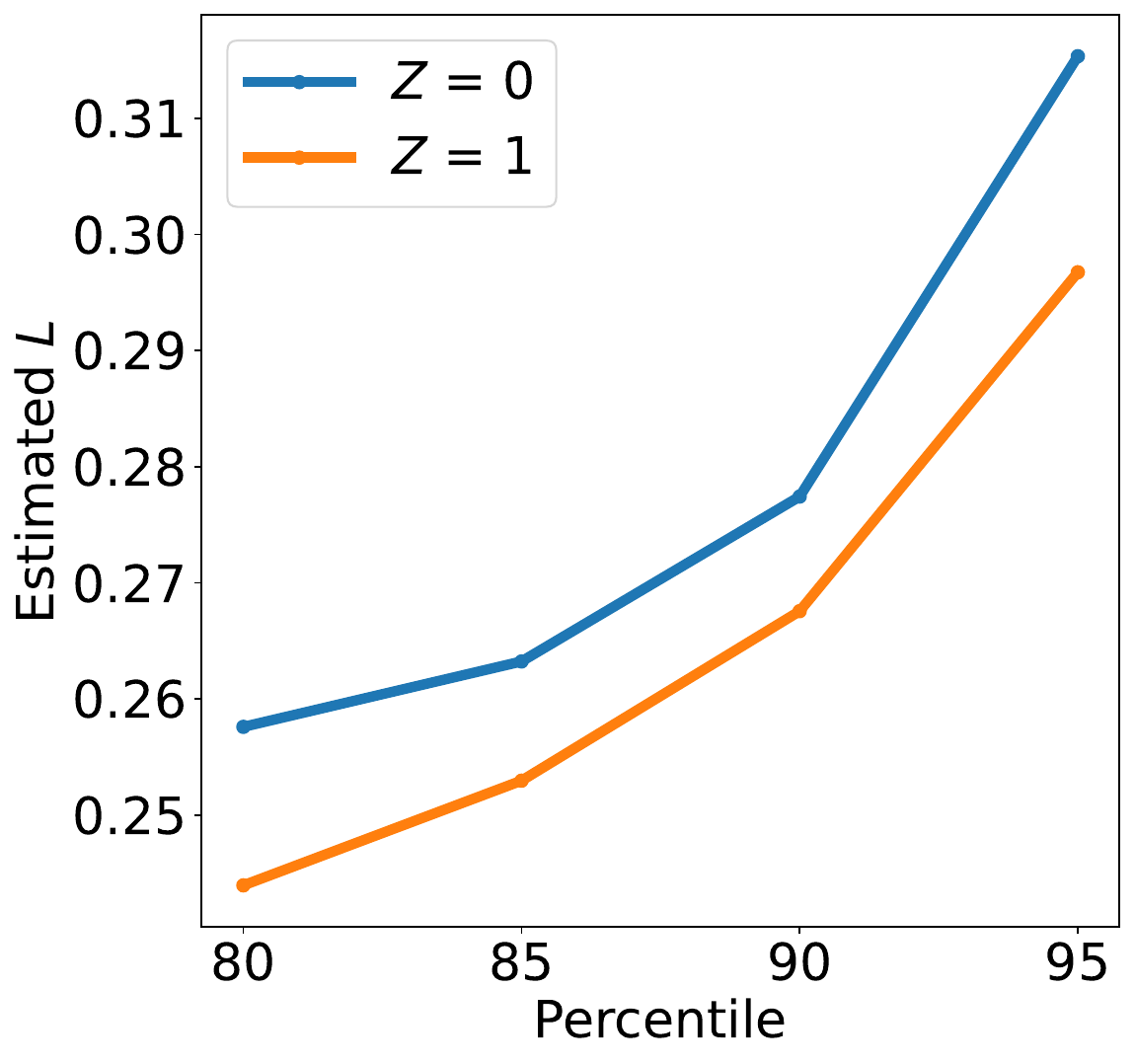}
  \end{minipage}
  \caption{
    Contextualization of the Lipschitz constant 
    $\Tilde{L}_{Z,p}$~\eqref{eqn:contextualize-L-def} 
    vs $p$. \textbf{Left}: Example~\ref{example:simulation}.
    \textbf{Right}: PennUI dataset in Section~\ref{section:experiment}
  }
  \label{fig:sim_contextualize}
\end{figure}

\section{Sensitivity analysis} \label{sec:proposed-approach}
 
We use the $\MP$ framework to facilitate a diagnostic analysis assessing how
different levels of assumed smoothness affect the estimate of bias due to
trimming.  Our framework asks
\emph{how strong the extrapolation assumption must be
 for the induced bias of the asymptotic estimator to remain
  negligible?}  Instead of committing to a level of $L$---which is inherently
challenging---our sensitivity framework varies the Lipschitz constant $L$ to
assess how the minimax interval $\MP_{\epsilon,L}$ evolves in length and
location.  This allows analyzing the relationship between the assumed
smoothness of the outcome function and the potential bias due to trimming.
Small values of $L$ correspond to strong smoothness assumptions—implying high
extrapolability, while larger values represent weaker assumptions and lead
to wider, more conservative intervals.

\begin{figure}[t] 
 \begin{minipage}[b]{0.49\textwidth}
\centering \includegraphics[width=\textwidth, height=7cm]{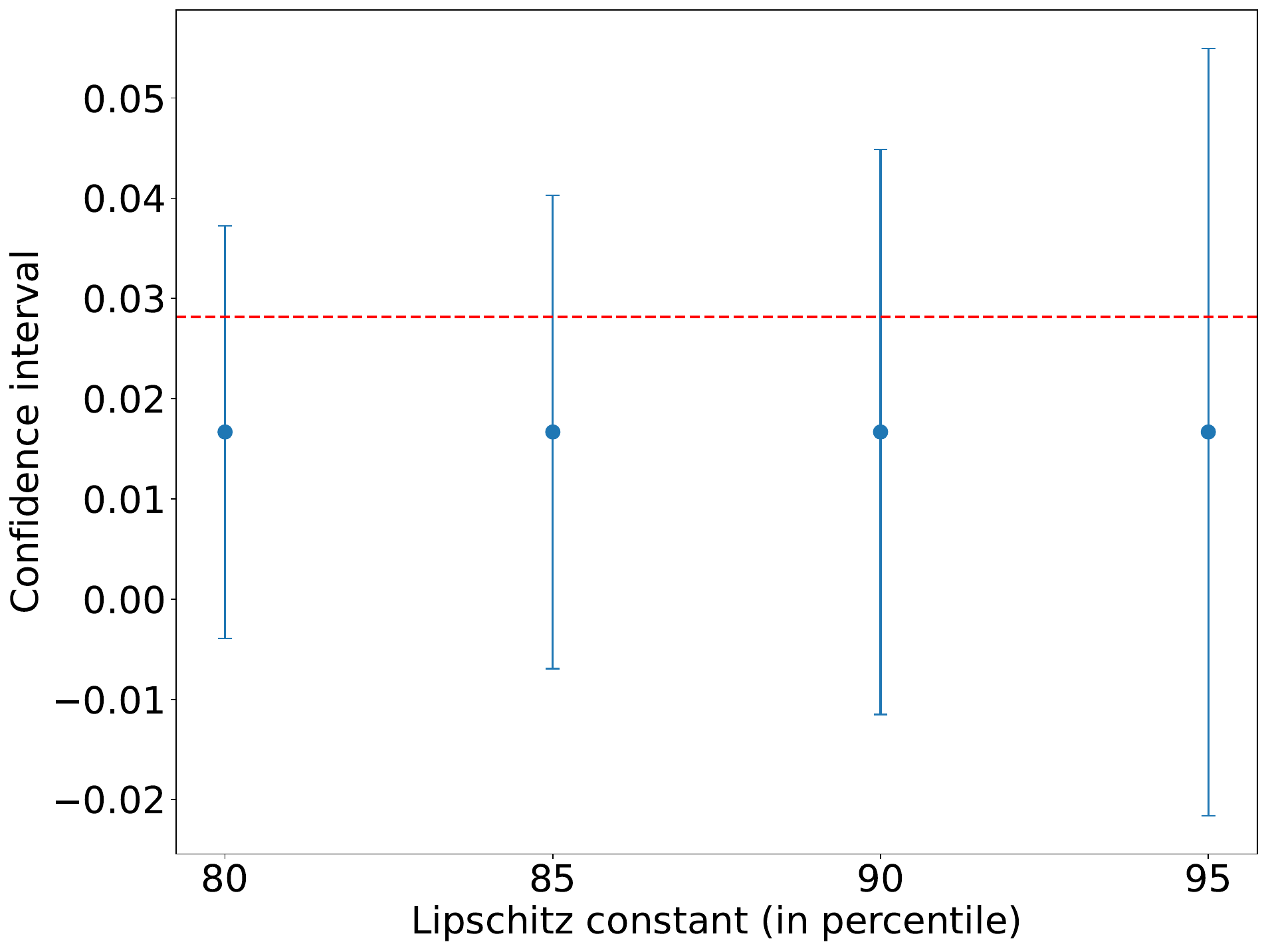}
\end{minipage}
 \begin{minipage}[b]{0.49\textwidth}
\centering \includegraphics[width=\textwidth, height=7cm]{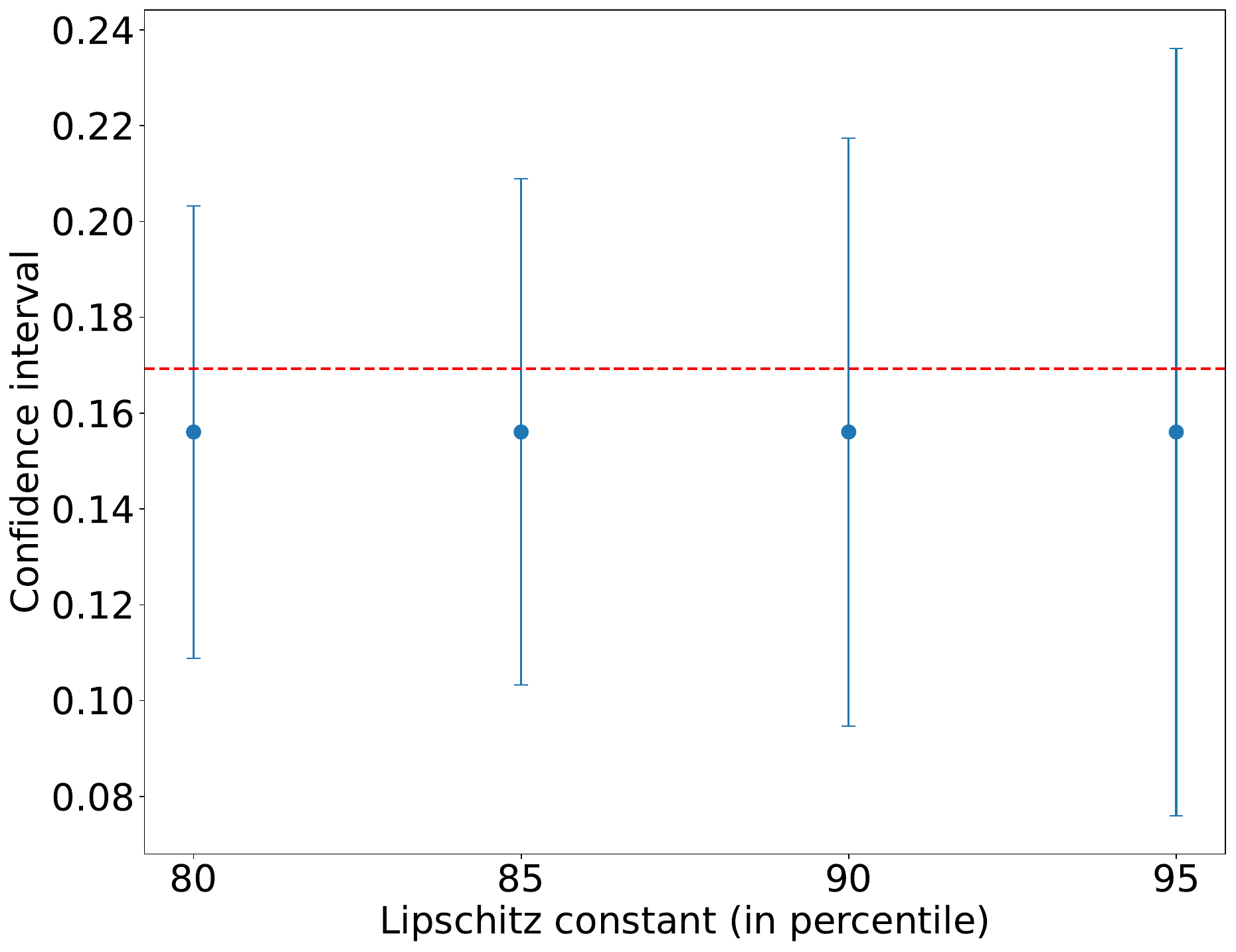}
\end{minipage}
 \caption{
 Data comes from Example~\ref{example:simulation}.
 On the left, we have confidence intervals generated by $\MP_\epsilon$
 at different  values of 
 $L$ with $\epsilon = 0.01$.
  We see how increasing the value of $L$ covers the estimand 
 $\tau_-$ and this enables the sensitivity analysis.
 On the right, we have results 
 for $\M$. 
 As we can see, the confidence intervals $\M$ are much wider than 
 $\MP$.
  }
\label{fig:simulation-M-sensitivity}
\end{figure}

To contextualize the sensitivity curve, we use the overlap region (where we
trust the predicted outcome regressor) to estimate the smoothness of the
outcome function, and use this estimate to guide how to properly model the
outcome function in the non-overlap region.  A uniform Lipschitz constant over
all data points (i.e., taking the maximum instead of the quantile
in~\eqref{eqn:contextualize-L-def}) may be overly conservative due to data
noise.
We consider the $p$-th quantile of
$\frac{\hat{\mu}_Z(x_i) - \hat{\mu}_Z(x_j)}{\norm{x_i- x_j}}$ (where
$\hat{\mu}$ is the fitted outcome regressor using data in the overlap region)
\begin{align}
  \Tilde{L}_{Z,p}= \max_{i: q(x_i) \ge \epsilon} \set{p-\mathsf{th~quantile } \set{j \ne i, q(x_j) \ge \epsilon:
  \left|
  \frac{\hat{\mu}_Z(x_i) - \hat{\mu}_Z(x_j)}{\norm{x_i- x_j}}
  \right|
  }}, Z \in \set{0,1}.
  \label{eqn:contextualize-L-def}
\end{align}

Since contextualizing the Lipschitz constant $L$ is challenging, the analyst
can instead think of the more intuitive quantity $p \in (0,1)$ that controls
the functional class. For simplicity, we consider the uniform Lipschitz
constant estimate for $Z \in \set{0,1}$ and consider
\begin{align}
\Tilde{L}_p = \max_{Z \in \set{0,1}} \set{\Tilde{L}_{Z,p}}.
\label{eqn:contextualize-L-def-uniform}
\end{align}
Using $\Tilde{L}_p$ in our framework, the analyst can better articulate the
function spaces parameterized by $L$.  In addition, the choice of $p$ to make
the confidence intervals valid also forms useful domain knowledge and can be
potentially transferred to other datasets.
In Figure~\ref{fig:sim_contextualize}, we use Example~\ref{example:simulation}
to numerically illustrate this contextualization approach.  We observe that
the percentile-based contextualization procedure offers an interpretable and
data-driven way to vary the extrapolation assumptions and examine their impact
on inference.

\begin{figure}[t]  
  \centering \includegraphics[width=0.5\textwidth, height=7cm]{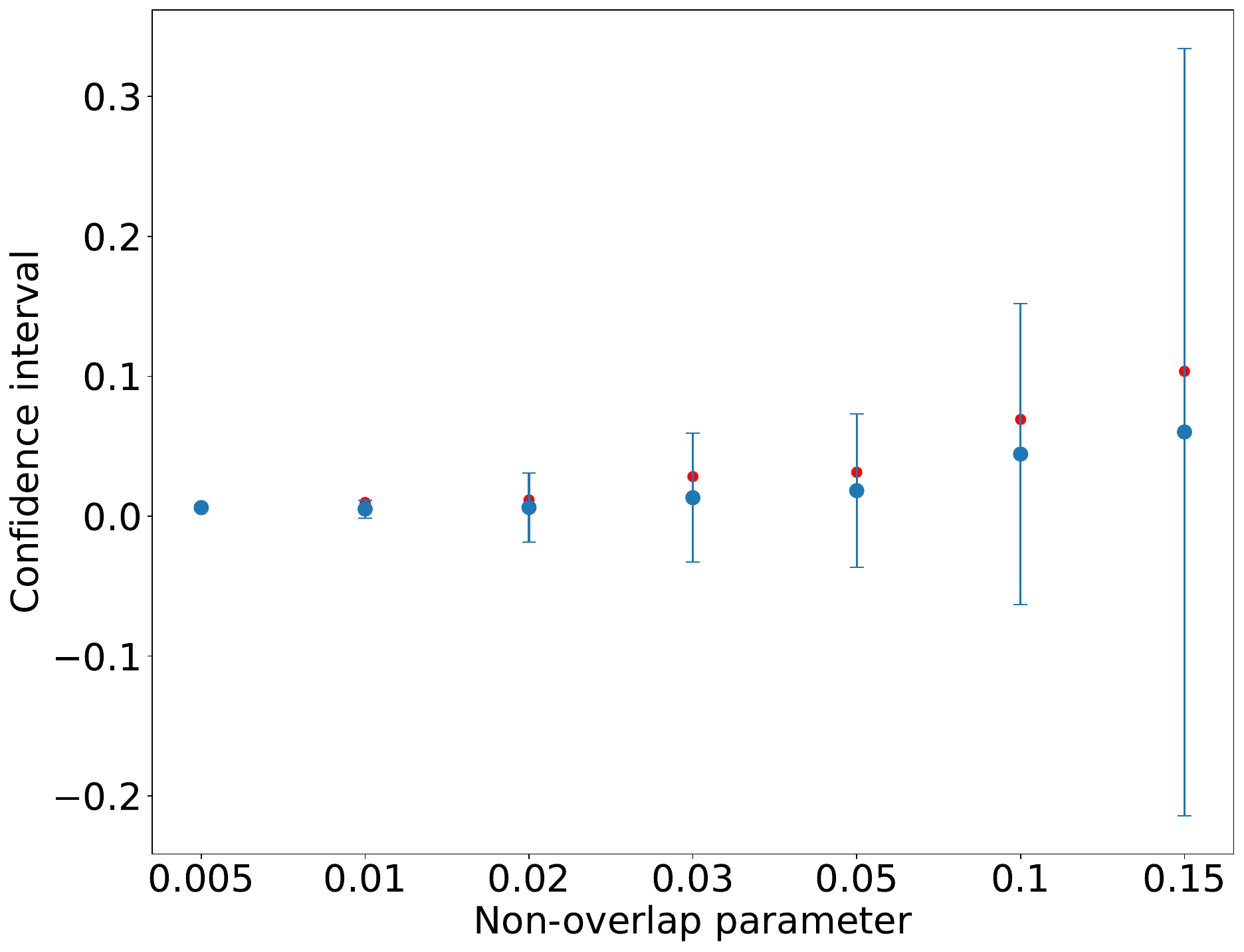}
  \caption{ 
    We plot 
    $\MP_\epsilon$
    with a fixed Lipschitz constant $L = 14$
    with different values of non-overlap parameter $\overrange$. 
    In this plot, red points represent the non-overlap ATE at each value of the non-overlap parameter. Blue points/intervals represent
    the point estimate/confidence interval 
    of $\MP$.
    At every point, we assume 
    the truncation threshold is chosen to minimize the length of the $\AIPWP$ interval.
    This shows the $\MP$ interval adapts to 
    data with different levels of overlap.
  }
  \label{fig:simulation-M-sensitivity-vary-a}
\end{figure}

\paragraph{Illustrating our sensitivity framework using
  Example~\ref{example:simulation}}
Figure~\ref{fig:simulation-M-sensitivity} shows $\MP_{\epsilon,L}$ across
Lipschitz constants $L$ for a fixed $\epsilon = 0.01$, which is
the threshold that
minimizes the length of the confidence interval $\AIPWP$ as
in~\eqref{eqn:eps-opt-def}; we consider $L$ ranging from
$\hat{L}_p$~\eqref{eqn:contextualize-L-def-uniform} with $p=0.80$ to $p=0.95$,
which roughly corresponds to $L$ ranging from $L = 5$ to $L = 14$ from
Figure~\ref{fig:sim_contextualize}.
 
Our analysis shows how gradually increasing $L$---and thus weakening the
smoothness assumption---leads to wider confidence intervals from
$\MP_{\epsilon,L}$, but also improves coverage of the ATE in the non-overlap
region $\tau_-$.  This trend reflects that under stronger smoothness
assumptions (lower $L$), the minimax procedure may underestimate uncertainty,
whereas more conservative values of $L$ yield intervals that reliably contain
$\tau_-$.  The sensitivity framework clearly allows the analyst to  
better understand the assumptions.  For example, at the 80th percentile, we
see the   $\MP_\epsilon$ interval is almost entirely above 0, suggesting
a larger chance of having a positive bias.  For a larger percentile, or being
more conservative, we see $\MP$ becomes much wider, and the potential
magnitude of the bias also increases.  Concretely, in
Figure~\ref{fig:simulation-M-sensitivity-vary-a} we summarize our minimax
partial estimator $\MP$ at the 95th percentile $L$ ($L = 14$) where we observe
reliable coverage while maintaining reasonably short confidence intervals.

\paragraph{Combining confidence intervals}
 
If the analyst deems the asymptotic approach to be   untrustworthy, they can
either modify the estimand so that excluding the uncertain subpopulation is
acceptable, or adopt a hybrid confidence interval that employs a conservative
union bound to combine the intervals from the minimax and asymptotic
approaches.  Specifically, in
Figure~\ref{fig:simulation-M-sensitivity-vary-a}, when the non-overlap
parameter is $0.03$, the length of $\MP$ is even wider than $\AIPWP$ and the
analyst can consider improving coverage while mitigating the excessive length
often associated with purely minimax-based intervals.  Using the union bound
$\P(X + Y \ge x+ y) \le \P(X \ge x) + \P(Y \ge y)$, we can add the upper
bounds of $\AIPWP$ and $\MP$ at levels $\alpha/2$ (i.e., coverage with
probability $1- \alpha/2$) to obtain an $\alpha$-confidence interval of $\MC$:
the combined $\alpha$-confidence interval of $\tau$ is
\begin{align}
  \MC^{(\alpha)}
  = \big[ a_1 +b_1, a_2 + b_2
  \big],~~~\mbox{where}~~\AIPWP^{(\alpha/2)} = \left[ a_1, a_2\right], ~~
  \MP^{(\alpha/2)} = \left[ b_1,b_2 \right].
  \label{eqn:MC-def}
\end{align}

\begin{figure}[t]
  \centering
  \begin{minipage}[b]{0.46\textwidth}
    \centering
    \includegraphics[width=\textwidth]{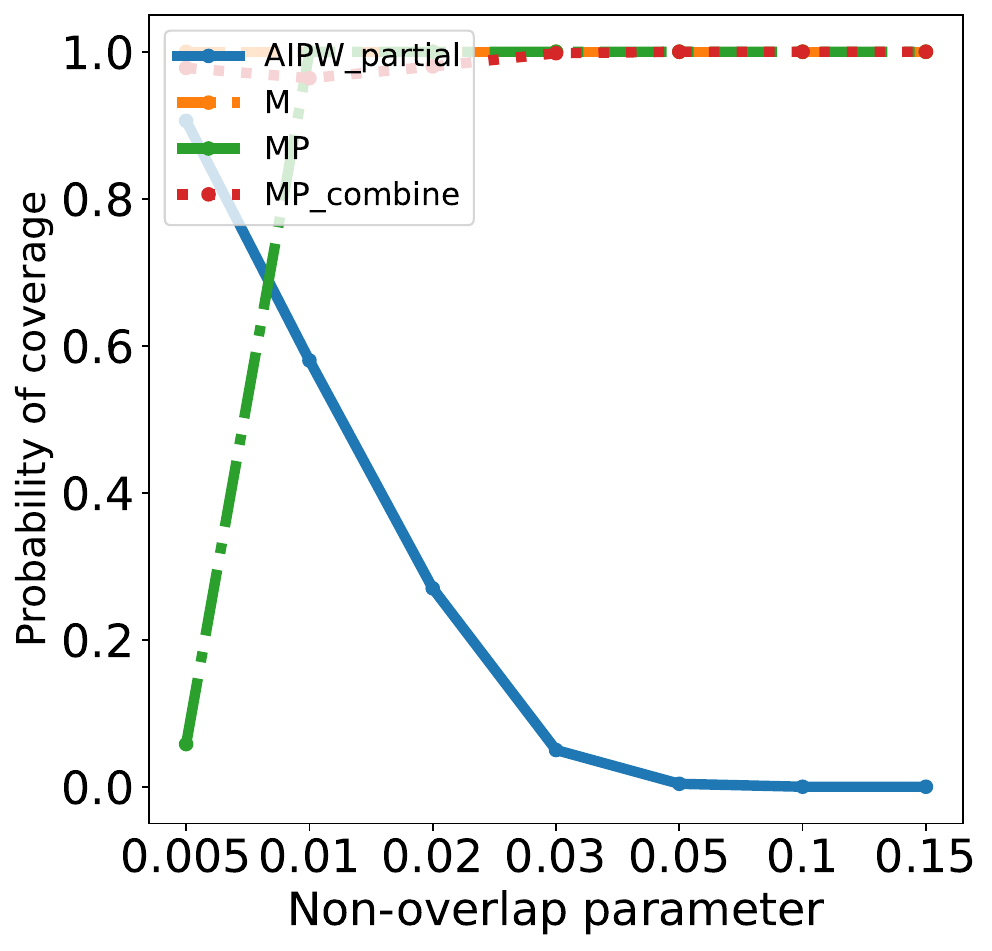}
    % \caption*{Prob of coverage}
    % \label{fig:sim-prob-coverage}
  \end{minipage}
  \hfill
  \begin{minipage}[b]{0.49\textwidth}
    \centering
    \includegraphics[width=\textwidth]{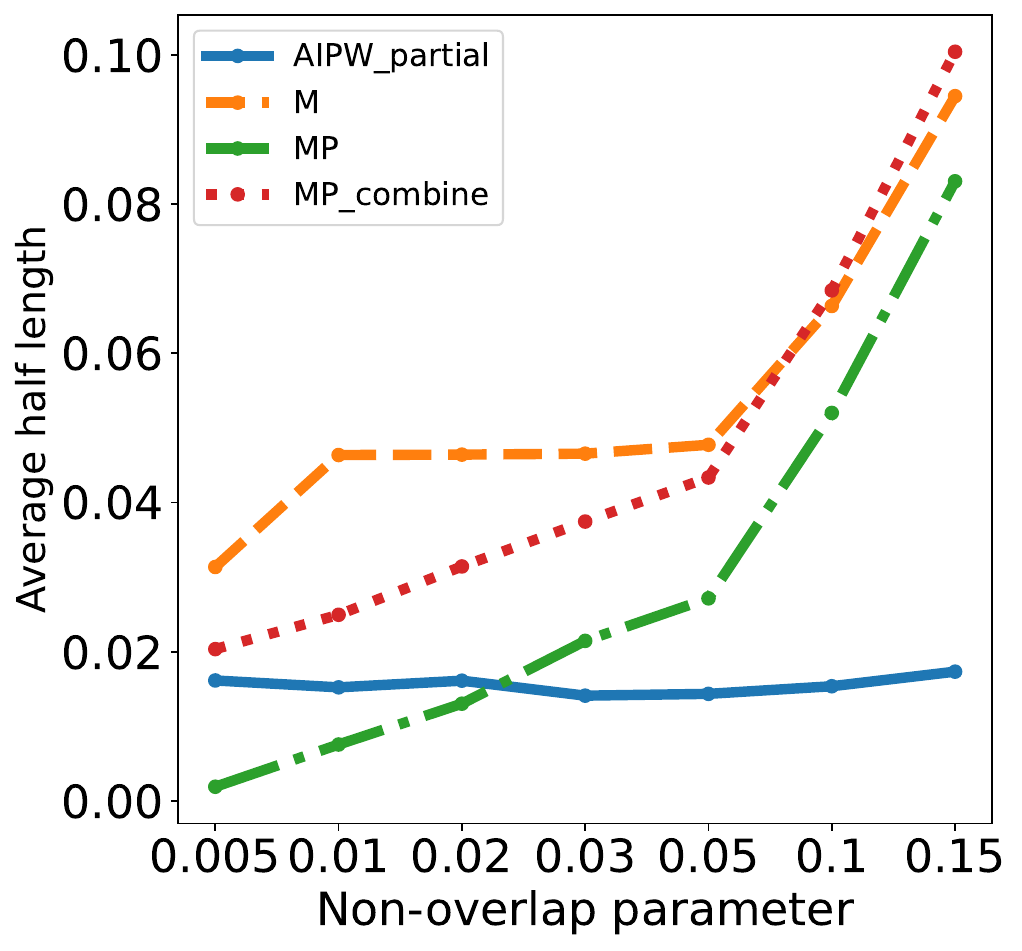}
    % \caption*{Half length of interval}
    % \label{fig:sim-length}
  \end{minipage}
  \caption{Probability of coverage and half length for $L=14$, as we vary the
    non-overlap parameter $\overrange$~\eqref{eqn:sim-prop}.  The truncation threshold
    is always chosen to be the one such that $\AIPWP$ has the smallest
    length. Our method $\MP$ helps analyze the unreliability of $\AIPWP$. }
  \label{fig:simulation-results}
\end{figure}

\paragraph{Coverage probability}
To assess the coverage probability of different inferential approaches, we
evaluate the performance of different confidence interval procedures across
repeated trials in Figure~\ref{fig:simulation-results}.  We observe a sharp
decline in the coverage probability for $\AIPWP$ as $\overrange$ increases from $0.005$
to $0.03$.  This highlights the fragility of asymptotic methods under
violation of the overlap assumption and underscores the utility of our
framework in providing reliable inference where standard estimators fail.

Despite achieving the desired coverage, the full minimax procedure $\M$ is
overly conservative: its length is significantly larger than necessary and
does not meaningfully reflect the uncertainty specifically attributable to the
non-overlap region. In contrast, the interval from $\MP$, which targets
$\tau_-$, is substantially narrower and better calibrated to the uncertainty
in the trimmed region.  The combined interval $\MC$ achieves valid coverage
for the full ATE $\tau$, with a shorter length (higher power) than the
conservative interval from $\M$ for small values of $\overrange$.

%%% Local Variables:
%%% mode: latex
%%% Tex-master: "main"
%%% End:

\begin{figure}[t]
  \begin{minipage}[b]{0.49\textwidth}
    \centering 
    \includegraphics[width=0.85\textwidth]{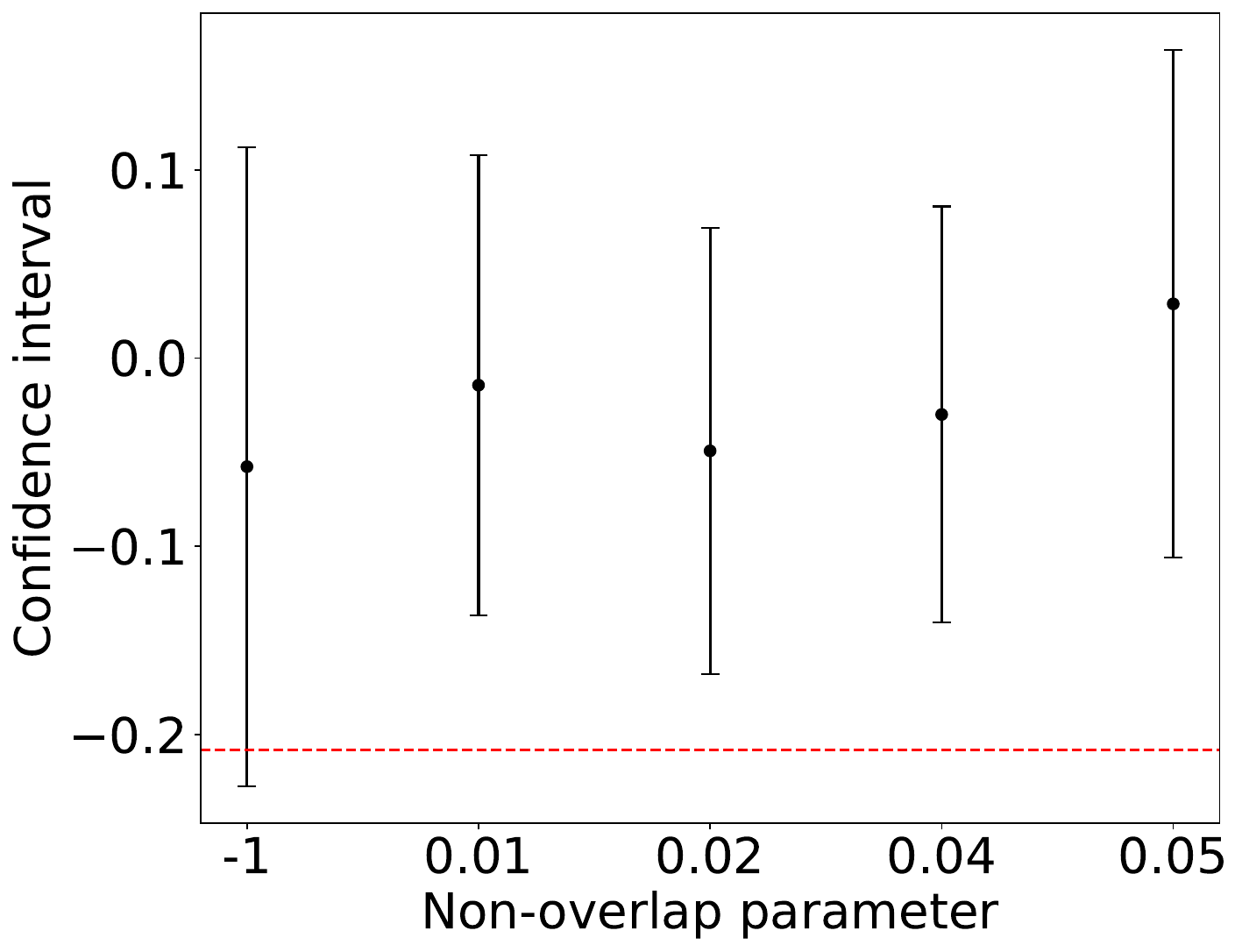}
  \end{minipage}
  \hfill
  \begin{minipage}[b]{0.49\textwidth}
    \centering \includegraphics[width=0.8\textwidth]{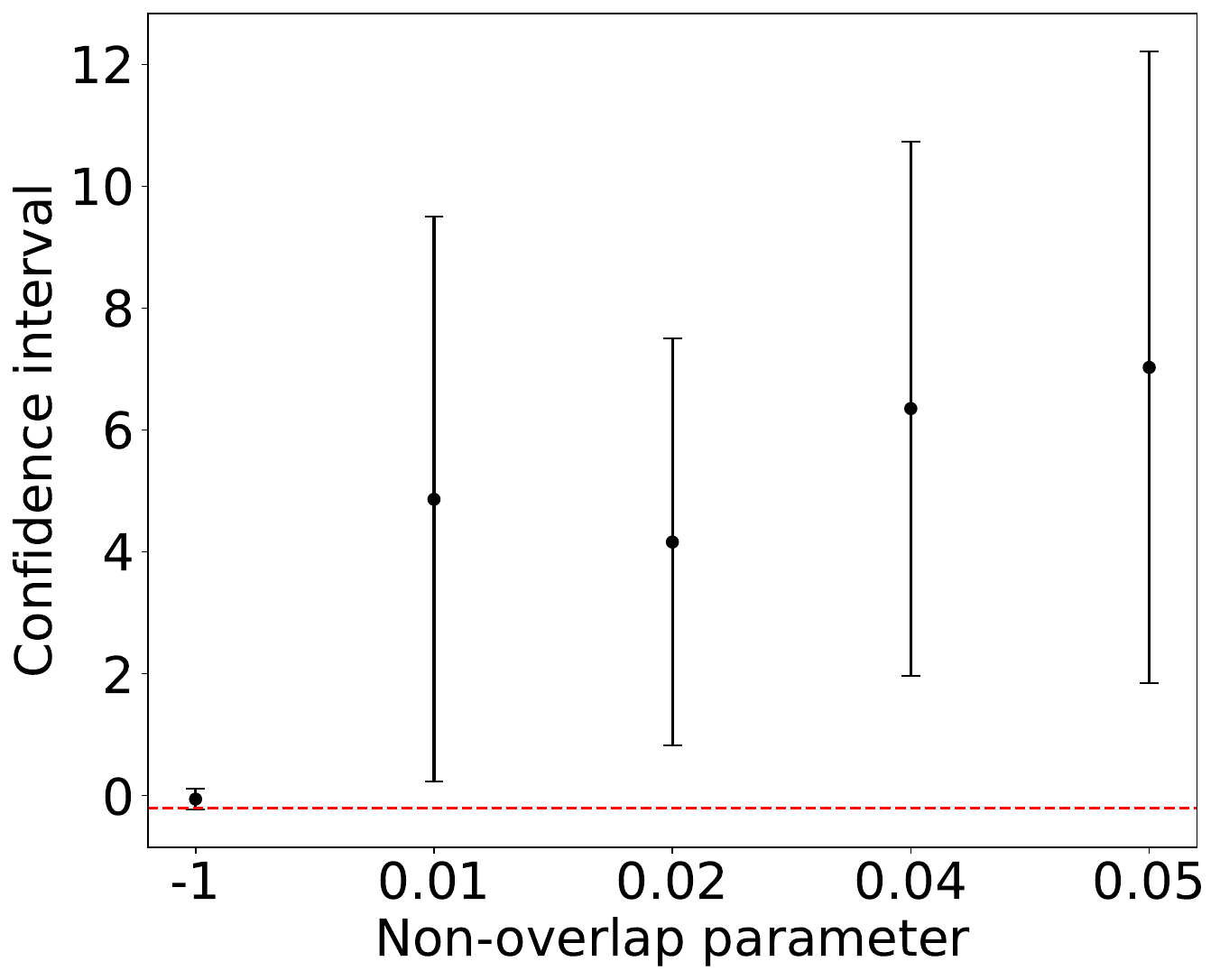}
  \end{minipage}
  \caption{Failure of $\AIPWP$ on observational data simulated from the PennUI
    dataset.  Similar to Figure~\ref{fig:simulation-AIPW-viz}, we compare the
    confidence interval from $\AIPWP$ for data generated with different levels
    of $\overrange$ in~\eqref{eqn:RCT-prop-E-k-def}, and $E_{-1}(x) = \half$ for all
    $x$.  \textbf{Left}: With truncation. \textbf{Right}: Without truncation.
  }
  \label{fig:RCT-AIPW-viz}
\end{figure}

\section{Case study}
\label{section:experiment}
 
We now demonstrate our framework through a case study on the Pennsylvania
Reemployment Bonus Demonstration (PennUI) dataset, a significant experiment
conducted from 1988 to 1989 to estimate the impact of financial incentives on
reemployment~\citep{BiliasKo02}.

\paragraph{Preliminaries}
In the PennUI dataset, approximately 15,000 eligible claimants were randomly
assigned to one of the six treatment groups or a control group.  The data
provides valuable insights for policymakers aiming to enhance reemployment
programs and UI (unemployment insurance) systems.  In this dataset, the outcome
variable $Y$ is the log of the duration of unemployment for the UI claimant
and covariates $X \in \R^{60}$ include demographics and employment service
program participation.  We take a subset of this data, considering only
treatment variable taking values in $\set{4,6}$ and further conduct a random
subsampling with $n = 539$ samples with roughly $48.3\%$ of the samples
received treatments.  As this data is a randomized experiment, we can use a
standard difference-in-means estimator to obtain the treatment effect
$\tau = -0.2$ on the entire population.

We simulate various observational datasets using a
sampling method with more details in Section~\ref{sec:experimental-details}.
To define the propensity score used in the sampling procedure, we first train
a T-learner using \texttt{Random Forest} regressors on the entire dataset to
obtain $\hat{\tau}(x_i)$ for all points $x_i$, then we map the value of
$\hat{\tau}(x_i)$ to obtain
$\pi_{\overrange}(x_i) = E_{\overrange}(\mathsf{percentile}(\hat{\tau}(x_i)))$ for each sample $i$
with  
\begin{align}
E_{\overrange}(x) = 
\begin{cases} 
\Uni(0.005, 0.03) & \text{if } x \leq 0.075+\overrange \text{ or } 1 - x \leq 0.075+\overrange \\
\Uni(0.03, 0.05) & \text{elif } x \leq 0.1+\overrange \text{ or } 1 - x \leq 0.1+\overrange \\
\Uni(0.05, 0.1) & \text{elif } x \leq 0.15+\overrange \\
0.5 & \text{otherwise}
\end{cases}
\label{eqn:RCT-prop-E-k-def}
\end{align}
We assign extreme values of $\pi_{\overrange}(x_i)$ to those with extreme values of
$\hat{\tau}$ and it leads to observational datasets with limited overlap.
Unless otherwise specified, we assume $\overrange = 0.01$.  Now, for $\AIPW$ or
$\AIPWP$, we use a \texttt{Random Forest} regressor for the outcome
prediction.

Recall that the standard deviation $\sigma$ is an important input to our
procedure $\MP$.  To estimate it, we utilize the same estimator as in
\cite{AbadieIm06,ArmstrongKo21}, which is given by
\begin{align}
  \what{\sigma}^2\defeq \avgn  \widehat{\sigma}^2(x_i,z_i)~~~\mbox{where}~~~
  \widehat{\sigma}^2(x_i,z_i) =\frac{J}{J+1} \paran{Y_i - \frac{1}{J} \sum_{m=1}^J Y_{\ell_m(i)}}^2.
\end{align} 
Here, $\ell_m(i)$ is the closest unit to unit $i$ among the units with the
same treatment as unit $i$ and $J$ is a tuning parameter which we choose to
set as $J = 2$ following~\citep{ArmstrongKo21}.

\begin{figure}[t]   
\begin{minipage}[b]{0.49\textwidth}
\centering \includegraphics[width=\textwidth]{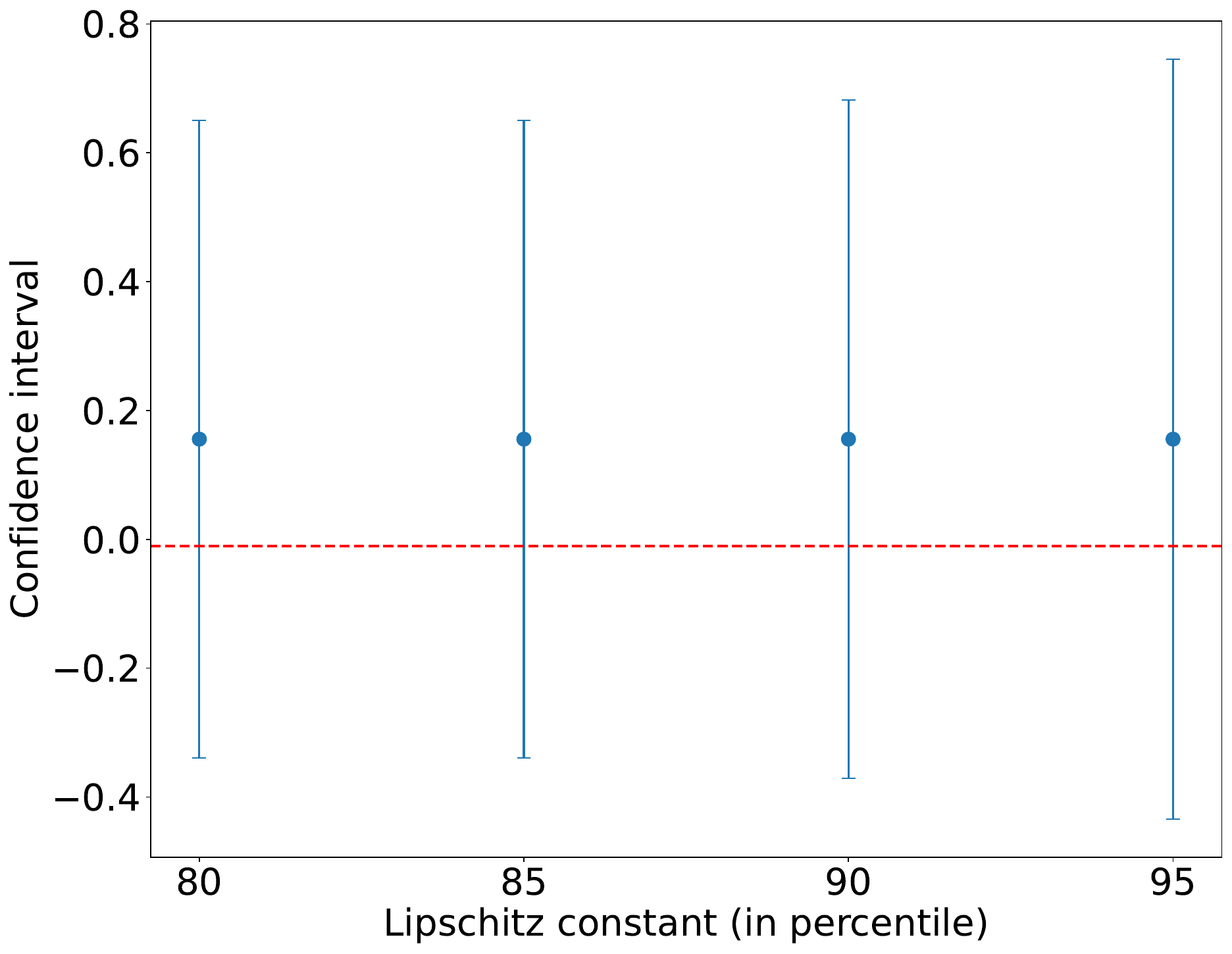}
\end{minipage}
\hfill
\begin{minipage}[b]{0.49\textwidth}
\centering \includegraphics[width=\textwidth]{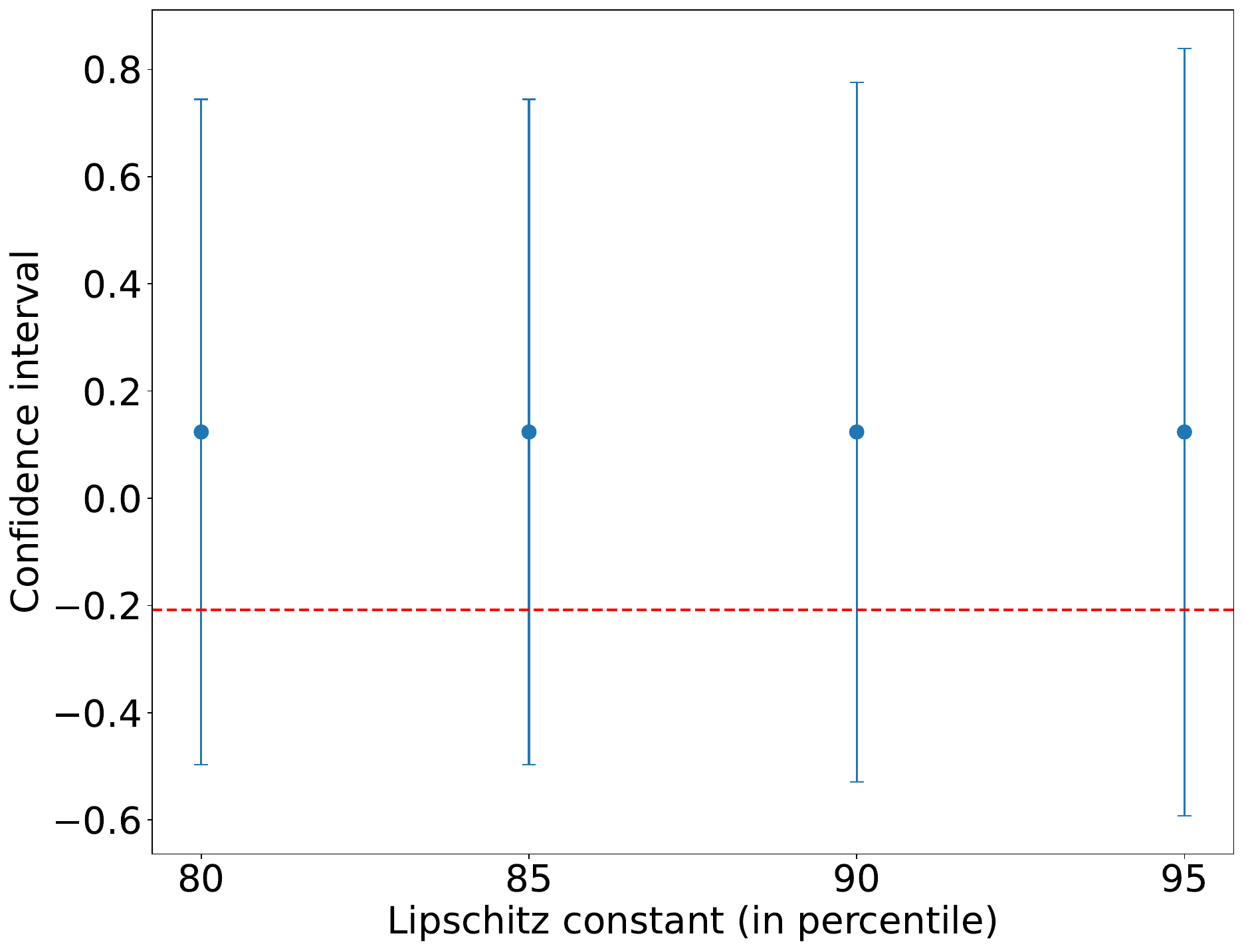}
\end{minipage}
\caption{Real data results: confidence intervals for
$\MP_{\epsilon\opt}$ (left)  and $\MC$ (right)
at different $L$ for $\overrange = 0.01$. % and $\epsilon\opt = 0.07$. 
The red dotted points are $\tau_-$ and $\tau$ respectively.
The data is the same as in Figure~\ref{fig:RCT-AIPW-viz}.
}
\label{fig:RCT-M-sensitivity}
\end{figure}

\paragraph{Failure of asymptotic estimators}
In Figure~\ref{fig:RCT-AIPW-viz}, we observe that
$\AIPWP$ does not cover $\tau$ as
the lack of overlap becomes more severe.  The output of $\AIPWP$ significantly
overestimates the treatment effect, and we see the upper bound of the interval
deviates significantly from the true treatment effect.  (Similar to our running
simulation example, we assume the analyst chooses the truncation threshold
from the set $\mc{E} = \set{0.01,0.02,0.03,0.04,0.05,0.06,0.07}$.)

\paragraph{Sensitivity framework} In contrast, our sensitivity approach
provides a conservative and valid analysis of the bias induced by the
truncation.  In Figure~\ref{fig:RCT-M-sensitivity}, we show $\MP_\epsilon$
across different values of the Lipschitz constant $L$ and see that it
successfully covers $\tau_-$ for $L = \hat{L}_{0.8}$ estimated from the
overlap region~\eqref{eqn:contextualize-L-def-uniform}.  The confidence
intervals $\MP_\epsilon$ are wider than those produced by $\AIPWP$, reflecting
the conservative nature of minimax-type procedures, which explicitly account
for the worst-case uncertainty due to extrapolation from the overlap region.

We also visualize the combined interval $\MC$ on the right side of
Figure~\ref{fig:RCT-M-sensitivity}.  As expected, $\MC$ intervals are wider
than those from $\MP$ alone, since they aim to cover the full ATE $\tau$
rather than only $\tau_-$ in the non-overlap region.  This increased width is
a necessary trade-off to ensure coverage of the entire target estimand.
Still, $\MC$ remains substantially more efficient than the fully conservative
minimax confidence interval $\M$, as it exploits the strengths of both the
asymptotic method (in regions with good support) and the minimax method (in
regions with poor overlap).

\begin{figure}[t]
\centering
\begin{minipage}[b]{0.49\textwidth}
\centering
\includegraphics[width=\textwidth]{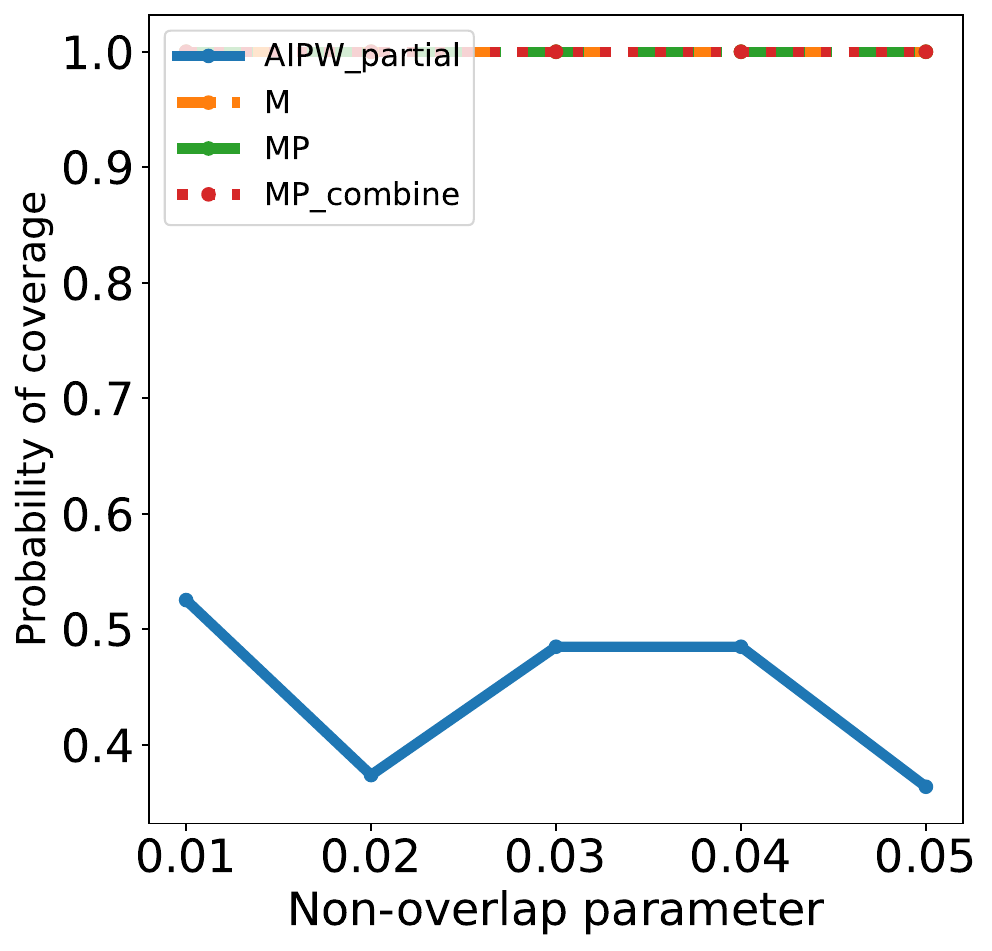}
\end{minipage}
\hfill
\begin{minipage}[b]{0.49\textwidth}
\centering
\includegraphics[width=\textwidth]{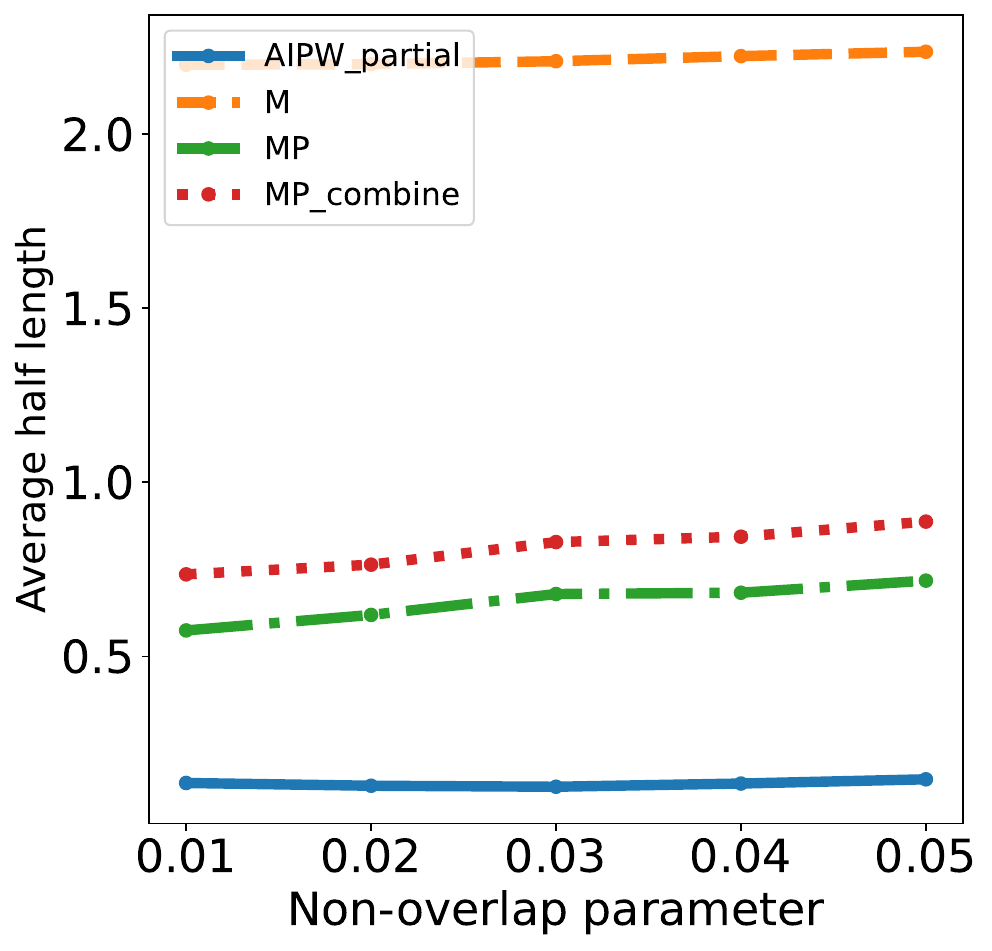}
\end{minipage}
\caption{Probability of coverage and half length for $\MP$ and $\AIPWP$ with
  $L = 0.32$, as we vary the non-overlap parameter $\overrange$.  Results are averaged
  over 100 runs.  The larger the non-overlap parameter, the larger the non-overlap
  region is.  The truncation threshold is always chosen to be the one such
  that $\AIPWP$ has the smallest length.  We can see how unreliable $\AIPWP$
  is and how our method $\MP$ can help mitigate it.  }
\label{fig:RCT-results}
\end{figure}

\paragraph{Coverage probabilities of confidence intervals}
To assess extrapolability, we contextualize the Lipschitz constant $L$ in the
right panel of Figure~\ref{fig:sim_contextualize}, and observe that the
underlying CATE function is much smoother than in our simulation example
due to the higher-dimensional nature of the covariates.  Taking the 95th
percentile
Lipschitz parameter, we study the coverage probability of specific confidence
intervals in Figure~\ref{fig:RCT-results}.  Probability of coverage for
$\AIPWP$ is highly sensitive to the degree of overlap in the data.  As the
degree of non-overlap increases, its coverage probability deteriorates
substantially, indicating a higher risk of undercoverage and, consequently,
misleading inference.  On the other hand, our method $\MP$, which leverages
the minimax framework over the non-overlap region, offers a more robust
estimate of the bias of $\AIPWP$.  Across all scenarios tested, $\MP$ achieves
full (100\%) coverage of the non-overlap region estimand $\tau_-$.  Similarly,
$\MC$ consistently achieves valid coverage for the full ATE $\tau$, while
remaining considerably narrower than the fully conservative interval generated
by the naive minimax method $\M$.  This illustrates a key strength of our
framework: it balances the need for robust inference in low-overlap regions
with the desire for efficiency where data support is strong.

%%% Local Variables:
%%% mode: latex
%%% TeX-master: "main"
%%% End:

\section{Guiding data collection}
\label{sec:continual-sampling-framework}

The notion of overlap studied in this work can potentially have implications
beyond observational analysis.  We explore one such scenario where the analyst
may   collect additional outcome data from units in the non-overlap
region, so that the non-overlap region shrinks over time.  Given some budget
to assign treatments $z_i$ to certain sample $i$, the realization of the
updated treatment $\Tilde{Z}_i(r)$ that depends on the sampling option $r$
yields the updated outcome $\Tilde{Y}_i$.  We provide preliminary ideas where
the notion of overlap we study can guide which batch of samples to collect.

 \begin{figure}[t] 
\centering \includegraphics[width=0.31\textwidth]{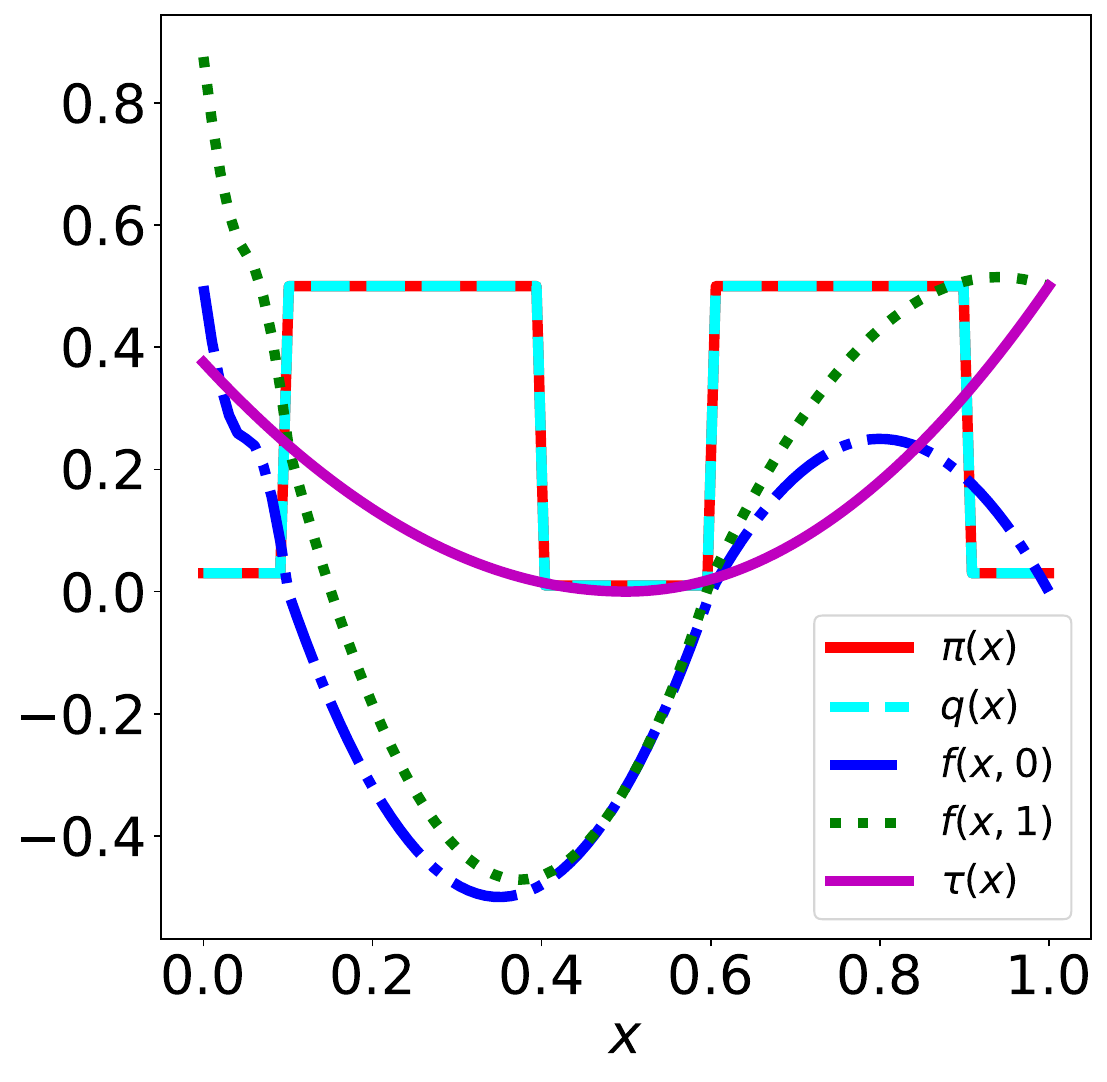}
\caption{Simulation data collection setup}
\label{fig:simulation-data-collection-dgp}
\end{figure}

\subsection{Data collection}
\label{sec:data-collection-two-option}

\begin{figure}[t]
\centering
\begin{minipage}[b]{0.49\textwidth}
\centering
\includegraphics[width=0.8\textwidth, height=6cm]{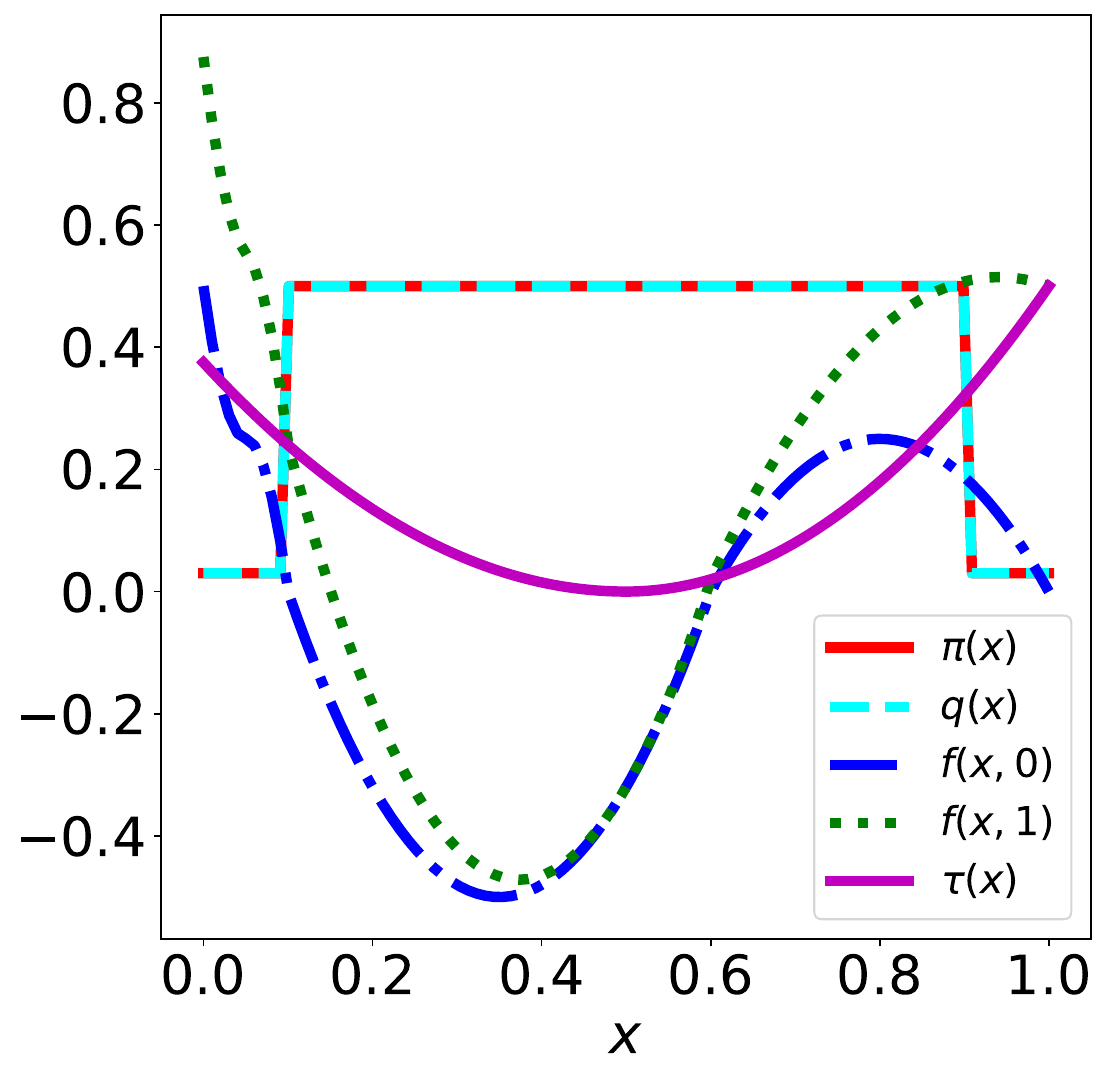}
\end{minipage}
\hfill
\begin{minipage}[b]{0.49\textwidth}
\centering \includegraphics[width=0.8\textwidth, height=6cm]{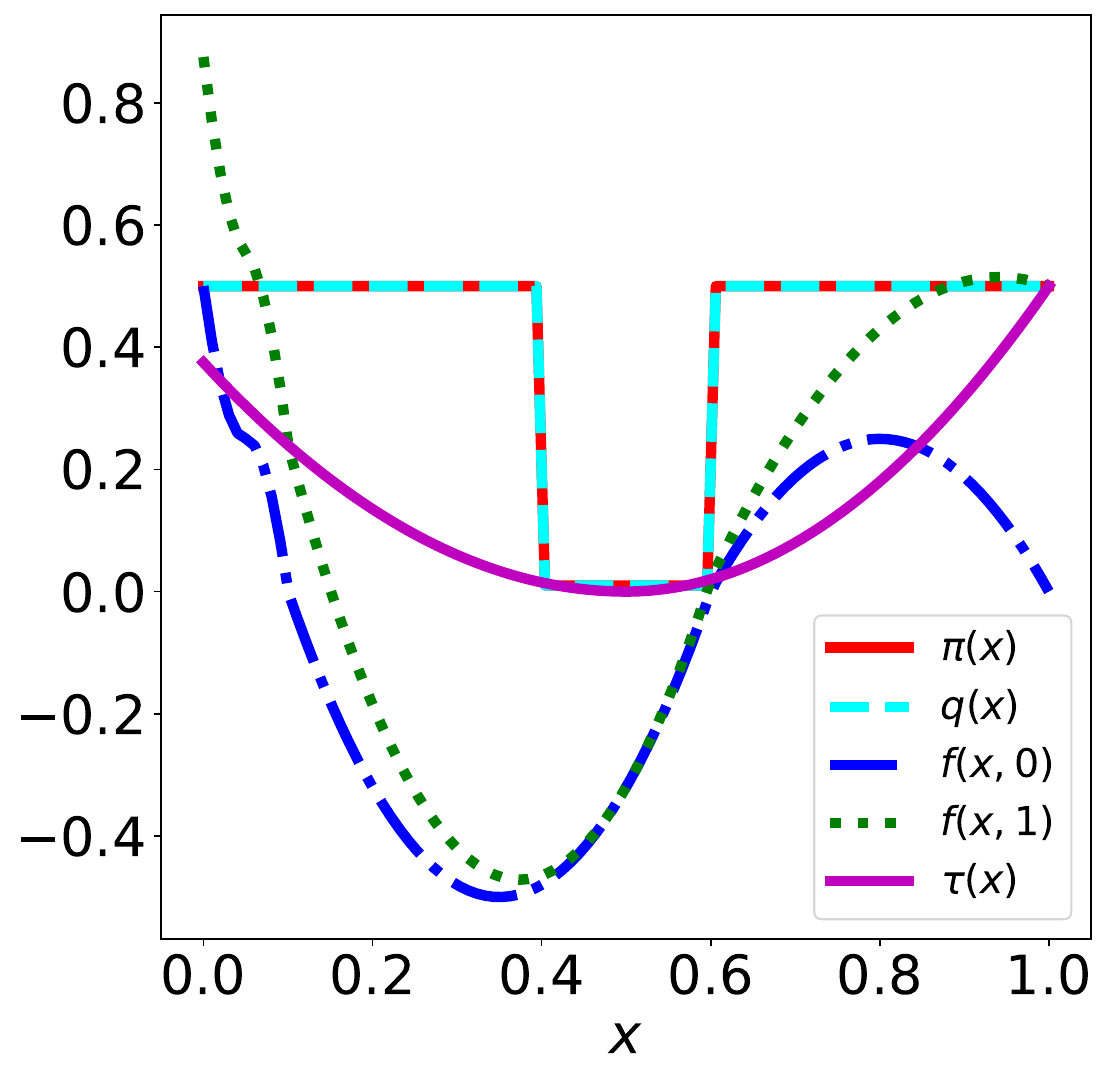}
\end{minipage} 
\caption{Left/right: after \texttt{Option\_1}/\texttt{Option\_2}, the
  distribution of $\pi(X)$.  \texttt{Option\_1} is to collect data if
  $x \in \mc{X}_2 =(0.4,0.6)$ and \texttt{Option\_2} is to collect data if
  $x \in \mc{X}_2=(0,0.1) \cup (0.9,1)$.  \texttt{Option\_1} appears better
  from the propensity score perspective (after sampling, the propensity score is
  0.03 for \texttt{Option\_1} vs 0.01 for \texttt{Option\_2}).  As we can see
  later, \texttt{Option\_2} is the better option.}
\label{fig:sim-data-collection-viz}
\end{figure}

As a starting point, we focus on a simple scenario where the analyst already
has two options in mind and needs to decide which one to proceed with.  For each
sampling option $r$, we can define the data
\begin{align*}
    \widetilde{\mc{D}}(r)  \defeq \set{x_i, \Tilde{\pi}_i(r) ,\Tilde{Z}_i(\Tilde{\pi}_i(r))}_{i=1}^n,
\end{align*}
as the realized data containing the updated treatment information,
where   $\Tilde{\pi}_i(r)$ denotes the updated propensity score for unit $i$
under option $r$.
Note that
the dataset $\widetilde{\mc{D}}(r)$ does not involve $Y$ and therefore can be
computed prior to the actual sampling.  This is because our approach
explicitly considers all possible outcome functions instead of relying on the
actual observed $Y$ data.  We can then compare different sampling options
through
\begin{align}
    \E_Z \left[T\left(\MP_\epsilon\paran{\widetilde{\mc{D}}(r)} \right) \right]\label{eqn:m-partial-sampling-metric}
\end{align}  
for some $\epsilon$, where $T(\cdot)$ is the length 
of the
interval   $\MP$  and
the expectation is taken over the randomness in $Z$.  For a fixed $\epsilon$,
the metric~\eqref{eqn:m-partial-sampling-metric} estimates the expected
uncertainty of regions with $q(x) < \epsilon$ after the additional sampling.

The simulation setup for this setting is depicted in
Figure~\ref{fig:simulation-data-collection-dgp}.  We randomly sample $n=500$
points as the initial data.  We imagine a scenario, with certain operational
constraints being present, the analyst faces two feasible sampling options to
collect new data, as depicted in Figure~\ref{fig:sim-data-collection-viz}:
\texttt{Option\_1} is to collect data if $x \in \mc{X}_2 =(0.4,0.6)$ and
\texttt{Option\_2} is to collect data if
$x \in \mc{X}_2=(0,0.1) \cup (0.9,1)$.  In addition, we consider another
policy \texttt{Oracle} that samples all data $x \in \mc{X}_1 \cup \mc{X}_2$,
which is the entire region without overlap, with probability 50\% for each point.
Therefore, the total number of data points
that can be sampled are roughly the same for all
three options.  As we see later, the policy \texttt{Oracle} performs well
since it minimizes the region of limited overlap, but such random sampling may
not always be feasible in practice.  
If the analyst decides to  collect some
data $x_i$, we assume she can set $\pi(x_i) = \half$.

\texttt{Option\_1} and \texttt{Option\_2}, the two sampling options,
are nearly
identical solely based on propensity scores as the propensity scores in these
regions are nearly the same.  This makes the options hard to distinguish,
especially in high-dimensional feature spaces.  However, we know intuitively
from Figure~\ref{fig:sim-data-collection-viz} that \texttt{Option\_2}
is better, as \texttt{Option\_1} samples data that can be extrapolated more
easily from existing data: both the data from $x \in [0.1,0.4]$ and
$x \in [0.6,0.9]$ can be used to infer the values for $x \in [0.4,0.6]$, so
the extra value of sampling these points is smaller compared to that in
\texttt{Option\_2}.  Therefore, under \texttt{Option\_2}, we need to perform
less unreliable extrapolation, which is further reduced under \texttt{Oracle}.

\begin{table}[h!]
\centering
\begin{tabular}{|l|c|c|c|c|}
\hline
\textbf{Option} & $L = 4.28$ & $L = 5.48$ & $L = 6.65$ & $L = 13.11$ \\
\hline
\texttt{Option\_1} & $0.0958$ & $0.1003$ & $0.1046$ & $0.1283$ \\ \hline
\texttt{Option\_2} & $0.0213$ & $0.0242$ & $0.0271$ & $0.0430$ \\ \hline
\texttt{Oracle}    & $0.0117$ & $0.0126$ & $0.0134$ & $0.0179$ \\
\hline
\end{tabular}
\caption{Comparing three sampling options fixing $\epsilon = 0.04$.
The Lipschitz
constants correspond to
$\Tilde{L}_{p}$~\eqref{eqn:contextualize-L-def-uniform}
for $p \in \set{0.8,0.85,0.9,0.95}$.
Given the data is coming from Figure~\ref{fig:simulation-data-collection-dgp}, the choice of $\epsilon = 0.04$ is not important, as  any $\epsilon < 0.1$ yields the same weight $w_\epsilon$, and hence the same result.
We conduct 10 runs of $Z$ 
to estimate~\eqref{eqn:m-partial-sampling-metric} and only report the mean as the variance is nearly zero. We observe consistent rankings across different options for different choices of $L$.}
\label{table:sim-sample-options-before}
\end{table}

To formalize the above intuition, we compute our
metric~\eqref{eqn:m-partial-sampling-metric} for 
each of the three options
and report them
in Table~\ref{table:sim-sample-options-before} over different Lipschitz
constants.  Similar to Figure~\ref{fig:sim_contextualize}, we perform
contextualization and use Lipschitz constants taking values
$\Tilde{L}_{p}$~\eqref{eqn:contextualize-L-def-uniform} for
$p \in \set{0.8,0.85,0.9,0.95}$.  Our approach formally considers the
extrapolability of outcome functions and designs sampling evaluation based on
this.  Therefore, our metric predicts that \texttt{Option\_2} is better.  To
complement the  worst-case analysis in Table~\ref{table:sim-sample-options-before}, we then evaluate these
two options by repeatedly sampling according to these options and report the
error (the distance between the CI and the estimand) of the resulting CI
generated by $\AIPWP$ in Table~\ref{table:sim-sample-options-after}.

\begin{table}[h!]
\centering
\begin{tabular}{|c|c|c|}
\hline
\textbf{Option} & \textbf{Coverage of $\AIPWP$} & \textbf{Distance of $\AIPWP$} \\
\hline
\texttt{Option\_1}  & $0.051 \pm 0.0136$ & $0.0182 \pm 0.0006$ \\
\hline
\texttt{Option\_2}  & $0.901 \pm 0.0185$ & $0.0009 \pm 0.0002$ \\
\hline
\texttt{Oracle} & $0.985 \pm 0.0075$ & $0.00009 \pm 0.00006$ \\
\hline
\end{tabular}
\caption{
Same setting as Table~\ref{table:sim-sample-options-before}.
Comparing two sampling options fixing $\epsilon = 0.04$.
The error of $\AIPWP$ is also computed using 200   runs and  also include standard errors. 
Compared with Table~\ref{table:sim-sample-options-before}, our metric successfully predicts the ranking of different sampling options.
}
\label{table:sim-sample-options-after}
\end{table}

\subsection{Confidence sequence}

Our framework is also applicable when the analyst conducts continual
sampling/data collection with an arbitrary stopping policy, and we defer those
results to Appendix~\ref{sec:experimenta-results-continual-sampling}.  From
Figure~\ref{fig:sim-conf-seq}, we observe how the uncertainty over $\tau_-$
keeps reducing as the level of limited overlap reduces, and 
we offer the analyst
a flexible way to specify the stopping rule for sampling subject to a given
sampling budget.

Theoretically, guiding continual data collection requires generalizing the
coverage guarantee in~\eqref{eqn:coverage-def} to a \emph{uniform coverage
  property}:
\begin{align}
\inf_{f \in \mathcal{F}} \mathbb{P}_f \left( \tau_{\bm{w}_t}(f) \in \mathcal{C}_t, \, \forall t \geq 1 \right) \geq 1 - \alpha.
\label{eqn:coverage-def-uni}
\end{align}
In particular, this setting includes problems where $\tau_{\bm{w}_t}(f)$
varies over time as the non-overlap region evolves with new data. 
Since the stopping time is not fixed in advance, 
a naive union bound based on a
pre-specified horizon $T$ does not provide valid coverage for arbitrary,
data-dependent stopping rules. 
Instead, we require a confidence
\emph{sequence}.
We assume the analyst conducts continual sampling and that,
at every time $t$, new
data will be collected.  
At each time $t$, one solves
the problem~\eqref{eqn:opt-prob-delta} and finds an estimator
$\hl_t = \bm{w}_t^\top \bm{y}_t$ at $t$ to estimate 
$\tau_{\bm{w}_t}(f)$, which is a
linear combination of the data, and $\bm{y}_t$ is the outcome data at time $t$.  
As we will show in Lemma~\ref{lemma:gaussian-conf-seq-cv},
   $\mc{C}_t$, which generalizes~\eqref{eqn:CI-expression},
   is a  valid   confidence sequence:
\begin{align}
\mc{C}_t
= \hl_t \pm \sd(\hl_t)\,
\cv_{\alpha_t}\!\left(
  \frac{\maxbias_t(\hl_t)}{\sd(\hl_t)}
\right),
\qquad
\alpha_t := \frac{6\alpha}{\pi^2 t^2}.
\label{eqn:conf-seq}
\end{align}
In addition,
$\sd(\hl_t)$ can be computed by noting that
$\hl_t = \bm{w}_t^\top \bm{y}_t$ 
and $\maxbias_t(\hl_t)$ is given by~\eqref{eqn:max-bias}.

%%% Local Variables:
%%% mode: latex
%%% TeX-master: "main"
%%% End:

\section{Discussion}
\label{section:discussion}

This paper develops a practical minimax-based approach for reliable treatment effect inference in scenarios with limited covariate overlap between treated and control groups. By extrapolating the outcome function over the covariate space, our framework produces robust confidence intervals for treatment effects in regions without overlap and mitigates the bias introduced by common practices such as data trimming. Additionally, it serves as a diagnostic tool for identifying subpopulations with limited effective sample sizes and offers principled guidance for addressing challenges associated with poor overlap. Experimental results, across both simulated and real-world datasets, demonstrate the framework’s effectiveness in improving the reliability and interpretability of treatment effect inference.

A key feature of our approach is
a \emph{sensitivity framework} that allows analysts to systematically quantify the uncertainty associated with extrapolating from well-supported (overlap) regions to sparse (non-overlap) regions. 
By parameterizing the smoothness of the outcome function class via a Lipschitz constant (see~\eqref{eqn:def-F-L-Lip-finite-sample}), our method enables users to assess how assumptions on function regularity influence the resulting confidence intervals. This sensitivity analysis is operationalized through a data-driven \emph{contextualization procedure}, which calibrates the Lipschitz constant based on observed covariates and outcomes in the overlap region. The result is a transparent, interpretable trade-off between bias and extrapolation strength—allowing analysts to probe, validate, and communicate the robustness of their causal conclusions.

In practice, analysts often have 
more domain-specific knowledge about outcome smoothness or structural relationships in the data. 
Incorporating this knowledge into the function class can sharpen inference and reduce conservativeness in extrapolated intervals.
Moreover, the sensitivity framework highlights how minimal deviations in smoothness assumptions can affect bias due to trimming,
helping practitioners avoid spurious certainty in under-supported regions.

An exciting direction for future work is extending this framework to off-policy evaluation tasks with richer action spaces (e.g., multi-armed treatments). Such settings require imputing multiple counterfactuals per unit, increasing both the complexity of the function class and the difficulty of computing the associated modulus of continuity. Developing scalable and efficient methods to handle this will be crucial for broader adoption of the minimax approach
in high-stakes applications.

% Acknowledgments---Will not appear in anonymized version

%% ========================== Bibliography =========================  = %%

\bibliographystyle{abbrvnat}

\ifdefined\useorstyle
\setlength{\bibsep}{.0em}
\else
\setlength{\bibsep}{.7em}
\fi
\bibliography{bib}

\ifdefined\useorstyle
 
\ECSwitch 
\ECHead{Appendix}

\else
\newpage
\appendix
\fi

\section{Proof of Lemma~\ref{lemma:minimax-as-matching}}

\label{sec:additional-details-minimax}

In this section, we closely follow the notation
in~\citep{ArmstrongKo21}.  We let $n_1 = |\set{i:z_i = 1}|$ and
$n_0 = |\set{i:z_i = 0}|$ denote the total number of samples with $z=1$ and
$z=0$.  With slight abuse of notation, we let $i \in [n_1]$ denote the $i$-th
sample for those with $z_i=1$ and similarly write $j \in [n_0]$ to denote the
$j$-th sample for those with $z_j=0$.

We first define $m_i=(2z_i-1) f(x_i, z_i)$, and
$r_i=(1-2z_i)f(x_i,1-z_i)$. Then~\citet{ArmstrongKo21} has shown that the
modulus of continuity~\eqref{eqn:def-omega} can be written as
\begin{equation}
\label{eq:dual-problem}
\min_{f\in \mc{F}_L}\frac{1}{2}\sum_{i=1}^{n}\frac{f^2(x_i, z_i)}{\sigma^2(x_i, z_i)}
- \mu \tau_{\bm{w}}(f)/L
= \min_{f\in\mc{F}_1}
\frac{L^2}{2}
\sum_{i=1}^{n}\frac{f^2(x_i, z_i)}{\sigma^2(x_i, z_i)}
- \mu \tau_{\bm{w}}(f),
\end{equation}
which can be further written  as
\begin{align}
 & \frac{1}{2}\sum_{i=1}^{n}\frac{m_i^2}{\sigma^2(z_i)}
  -\mu\left(\sum_{i=1}^{n}w_i(m_i+r_i)
  \right) \nonumber \\
 & +\sum_{i, j\colon z_i=1-z_j=1}\left[\Lambda^{0}_{ij}(r_i-m_j
    - \norm{x_i-x_j}) +\Lambda^{1}_{ij}(r_j- m_i-
    \norm{x_i-x_j})\right].\label{eq:lagrangian-mu}
\end{align} where $\Lambda^{0}_{ij}, \Lambda^{1}_{ij} \ge 0$ 
are the optimal dual variables 
for the
Lipschitz constraints.

Now we are ready 
to prove a stronger version of Lemma~\ref{lemma:minimax-as-matching}.
\begin{lemma}
  Let $\mu \ge 0$ and $\Lambda \ge 0$ be optimal dual variables corresponding
  to the Lipschitz constraints~\eqref{eqn:def-F-L-Lip-finite-sample}. We can
  rewrite $\LFLCI(\bm{w})$~\eqref{eqn:minimax-w-interlval-expression} as
  \begin{align}
    \LFLCI(\bm{w})  & =   \sum_{k=1}^n \Paran{
                      \I{z_k= 0} \paran{\sum_{i: z_i = 1}  \frac{\Lambda_{ik}^1}{\mu} y_i  - w_k y_k}
                      +\I{z_k=1} \paran{w_k y_k - \sum_{j: z_j = 0}  \frac{\Lambda_{kj}^0}{\mu} y_j }
                      } \I{w_k > 0},
  \end{align} 
  Thus, the minimax estimator will impute the counterfactual values as
\begin{align}
  w_k \hat{f}(x_k,1 -z_k) = \sum_{j: z_j = 1- z_k} W_{jk} y_j
  ~~~~\mbox{where}~~~~W_{jk} = 
\begin{cases}
\frac{\Lambda_{jk}^1}{\mu}, & \text{if } z_k = 0 \\ 
 \frac{\Lambda_{kj}^0}{\mu}, & \text{if } z_k = 1.
\end{cases}
\label{eqn:impute-conterfactual-more-specific}
\end{align} 
If $z_k = 0$, then $\sum_{i}  \frac{\Lambda_{ik}^1}{\mu} = w_k$; if $z_k = 1$, then $\sum_{j}  \frac{\Lambda_{kj}^0}{\mu} = w_k$.
\label{lemma:minimax-as-matching-stronger}
\end{lemma}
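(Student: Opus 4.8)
The plan is to derive the expression for $\LFLCI(\bm{w})$ directly from the KKT/stationarity conditions of the Lagrangian~\eqref{eq:lagrangian-mu} and then read off the matching weights. First I would take the Lagrangian~\eqref{eq:lagrangian-mu} in the variables $m_i$ (the observed-arm values, up to sign) and $r_i$ (the counterfactual-arm values, up to sign), and write out the first-order stationarity conditions. Differentiating with respect to $m_i$ gives a linear relation expressing $m_i$ in terms of $\mu$, $w_i$, and the dual variables $\Lambda^{0}, \Lambda^{1}$ incident to unit $i$; differentiating with respect to $r_i$ (which appears only linearly, in the $-\mu w_i r_i$ term and in the Lipschitz-constraint terms) yields a balance condition that forces the incident dual variables to sum to $\mu w_i$ — this is exactly the claimed identity $\sum_i \Lambda_{ik}^1/\mu = w_k$ when $z_k=0$ (and the analogous one when $z_k=1$). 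Note that when $w_k = 0$ the stationarity in $r_k$ forces all incident duals to vanish, which is why the final sum is restricted to $\I{w_k>0}$.

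Next I would substitute these stationarity relations back into the affine-estimator formula~\eqref{eqn:minimax-w-interlval-expression}. Recall $\hat{\tau}_\delta(\bm{w}) = \sum_i \frac{2\omega'(\delta)}{\delta}\frac{f\opt_\delta(x_i,z_i)}{\sigma^2(x_i,z_i)} y_i$; using the correspondence between the $\delta$-parametrization and the $\mu$-penalized program~\eqref{eq:dual-problem} (where $\mu$ plays the role of $\delta\omega'(\delta)/2$ after the rescaling by $L$), the coefficient on $y_i$ becomes, up to the sign bookkeeping introduced by $m_i = (2z_i-1)f(x_i,z_i)$, a linear function of the incident duals. The key algebraic step is to regroup the resulting double sum over pairs $(i,j)$ with $z_i = 1-z_j$ into a sum over $k$, separating the ``own-outcome'' term $w_k y_k$ (coming from the $m_k$ stationarity condition) from the ``imputed counterfactual'' term $\sum_{j:z_j=1-z_k} W_{jk} y_j$, with $W_{jk} = \Lambda^1_{jk}/\mu$ or $\Lambda^0_{kj}/\mu$ according to the sign of $z_k$. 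This gives precisely the displayed formula for $\LFLCI(\bm{w})$, and reading off the counterfactual part gives~\eqref{eqn:impute-conterfactual-more-specific}.

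I would then close the argument by noting that the counterfactual imputation identity $w_k \hat f(x_k, 1-z_k) = \sum_{j:z_j=1-z_k} W_{jk} y_j$ together with the normalization $\sum_j W_{jk} = w_k$ follows immediately from the two pieces established above — the definition of $W_{jk}$ from the regrouped estimator and the $r_k$-stationarity balance condition. Since Lemma~\ref{lemma:minimax-as-matching} is just the statement of~\eqref{eqn:impute-conterfactual-more-specific} in isolation, it is an immediate corollary.

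The main obstacle I anticipate is the sign and index bookkeeping: the transformation $m_i=(2z_i-1)f(x_i,z_i)$, $r_i=(1-2z_i)f(x_i,1-z_i)$ flips signs depending on treatment status, the dual variables $\Lambda^0_{ij}$ and $\Lambda^1_{ij}$ are indexed with a fixed convention ($i$ treated, $j$ control), and one must carefully match which dual appears in the stationarity condition for $m_i$ versus $r_i$ and then again in the regrouping of the estimator — getting the case split on $z_k$ exactly right, including the restriction to $w_k>0$, is where errors are most likely to creep in. A secondary point requiring care is making the identification between the $\delta$-parametrized quantities $(\omega'(\delta), \delta)$ and the penalty parameter $\mu$ of~\eqref{eq:dual-problem} fully explicit, so that the scalar multiplying each $y_i$ in~\eqref{eqn:minimax-w-interlval-expression} matches $\Lambda/\mu$ on the nose rather than merely up to a constant; this is standard in the~\citet{Donoho94}/\citet{ArmstrongKo21} framework but should be stated cleanly.
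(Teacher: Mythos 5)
Your proposal is correct and follows essentially the same route as the paper: both arguments rest on the first-order stationarity conditions of the Lagrangian~\eqref{eq:lagrangian-mu} (the relation $m_j/\sigma^2 = \mu w_j + \sum_i \Lambda^0_{ij}$ from stationarity in $m_j$ and the balance condition $\sum_i \Lambda^1_{ij} = \mu w_j$ from stationarity in $r_j$), followed by substitution into the affine-estimator formula~\eqref{eqn:minimax-w-interlval-expression} and regrouping; the paper simply imports these conditions as (S1)--(S2) from \citet{ArmstrongKo21} rather than re-deriving them. Your added observation that $w_k=0$ together with $\Lambda\ge 0$ forces the incident duals to vanish (justifying the $\I{w_k>0}$ restriction) is a correct point the paper leaves implicit.
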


\begin{proof}[Proof of Lemma~\ref{lemma:minimax-as-matching-stronger}]
We let $f_i = f\opt(x_i,z_i)$ denote the optimal solution 
of~\eqref{eqn:def-omega} and 
$m_i = (2z_i-1) f_i, r_i =(1-2z_i) f_i$.
We also let $\mu, \Lambda^{0}_{ij},\Lambda^{1}_{ij}$ denote the optimal dual variable.
Note that at the optimal solution, the conic constraint must be binding so that 
  $\mu > 0$.
Fix $k$ and
suppose $z_k=0$, from~\eqref{eqn:minimax-w-interlval-expression},
we need to show that 
\begin{align}
 - w_k - \sum_{i \in [n_1]} \frac{\Lambda_{ik}^0}{\mu}
 =  \paran{\sum_{i=1}^n w_i} \frac{f_k}{\sum_{i \in [n_1]} f_i}.
 \label{eqn:matching-lemma-need-to-show}
\end{align} The proof for the case with $z_k=1$ is similar thus omitted.
 From (S1) of Section D.2 
 in~\citep{ArmstrongKo21}, we have
\begin{align*}
\frac{m_j}{\sigma^2} = \mu w_j + \sum_{i \in [n_1]} \Lambda_{ij}^0, \forall j \in [n_0]
%\label{eqn:S1}
\end{align*}
so that
\begin{align}
\frac{m_j}{\sigma^2 \mu} = w_j + \sum_{i \in [n_1]} \frac{\Lambda_{ij}^0}{\mu} , \forall j \in [n_0].
\label{eqn:matching-lemma-part-1}
\end{align} 
From~\citep{ArmstrongKo21}, we know
\begin{align}
\frac{m_i}{\sigma^2} = \mu w_i +  \sum_{j \in [n_0]} \Lambda_{ij}^1,  \forall i \in [n_1].
\label{eqn:S2}
\end{align} 
Summing over~\eqref{eqn:S2},
we have 
\begin{align*}
\sum_{i \in [n_1]} \frac{m_i}{\sigma^2} =  \sum_{i \in [n_1]}  w_i \mu + \sum_{j \
\in [n_0]} \sum_{i \in [n_1]} \Lambda_{ij}^1.
\end{align*}
Since $\sum_{i \in [n_1]} \Lambda_{ij}^1 = \mu w_j$ from~\citep{ArmstrongKo21},
we have 
\begin{align}
    \sum_{i \in [n_1]} \frac{m_j}{\sigma^2}
    =  \sum_{i \in [n_1]}  w_i \mu  + \sum_{i \in [n_1]} \sum_{j \in [n_0]} \Lambda_{ij}^1
    =  \sum_{i \in [n_1]}  w_i \mu  +  \sum_{j \in [n_0]}  w_j \mu  =   
    \paran{\sum_{i=1}^n w_i} \mu.  
    \label{eqn:matching-lemma-part-2}
\end{align}    
Therefore, we arrive at
\begin{align*}
- w_k - \sum_{i \in [n_1]} \frac{\Lambda_{ik}^0}{\mu}
=-\frac{m_k}{\sigma^2 \mu}
= \frac{f_k}{\sigma^2 \mu}
 =  \paran{\sum_{i=1}^n w_i} \frac{f_k}{\sum_{i \in [n_1]} f_i},
\end{align*} where the first 
equality 
follows from~\eqref{eqn:matching-lemma-part-1},
the second one follows from $m_k = -f_k$ as $z_k=0$,
and the last one follows from~\eqref{eqn:matching-lemma-part-1}.
Thus, we show~\eqref{eqn:matching-lemma-need-to-show} and finish the proof.
\end{proof}

\section{Computation of 
$\omega_{\mc{F};\mc{D}}(\delta)$}

\label{sec:algo}

We follow~\citep{ArmstrongKo21}'s
procedure to compute
$\omega_{\mc{F};\mc{D}}(\delta)$.
Most of the components there are directly applicable.
However,
one key component in~\citep{ArmstrongKo21}'s procedure 
is to solve a problem that 
involves finding a    matrix with given row and column sum constraints and certain entries of the matrix must be zero, i.e., Step 2(b) in 
Section A.3 of~\citep{ArmstrongKo21}.
The implementation approach in~\citep{ArmstrongKo21}, which uses matrix inversion,
may be
slow to ensure dual feasibility
$\Lambda^d \ge 0$ 
and thus may need many iterations 
to arrive at a   solution.  
Instead, we solve the problem by converting it into a
max flow algorithm (Algorithm~\ref{algo:max-flow})
instance following~\cite{Schneiderze90}, 
which can be   more efficient than using matrix inversion.
 
\begin{algorithm}
\caption{\texttt{MAX\_FLOW}($p, q, E$)}
\label{algo:max-flow}
\begin{algorithmic}[1]
\State \textbf{Input:} Row sums $p = [p_1, p_2, \ldots, p_m]$, Column sums $q = [q_1, q_2, \ldots, q_n]$, Allowed indices $E$
\State \textbf{Output:} A matrix $A \ge 0$ that satisfies $A_{ij} = 0$ if $(i,j) \not\in E$ and $\sum_j A_{ij} = p_i$ for all $i$ and $\sum_i A_{ij} = q_j$ for all $j$.
\State Create a directed graph $G_f$ with a source node $s$ and a sink node $t$
\For{each row $i$}
    \State Create a node $n_i$
    \State Add an edge from $s$ to $n_i$ with capacity $p_i$
\EndFor
\For{each column $j$}
    \State Create a node $n_j$
    \State Add an edge from $n_j$ to $t$ with capacity $q_j$
\EndFor
\For{each index $(i, j) \in E$}
    \State Add an edge from $n_i$ to $n_j$ with capacity $\infty$
\EndFor
\State Compute the maximum flow from $s$ to $t$ in the graph $G_f$ with costs $c= 1$
\State \Return Matrix $A$ with entries $A_{ij} = f_{n_i,n_j}$, where $f_{n_i,n_j}$ is the flow from node $n_i$ to $n_j$.
\end{algorithmic}
\end{algorithm}

%%% Local Variables:
%%% mode: latex
%%% Tex-master: "main"
%%% End:

\section{Details of   analytical insights}
\label{sec:analytic-example-details}

In this section, we provide details and proofs of results in
Section~\ref{section:analytic}.
We start by proving Lemma~\ref{lemma-extrapolate-start-from-near}.

% \begin{proof}[Proof of Lemma~\ref{lemma-extrapolate-start-from-near}]
\subsection{Proof of Lemma~\ref{lemma-extrapolate-start-from-near}}

Fix $L$ and $\delta$. Let $f\opt$ be an optimal solution of the program~\eqref{eqn:toy-program}. We write
\begin{align*}
f_{i,1} \defeq f\opt(x_i,1), \qquad f_{i,0} \defeq f\opt(x_i,0),
\end{align*}
and define
\begin{align*}
d_{i,j} \defeq L |x_i - x_j| .
\end{align*}
Let
\begin{align*}
W_+ \defeq \{ i : w_i > 0 \}, \qquad W_- \defeq \{ i : w_i = 0 \}
\end{align*}
denote the sets of points with and without weights, respectively.

\medskip
\noindent\textbf{Step 1: Nonnegativity of $f_{i,1}$.}
We first show that for every $i$ with $z_i = 1$,
we can assume that
\begin{align}
f_{i,1} \ge 0 .
\label{eq:f-i-positive-if-z=1}
\end{align}
Indeed, suppose there exists an index $i$ with $z_i = 1$ and $f_{i,1} < 0$. Define a new solution $\hat f$ by
\begin{align*}
\hat f_{i,1} \defeq \max\{f_{i,1},0\} \quad \text{for all } i \text{ with } z_i=1,
\end{align*}
and keep all other coordinates (including all $f_{i,0}$) unchanged. 
By construction, and since the map $x \mapsto \max\{x,0\}$ is 1-Lipschitz,
\begin{align*}
|\hat f_{i,1} - \hat f_{j,1}|
= \abs{ \max\{f_{i,1},0\} -  \max\{f_{j,1},0\}}
\le |f_{i,1} - f_{j,1}|
\le d_{i,j}
\quad\text{for all }i,j.
\end{align*} 
Thus, the Lipschitz constraints remain feasible. Moreover, because the objective of~\eqref{eqn:toy-program} only depends on $f_{i,1}$ at weighted points $i \in W_+$ and these have $w_i>0$ with $z_i=1$, increasing a negative $f_{i,1}$ to $0$ weakly \emph{improves} the objective (while keeping feasibility). 
 So we may assume~\eqref{eq:f-i-positive-if-z=1} without loss of generality.

\medskip
\noindent\textbf{Step 2: Ordering the zero--weight points by distance.}
For $i \in W_-$, define its distance to the positive--weight region by
\begin{align*}
\overrange_i \defeq \min\{\,|x_i - x_j| : j \in W_+\,\}.
\end{align*}
Order the indices in $W_-$ so that
\begin{align*}
\overrange_1 \le \overrange_2 \le \cdots \le \overrange_m ,
\end{align*}
where $\{1,\dots,m\}=W_-$ after relabeling. Our goal is to construct another optimal solution $g$ 
such that
\begin{align}
i<j \;\Longrightarrow\; \overrange_i \le \overrange_j \text{ and } g_{i,1} \ge g_{j,1},
\label{eq:monotone-goal}
\end{align}
i.e., among points in $W_-$, the map $i \mapsto g_{i,1}$ is weakly \emph{decreasing} as the distance~$\overrange_i$ grows.

\medskip
\noindent\textbf{Step 3: Trouble points and one-step modification.}
Starting from the fixed optimal solution $f\opt$, we work only with the coordinates $f_{i,1}$ for $i\in W_-$; all other coordinates ($i\in W_+$ and all $f_{i,0}$) will never be changed.

Call an index $k \in \{1,\dots,m-1\}$ a \emph{trouble point} if
\begin{align}
\overrange_k \le \overrange_{k+1} \quad\text{but}\quad f_k < f_{k+1} .
\label{eq:trouble-def}
\end{align}
If there is no trouble point, then~\eqref{eq:monotone-goal} already holds with $g_{i,1} = f_{i,1}$ and we are done. Otherwise, let $i^*$ be the smallest index that satisfies~\eqref{eq:trouble-def}.

Given such an $i^*$, consider all zero--weight points that are at least as far as $i^*$ and whose value is \emph{too large}, namely
\begin{align}
\mathcal I \defeq \bigl\{\, j \in W_- : \overrange_j \ge \overrange_{i^*},\ f_{j,1} > f_{i^*,1} \,\bigr\}.
\label{eq:I-def}
\end{align}
Define a modified solution $g$ by
\begin{align*}
g_j &\defeq
\begin{cases}
f_{i^*,1}, & j \in \mathcal I,\\ 
f_{j,1}, & j \notin \mathcal I,
\end{cases}
\end{align*}
and keep $g_{i,0} = f_{i,0}$ for all $i$. In words, we leave every weighted point $i\in W_+$ unchanged, and among $W_-$ we replace the values at the indices in $\mathcal I$ by $f_{i^*,1}$ (which is smaller), while leaving all other coordinates unchanged.

\medskip
\noindent\textbf{Step 4: Feasibility and optimality of the modified solution.}
We check that $g$ is feasible and optimal for~\eqref{eqn:toy-program}.

\smallskip
\emph{(i) Objective value.}
By definition of $W_+$ and $W_-$, the objective depends only on $f_{i,1}$ with $i \in W_+$. Since we never changed those coordinates, $g$ has exactly the same objective value as $f\opt$ and is therefore also optimal.

\smallskip
\emph{(ii) Lipschitz constraints.}
For any pair $(p,q)$ we must show that
\begin{align*}
|g_{p,1} - g_{q,1}| \le d_{p,q}.
\end{align*}
The original $f$ satisfied $|f_{p,1} - f_{q,1}|\le d_{p,q}$ for all $p,q$. We compare old and new values case by case:

\begin{itemize}
\item If $p,q \notin \mathcal I$, then $g_{p,1} = f_{p,1}$ and $g_{q,1} = f_{q,1}$, so the constraint is unchanged.
\item If $p,q \in \mathcal I$, then $g_{p,1} = g_{q,1} = f_{i^*,1}$ and 
therefore
\begin{align*}
|g_{p,1} - g_{q,1}| = 0 \le d_{p,q}.
\end{align*}
\item If $p \in \mathcal I$ and $q \notin \mathcal I$  , then
\begin{align*}
|g_{p,1} - g_{q,1}|
= \bigl|f_{i^*,1} - f_{q,1}\bigr|.
\end{align*}
By construction of $\mc{I}$~\eqref{eq:I-def},
$f_{i^*,1} \le f_{p,1}$ for every $p\in\mathcal I$.
There are two cases:
\begin{enumerate}
\item If $q \in W_+$, 
then we have
$d_{i^*,q}
\le  d_{p,q}$ so that
\begin{align*}
|g_{p,1} - g_{q,1}|
= \bigl|f_{i^*,1} - f_{q,1}\bigr|
\le  d_{i^*,q}
\le  d_{p,q}.
\end{align*}
\item If  $q \in W_-$,  $\eta_q \le \eta_{i^*}$, i.e., $i^*$ is in the middle of $p$ and $q$,
then we also have 
$d_{i^*,q}
\le  d_{p,q}$ so that
\begin{align*}
|g_{p,1} - g_{q,1}|
= \bigl|f_{i^*,1} - f_{q,1}\bigr|
\le  d_{i^*,q}
\le  d_{p,q}.
\end{align*}
\item  If  $q \in W_-$,  $\eta_q > \eta_{i^*}$,  
in this case,  $q \not\in \mc{I}$
implies
$f_{q,1} \le f_{i^*,1} \le f_{p,1}$
so 
\begin{align*}
    |g_{p,1} - g_{q,1}|
= \bigl|f_{i^*,1} - f_{q,1}\bigr|
\le \bigl|f_{p,1} - f_{q,1}\bigr|
 \le d_{p,q}.
\end{align*}
\end{enumerate}

\item If $p \not\in \mathcal I$ and $q \in \mathcal I$, 
this is similar to the previous case.
\end{itemize}
Therefore $g$ is feasible and optimal.
In summary, we have constructed a solution $g$ with
\begin{align*}
i<j \le i^* + 1
\;\Longrightarrow\; \overrange_i \le \overrange_j \text{ and } g_{i,1} \ge g_{j,1},
\end{align*}
Proceeding in the same way as above, we can remove all trouble points
and
we complete proving~\eqref{eq:monotone-goal}.

\subsection{The example
visualized 
in Section~\ref{section:analytic}}
  
In Lemma~\ref{lemma:toy-program-non-overlap-k}, we provide the details of
 the example
visualized 
in Section~\ref{section:analytic}, which is summarized in
Figure~\ref{fig:example-toy-analytic}
and
Figure~\ref{fig:example-toy-vs-eta}.
Before discussing it, we present
the following technical lemma which will be useful.

\begin{lemma}
Let $\bm{K}\in\mathbb{R}^{n\times m}$ have full column rank and suppose
the feasible set
\[
S := \left\{\bm{w}\in\mathbb{R}^n : \|\bm{w}\| \le R,\ \bm{K}^\top \bm{w} = \bm{d}\right\}
\]
is nonempty. Define
\[
\bm{A} := \bm{K}^\top \bm{K}, \qquad 
\bm{b} := \bm{K}^\top \bm{u}, \qquad
\gamma := \|\bm{u}\|^2,
\]
and
\[
\bm{w}_0 := \bm{K}\bm{A}^{-1}\bm{d}, \qquad
\rho^2 := \|\bm{w}_0\|^2 = \bm{d}^\top \bm{A}^{-1}\bm{d}.
\]
Let 
\[
\bm{P} := \bm{I} - \bm{K}\bm{A}^{-1}\bm{K}^\top
\]
denote the orthogonal projector onto $\ker(\bm{K}^\top)$. Then $\rho \le R$ and
\[
\max_{\bm{w}\in S} \set{\bm{u}^\top \bm{w}}
= \bm{u}^\top \bm{w}_0 + \|\bm{P}\bm{u}\| \sqrt{R^2 - \rho^2}
= \bm{b}^\top \bm{A}^{-1}\bm{d}
  + \sqrt{\gamma - \bm{b}^\top \bm{A}^{-1}\bm{b}}\,
    \sqrt{R^2 - \bm{d}^\top \bm{A}^{-1}\bm{d}}.
\]
\label{lemma:ball-linear}
\end{lemma}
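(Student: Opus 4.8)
The plan is to reduce the constrained maximization to an unconstrained one by parametrizing the affine feasible set $\{\bm{w} : \bm{K}^\top \bm{w} = \bm{d}\}$ via its minimum-norm point plus the kernel of $\bm{K}^\top$. First I would verify that $\bm{w}_0 := \bm{K}\bm{A}^{-1}\bm{d}$ is indeed feasible for the equality constraint: since $\bm{A} = \bm{K}^\top\bm{K}$ is invertible (full column rank), $\bm{K}^\top \bm{w}_0 = \bm{A}\bm{A}^{-1}\bm{d} = \bm{d}$. Moreover $\bm{w}_0 \in \mathrm{range}(\bm{K}) = \ker(\bm{K}^\top)^\perp$, so $\bm{w}_0$ is orthogonal to every element of $\ker(\bm{K}^\top)$; this makes $\bm{w}_0$ the minimum-norm solution of $\bm{K}^\top\bm{w}=\bm{d}$. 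Nonemptiness of $S$ then forces $\rho = \|\bm{w}_0\| \le \|\bm{w}\| \le R$ for any feasible $\bm{w}$, giving $\rho \le R$.

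Next I would write any feasible $\bm{w} \in S$ as $\bm{w} = \bm{w}_0 + \bm{v}$ with $\bm{v} \in \ker(\bm{K}^\top)$, i.e. $\bm{v} = \bm{P}\bm{v}$. By orthogonality, $\|\bm{w}\|^2 = \|\bm{w}_0\|^2 + \|\bm{v}\|^2 = \rho^2 + \|\bm{v}\|^2$, so the norm constraint $\|\bm{w}\|\le R$ is equivalent to $\|\bm{v}\| \le \sqrt{R^2 - \rho^2}$. The objective becomes $\bm{u}^\top \bm{w} = \bm{u}^\top \bm{w}_0 + \bm{u}^\top \bm{v} = \bm{u}^\top\bm{w}_0 + (\bm{P}\bm{u})^\top \bm{v}$, using that $\bm{v} = \bm{P}\bm{v}$ and $\bm{P}$ is a symmetric projector. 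Maximizing a linear functional over the ball $\{\bm{v}\in\ker(\bm{K}^\top) : \|\bm{v}\|\le \sqrt{R^2-\rho^2}\}$ is the standard Cauchy–Schwarz computation: the maximum is $\|\bm{P}\bm{u}\|\sqrt{R^2-\rho^2}$, attained at $\bm{v} = \sqrt{R^2-\rho^2}\,\bm{P}\bm{u}/\|\bm{P}\bm{u}\|$ (or $\bm{v}=\bm{0}$ if $\bm{P}\bm{u}=\bm{0}$). This yields the first displayed equality $\max_{\bm{w}\in S}\bm{u}^\top\bm{w} = \bm{u}^\top\bm{w}_0 + \|\bm{P}\bm{u}\|\sqrt{R^2-\rho^2}$.

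Finally, I would translate this into the stated closed form in terms of $\bm{A}, \bm{b}, \gamma, \bm{d}$. We have $\bm{u}^\top\bm{w}_0 = \bm{u}^\top\bm{K}\bm{A}^{-1}\bm{d} = \bm{b}^\top\bm{A}^{-1}\bm{d}$; $\rho^2 = \bm{w}_0^\top\bm{w}_0 = \bm{d}^\top\bm{A}^{-1}\bm{K}^\top\bm{K}\bm{A}^{-1}\bm{d} = \bm{d}^\top\bm{A}^{-1}\bm{d}$; and $\|\bm{P}\bm{u}\|^2 = \bm{u}^\top\bm{P}\bm{u} = \bm{u}^\top\bm{u} - \bm{u}^\top\bm{K}\bm{A}^{-1}\bm{K}^\top\bm{u} = \gamma - \bm{b}^\top\bm{A}^{-1}\bm{b}$, using $\bm{P}^2=\bm{P}=\bm{P}^\top$. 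Substituting gives the second equality. The only genuinely delicate points are bookkeeping ones: confirming $\bm{P}$ really is the orthogonal projector onto $\ker(\bm{K}^\top)$ (it is symmetric, idempotent, annihilates $\mathrm{range}(\bm{K})$, and fixes $\ker(\bm{K}^\top)$), and handling the degenerate case $\bm{P}\bm{u}=\bm{0}$ where the square-root term vanishes and the maximum is simply $\bm{b}^\top\bm{A}^{-1}\bm{d}$; neither presents a real obstacle, so this is essentially a routine linear-algebra argument once the kernel decomposition is set up.
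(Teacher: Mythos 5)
Your proposal is correct and follows essentially the same route as the paper's proof: parametrize the affine set via the minimum-norm solution $\bm{w}_0$ plus $\ker(\bm{K}^\top)$, use orthogonality to split the norm constraint, and apply Cauchy--Schwarz on the residual ball. The only additions are your explicit checks of the projector identities and the degenerate case $\bm{P}\bm{u}=\bm{0}$, which the paper leaves implicit.
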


\begin{proof}[Proof of Lemma~\ref{lemma:ball-linear}]
Define $\bm{w}_0$ as the minimum--norm solution of the affine constraint:
\[
\bm{w}_0 
:= \arg\min\{\|\bm{w}\| : \bm{K}^\top \bm{w} = \bm{d}\}.
\]
Solving the normal equations gives $\bm{w}_0 = \bm{K}\bm{A}^{-1}\bm{d}$ with
$\bm{A} = \bm{K}^\top \bm{K}$.  
 Let
\[
S_{\mathrm{aff}} := \{\bm{w} : \bm{K}^\top \bm{w} = \bm{d}\}.
\] 
Thus, every feasible
$\bm{w} \in S_{\mathrm{aff}} $ can be written uniquely as
\[
\bm{w} = \bm{w}_0 + \bm{z}, \qquad \bm{z} \in \ker(\bm{K}^\top).
\] 
Using this decomposition, the original problem is equivalent to
\[
\max_{\bm{z}\in\ker(\bm{K}^\top)}
\left\{\bm{u}^\top(\bm{w}_0+\bm{z}) : \|\bm{w}_0+\bm{z}\| \le R\right\}.
\]
Since
$\bm{w}_0 \perp \ker(\bm{K}^\top)$,
for every $\bm{z}\in\ker(\bm{K}^\top)$,
\[
\|\bm{w}_0 + \bm{z}\|^2
= \|\bm{w}_0\|^2 + \|\bm{z}\|^2
= \rho^2 + \|\bm{z}\|^2.
\]
The constraint $\|\bm{w}_0+\bm{z}\| \le R$ becomes
\[
\|\bm{z}\|^2 \le R^2 - \rho^2,
\]
so feasibility implies $\rho \le R$.
Moreover, for $\bm{z}\in\ker(\bm{K}^\top)$,  
\[
\bm{u}^\top \bm{z} = (\bm{P}\bm{u})^\top \bm{z},
\]
since $\bm{P}$ is the orthogonal projector onto $\ker(\bm{K}^\top)$.
Thus the problem reduces to
\[
\max_{\bm{z}\in \ker(\bm{K}^\top)}
\left\{\bm{u}^\top \bm{w}_0 +  \bm{u}^\top \bm{z},
  \|\bm{z}\| \le \sqrt{R^2 - \rho^2}\right\} 
  = 
\max_{\bm{z}\in \R^n }
\left\{\bm{u}^\top \bm{w}_0 + (\bm{P}\bm{u})^\top \bm{z},
  \|\bm{z}\| \le \sqrt{R^2 - \rho^2}\right\}.
\]
Since
\[
\max_{\|\bm{z}\| \le \sqrt{R^2 - \rho^2}}
(\bm{P}\bm{u})^\top \bm{z} = \|\bm{P}\bm{u}\| \sqrt{R^2 - \rho^2},
\]
\[
\max_{\bm{w}\in S} \bm{u}^\top \bm{w}
= \bm{u}^\top \bm{w}_0 + \|\bm{P}\bm{u}\| \sqrt{R^2 - \rho^2}.
\]

Finally, using $\bm{w}_0 = \bm{K}\bm{A}^{-1}\bm{d}$ and
$\bm{P} = \bm{I} - \bm{K}\bm{A}^{-1}\bm{K}^\top$,
we obtain
\[
\bm{u}^\top \bm{w}_0 = \bm{b}^\top \bm{A}^{-1}\bm{d},
\quad
\|\bm{P}\bm{u}\|^2
= \|\bm{u}\|^2 - \bm{b}^\top \bm{A}^{-1}\bm{b}
= \gamma - \bm{b}^\top \bm{A}^{-1}\bm{b},
\]
and $\rho^2 = \bm{d}^\top \bm{A}^{-1}\bm{d}$, which gives the explicit
closed form in the statement.
\end{proof}

\begin{lemma}
Assume the variance $\sigma^2 = 1$.
Let $\overrange, \xi > 0$, and $k>1$.
Suppose there are
$n$ control samples with $Z=0$ at $x=-\xi$
and $n$ treatment samples with $Z=1$ at $x=\xi$; these form the
overlap region.
In addition, suppose there are
$kn$ control samples with $Z=0$ at $x=-\xi-\overrange$
and
$kn$ treatment samples with $Z=1$ at $x=\xi+\overrange$.
These latter points are in the non–overlap region, and their distance
$\overrange$ to the overlap region measures the level of overlap.
Thus,  the modulus of
continuity is
\begin{align*}
    \omega(\delta)
:= \frac{2k}{k+1} 
\max_{f \in \mc{F}_L, \sum_i f_{i,z_i}^2 \le \frac{\delta^2}{4}} 
\bigl[ (f_{x_1,1}-f_{x_1,0}) + (f_{x_4,1}-f_{x_4,0}) \bigr].
\end{align*}
Define the constants
\begin{align*}
\gamma
&:= \frac{2}{n}\Bigl(1+\frac{1}{k}\Bigr), \\
C
&:= \frac{2kn}{k+1} L^2 \overrange^2, \\
\delta_c
&:= \frac{2\sqrt{2nk(k+1)}}{k-1}\,L\overrange, \\
s(\delta)
&:= \sqrt{ \frac{\delta^2}{4} - C }.
\end{align*}

Then the worst–case bias and standard deviation of $\hat{\tau}_\delta$
are given by:
\begin{align}
\begin{split}
\maxbias(\hat{\tau}_\delta)
&=
\begin{cases}
\displaystyle
\frac{2L(2\xi + \overrange) k}{k+1},
&
\text{if } \delta \le \delta_c
\;\;\bigl(\text{i.e.\ } 
L\overrange \ge \frac{(k-1)\,\delta}{2\sqrt{2nk(k+1)}}\bigr), \\ 
\displaystyle
k
\left[
\frac{2L(2\xi + \overrange)}{(k+1)}
+
\frac{2(k-1)}{(k+1)^2}\,L\overrange 
-
\frac{C}{(k+1)}\,
\frac{\sqrt{\frac{8}{n(k+1)}}}{s(\delta)}
\right],
&
\text{if } \delta > \delta_c
\;\;\bigl(\text{i.e.\ } 
L\overrange < \frac{(k-1)\,\delta}{2\sqrt{2nk(k+1)}}\bigr),
\end{cases} \\ 
\sd(\hat{\tau}_\delta)
&=
\begin{cases}
\displaystyle 
\frac{k}{k+1}\sqrt{\frac{2}{n}\Bigl(1+\frac{1}{k}\Bigr)},
&
\text{if } \delta \le \delta_c, \\ 
\displaystyle 
\frac{\sqrt{2} k}{(k+1) \sqrt{n(k+1)}}\,
\frac{\delta}{s(\delta)},
&
\text{if } \delta > \delta_c.
\end{cases}
\end{split}
\label{eqn:worst-case-bias-var-kn}
\end{align}
\label{lemma:toy-program-non-overlap-k}
\end{lemma}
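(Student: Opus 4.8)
The plan is to reduce the functional optimization in~\eqref{eqn:def-omega} to a two-variable problem, solve it in closed form on the two regimes $\delta\le\delta_c$ and $\delta>\delta_c$, and then obtain $\sd(\hat\tau_\delta)$ and $\maxbias(\hat\tau_\delta)$ from~\eqref{eqn:std-formula} and~\eqref{eqn:max-bias-equiv-formula}. Write the four covariate locations as $x_1=-\xi-\overrange$, $x_2=-\xi$, $x_3=\xi$, $x_4=\xi+\overrange$; the decision variables are the values $f(x_\ell,d)$, of which $a:=f(x_1,0)$, $b:=f(x_2,0)$, $c':=f(x_3,1)$, $e':=f(x_4,1)$ are the \emph{observed} ones (they enter the budget $\sum_i f(x_i,z_i)^2\le\delta^2/4$ with multiplicities $kn,n,n,kn$ since $\sigma^2=1$), while $f(x_1,1),f(x_2,1),f(x_3,0),f(x_4,0)$ are counterfactuals. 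The first step is a symmetry reduction: since $\mc F_L$ decouples across the two treatment layers and is centrosymmetric, and the design is invariant under $(x,z)\mapsto(-x,1-z)$ (which also fixes the weight vector), the linear, involutive map $T\colon f\mapsto g$ with $g(x,z):=-f(-x,1-z)$ preserves both feasibility and the objective value; averaging $f$ with $Tf$ over the convex feasible set then yields an optimal solution fixed by $T$, i.e.\ with $e'=-a$, $c'=-b$, $f(x_1,1)=-f(x_4,0)$, $f(x_2,1)=-f(x_3,0)$, and the budget becomes $kn\,a^2+n\,b^2\le\delta^2/8$.

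Next I would eliminate the counterfactuals. Lipschitzness within the treated layer forces $f(x_1,1)\le\min\set{c'+L(2\xi+\overrange),\,e'+L(2\xi+2\overrange)}$ and within the control layer $f(x_4,0)\ge\max\set{a-L(2\xi+2\overrange),\,b-L(2\xi+\overrange)}$, and the McShane--Whitney extensions built from the observed anchors attain these bounds while remaining valid Lipschitz functions (the two layers being independent, the only compatibility requirement among the observed anchors is $|a-b|\le L\overrange$, which after the symmetry identification is the same as $|c'-e'|\le L\overrange$). Using $e'=-a$, $c'=-b$ and $|a-b|\le L\overrange$ (so that $-b+L(2\xi+\overrange)$ is the active branch of the $\min$), the bracketed objective $(f(x_1,1)-f(x_1,0))+(f(x_4,1)-f(x_4,0))$ collapses to $2\big(L(2\xi+\overrange)-(a+b)\big)$, giving
\[
\omega(\delta)=\frac{4k}{k+1}\Big(L(2\xi+\overrange)+\max\set{-(a+b):\ kn\,a^2+n\,b^2\le \tfrac{\delta^2}{8},\ |a-b|\le L\overrange}\Big).
\]
For the inner program I rescale $(a,b)\mapsto(\sqrt{kn}\,a,\sqrt n\,b)$, turning the budget into a Euclidean ball of radius $\rho:=\delta/(2\sqrt2)$. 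Dropping the Lipschitz constraint, Cauchy--Schwarz (equivalently Lemma~\ref{lemma:ball-linear} with an empty equality block) gives the maximizer $a=-\rho/\sqrt{kn(k+1)}$, $b=ka$, of value $\rho\sqrt{(k+1)/(kn)}$; this point has $a-b=(k-1)\rho/\sqrt{kn(k+1)}$, which is $\le L\overrange$ exactly when $\delta\le\delta_c$, so in that regime it is the true optimum. For $\delta>\delta_c$ the constraint $b=a-L\overrange$ binds, and I would apply Lemma~\ref{lemma:ball-linear} with $\bm K$ encoding this single affine equality (after the same rescaling) to get $\max\set{-(a+b)}=\tfrac{(k-1)L\overrange}{k+1}+\tfrac{\sqrt2\,s(\delta)}{\sqrt{n(k+1)}}$; the lemma's feasibility hypothesis $\bm d^\top\bm A^{-1}\bm d\le R^2$ holds because $\delta_c$ already exceeds the threshold $2\sqrt{2kn/(k+1)}\,L\overrange$ at which $s(\delta)$ becomes real.

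Finally I would substitute into $\omega(\delta)$ and differentiate. On $\delta\le\delta_c$, $\omega$ is affine in $\delta$, so $\sd(\hat\tau_\delta)=\omega'(\delta)$ is the constant $\sqrt{2k/((k+1)n)}$ and $\maxbias(\hat\tau_\delta)=\tfrac12(\omega-\delta\omega')$ equals the constant term $\tfrac12\cdot\tfrac{4k}{k+1}L(2\xi+\overrange)$; these reproduce the first case of~\eqref{eqn:worst-case-bias-var-kn}. On $\delta>\delta_c$ only $s(\delta)$ carries the $\delta$-dependence, with $s'(\delta)=\delta/(4s(\delta))$, which gives $\sd(\hat\tau_\delta)=\omega'(\delta)=\tfrac{\sqrt2\,k}{(k+1)\sqrt{n(k+1)}}\cdot\tfrac{\delta}{s(\delta)}$, and the algebraic identity $4s(\delta)^2-\delta^2=-4C$ yields $s(\delta)-\delta s'(\delta)=-C/s(\delta)$, after which $\maxbias(\hat\tau_\delta)=\tfrac12(\omega-\delta\omega')$ simplifies---by collecting the $(2\xi+\overrange)$, $L\overrange/(k+1)$, and $C/s(\delta)$ contributions---to the stated expression. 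I expect the genuinely delicate part to be the first two steps, namely the symmetry argument and the McShane--Whitney reduction that makes the counterfactuals disappear; once the problem is in the displayed two-variable form, locating the breakpoint $\delta_c$ and carrying out the differentiation are routine.
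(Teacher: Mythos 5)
Your proposal is correct and reaches the same closed forms, but it executes the reduction in a genuinely different order from the paper. The paper first eliminates the counterfactuals to obtain a four-variable quadratic program in $\bm{v}=(a_1,a_2,b_3,b_4)$ with the two linear constraints $|a_1-a_2|\le L\overrange$ and $|b_3-b_4|\le L\overrange$, and only then invokes symmetry---somewhat informally, asserting that a maximizer exists with $|\bm{h}_1^\top\bm{v}^\star|=|\bm{h}_2^\top\bm{v}^\star|$---before applying Lemma~\ref{lemma:ball-linear} with a two-column $\bm{K}$. You instead front-load the symmetry: the involution $T\colon f\mapsto -f(-\cdot,1-\cdot)$ preserves the (invariant) design, the quadratic budget, the Lipschitz class, and the objective, so averaging $f$ with $Tf$ over the convex feasible set produces a $T$-fixed optimizer; this collapses the problem to two variables with a single linear constraint, after which Regime~I is bare Cauchy--Schwarz and Regime~II is Lemma~\ref{lemma:ball-linear} with a one-dimensional equality block. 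The two routes are arithmetically equivalent (your inner value is exactly half the paper's $\bm{c}^\top\bm{v}$, compensated by your factor $\tfrac{4k}{k+1}$ versus the paper's $\tfrac{2k}{k+1}$), and I verified that your Regime~I/II values, the breakpoint $\delta_c$, and the final differentiation all match~\eqref{eqn:worst-case-bias-var-kn}. What your version buys is a rigorous justification of the symmetry step that the paper only gestures at, plus a smaller QP; what it costs is that the McShane--Whitney elimination must be argued after the identification $e'=-a$, $c'=-b$, so you should be explicit (as you are) that the only compatibility condition surviving in each layer is $|a-b|\le L\overrange$ and that this same inequality is what selects the active branch of the $\min$ defining $f(x_1,1)$.
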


\begin{proof}[Proof of Lemma~\ref{lemma:toy-program-non-overlap-k}]
Let
\begin{align*}
x_1 &=   - \xi - \eta, &
x_2 &=   - \xi, &
x_3 &=   \xi, &
x_4 &=   \xi + \eta .
\end{align*}
Write
\begin{align*}
a_i := f(x_i,0), \qquad
b_i := f(x_i,1), \qquad i = 1,\dots,4 .
\end{align*}
The distances between the points satisfy
\begin{align*}
|x_2 - x_1| = |x_4 - x_3| = \eta,
\qquad
|x_3 - x_2| = 2\xi,
\qquad
|x_4 - x_2| = 2\xi + \eta .
\end{align*}

We now consider the non-uniform quadratic weights
\begin{align*}
(n_1,n_2,n_3,n_4) = (kn, n, n, kn),
\end{align*}
so that the quadratic constraint is
\begin{align}
kn\,a_1^2 + n\,a_2^2 + n\,b_3^2 + kn\,b_4^2 \le \frac{\delta^2}{4}.
\label{eqn:quad-knnnk}
\end{align}
The modulus of continuity for this configuration is
\begin{align*}
\omega(\delta)
&= \max_f \frac{2k}{k+1} \big[(f_{x_1,1} - f_{x_1,0}) + (f_{x_4,1} - f_{x_4,0})\big] \\
&= \max_{(a_i,b_i)} \frac{2k}{k+1} \big[(b_1 - a_1) + (b_4 - a_4)\big]
\end{align*}
subject to the quadratic constraint~\eqref{eqn:quad-knnnk} and the Lipschitz constraints
\begin{align*}
|a_i - a_j| &\le L |x_i - x_j|, &
|b_i - b_j| &\le L |x_i - x_j|,
\qquad i \neq j .
\end{align*}

\paragraph{Step 1: Reduction to a problem in $\bm{v}$.}
Introduce
\begin{align*}
\bm{v} := (a_1,a_2,b_3,b_4)^\top,
\qquad
\bm{w} := (a_3,a_4,b_1,b_2)^\top.
\end{align*}
The quadratic constraint~\eqref{eqn:quad-knnnk} involves only $\bm{v}$ and can
be written as
\begin{align*}
\bm{v}^\top \bm{Q} \bm{v} \le \frac{\delta^2}{4},
\qquad
\bm{Q} := \diag(kn,n,n,kn).
\end{align*}

For each fixed $\bm{v}$, define
\begin{align}
F(\bm{v})
:= \max \Bigl\{ (b_1 - a_1) + (b_4 - a_4)
  : (a_i,b_i) \text{ satisfy all Lipschitz constraints} \Bigr\} .
  \label{eqn:F-v-def-knnnk}
\end{align}
Let
\begin{align}
\mathcal{V}
:= \Bigl\{ \bm{v} : |a_1 - a_2| \le L\eta,\ |b_3 - b_4| \le L\eta \Bigr\}. \label{eqn:mc-V-def-knnnk}
\end{align}
These are exactly the Lipschitz conditions that involve only the coordinates
in $\bm{v}$. Then the original problem can be written as
\begin{align}
\omega(\delta)
= \frac{2k}{k+1}
\max_{\bm{v} \in \mathcal{V},\ \bm{v}^\top \bm{Q} \bm{v} \le \delta^2/4}
F(\bm{v}). \label{eqn:omega-in-F-knnnk}
\end{align}

\paragraph{Step 2: Remove redundant Lipschitz constraints.}

Assume $|a_1 - a_2| \le L\eta$. Suppose we choose $a_3,a_4$ such that
\begin{align*}
|a_3 - a_2| &\le L (2\xi), \\
|a_4 - a_3| &\le L \eta.
\end{align*}
Then all Lipschitz constraints for the $a_i$ follow by the triangle inequality. 
Thus, \emph{conditional on $|a_1-a_2|\le L\eta$}, it suffices to enforce
$|a_2 - a_3|  \le L (2\xi)$ and
$|a_3 - a_4| \le L \eta$.
 
Similarly, \emph{conditional on $|b_3-b_4|\le L\eta$}, it suffices to check
the Lipschitz bounds $|b_3 - b_2|  \le L (2\xi)$ and
$|b_2 - b_1| \le L \eta$.

\paragraph{Step 3: Explicit computation of $F(\bm{v})$~\eqref{eqn:F-v-def-knnnk}.}

Fix $\bm{v} = (a_1,a_2,b_3,b_4)^\top \in \mathcal{V}$~\eqref{eqn:mc-V-def-knnnk}.

\medskip 
We can verify that
\begin{align*}
a_4^\star(\bm{v}) 
&= \min\Bigl\{a_4: |a_2 - a_3|  \le L (2\xi),\ |a_3 - a_4| \le L \eta\Bigr\}
 = a_2 - L(2\xi + \eta),
\end{align*}
and similarly,
\begin{align*}
b_1^\star(\bm{v})
&= \max\Bigl\{b_1: |b_3 - b_2|  \le L (2\xi),\ |b_2 - b_1| \le L \eta\Bigr\}
 = b_3 + L(2\xi + \eta).
\end{align*}
 
\medskip
Combining the values of $a_4^\star(\bm{v})$ and $b_1^\star(\bm{v})$, we have
\begin{align*}
F(\bm{v})
&= (b_1^\star - a_1) + (b_4 - a_4^\star) \\
&= \big(b_3 + L(2\xi+\eta) - a_1\big)
   + \big(b_4 - (a_2 - L(2\xi+\eta))\big) \\
&= -a_1 - a_2 + b_3 + b_4 + 2L(2\xi + \eta).
\end{align*}
If we define
\begin{align*}
\bm{c} := (-1,-1,1,1)^\top,
\end{align*}
then this can be written compactly as
\begin{align*}
F(\bm{v}) = \bm{c}^\top \bm{v} + 2L(2\xi + \eta).
\end{align*}

From~\eqref{eqn:omega-in-F-knnnk},
\begin{align}
\omega(\delta)
&= \frac{2k}{k+1}
\max_{\bm{v} \in \mathcal{V},\ \bm{v}^\top \bm{Q} \bm{v} \le \delta^2/4}
\bigl( \bm{c}^\top \bm{v} + 2L(2\xi+\eta) \bigr).
\label{eqn:w-in-F-v-step-2-knnnk}
\end{align}

\paragraph{Step 4: Solving the outer problem over $\bm{v}$.}

The constant term $2L(2\xi+\eta)$ does not affect the optimizer, so
we focus on maximizing $\bm{c}^\top \bm{v}$ subject to
\begin{align*}
\bm{v}^\top \bm{Q} \bm{v} \le \frac{\delta^2}{4}, \qquad
\bm{v} \in \mathcal{V},
\end{align*}
where $\mathcal{V}$ is in~\eqref{eqn:mc-V-def-knnnk}.
Let
\begin{align*}
\bm{Q}^{-1} = \diag\Bigl(\frac{1}{kn},\frac{1}{n},\frac{1}{n},\frac{1}{kn}\Bigr).
\end{align*}
Introduce the vectors
\begin{align*}
\bm{h}_1 := (1,-1,0,0)^\top, \qquad
\bm{h}_2 := (0,0,1,-1)^\top,
\end{align*}
so that $|a_1-a_2|\le L\eta$ and $|b_3-b_4|\le L\eta$ can be written as
\begin{align*}
|\bm{h}_1^\top \bm{v}| \le L\eta, \qquad |\bm{h}_2^\top \bm{v}| \le L\eta.
\end{align*}
Define the scalars
\begin{align*}
\gamma &:= \bm{c}^\top \bm{Q}^{-1} \bm{c}
= \frac{2}{n}\Bigl(1+\frac{1}{k}\Bigr), \\
\beta &:= \bm{h}_1^\top \bm{Q}^{-1} \bm{h}_1 = \bm{h}_2^\top \bm{Q}^{-1} \bm{h}_2
= \frac{1}{n}\Bigl(1+\frac{1}{k}\Bigr), \\
\alpha &:= \bm{h}_1^\top \bm{Q}^{-1} \bm{c} = \bm{h}_2^\top \bm{Q}^{-1} \bm{c}
= \frac{1}{n}\Bigl(\frac{k-1}{k}\Bigr).
\end{align*}
Since we assume $k>1$, we have $\alpha>0$.

The quadratic program is
\begin{align}
\max_{\bm{v} \in \mathbb{R}^4} \; \bm{c}^\top \bm{v}
\quad\text{subject to}\quad
\bm{v}^\top \bm{Q} \bm{v} \le \frac{\delta^2}{4},
\qquad
|\bm{h}_1^\top \bm{v}| \le L\eta,\quad |\bm{h}_2^\top \bm{v}| \le L\eta,
\label{eqn:quadratic-in-Q}
\end{align}
with
\(
\bm{Q} = \mathrm{diag}(kn,n,n,kn)
\),
\(\bm{c} = (-1,-1,1,1)^\top\),
and \(\bm{h}_1 = (1,-1,0,0)^\top\), \(\bm{h}_2 = (0,0,1,-1)^\top\).
Using a symmetry argument (as in the main text), we may assume that among
the maximizers there exists $\bm{v}^\star$ such that
\(|\bm{h}_1^\top \bm{v}^\star| = |\bm{h}_2^\top \bm{v}^\star|\).
For such an optimizer, either both Lipschitz constraints are slack
(\(|\bm{h}_1^\top \bm{v}^\star| = |\bm{h}_2^\top \bm{v}^\star| < L\eta\)),
or both are binding (\(|\bm{h}_1^\top \bm{v}^\star| = |\bm{h}_2^\top \bm{v}^\star| = L\eta\)).
This yields the two regimes we analyze below.

\medskip
\emph{Regime~I: No Lipschitz constraints active.}

Ignore the Lipschitz constraints and maximize $\bm{c}^\top \bm{v}$ over the
ellipsoid $\{\bm{v} : \bm{v}^\top \bm{Q} \bm{v} \le \delta^2/4\}$.
Writing
\(\bm{w} = \bm{Q}^{1/2}\bm{v}\) and
\(\bm{u} = \bm{Q}^{-1/2} \bm{c}\), this becomes
\begin{align*}
\max_{\|\bm{w}\| \le \delta/2} \bm{u}^\top \bm{w},
\end{align*}
whose maximizer is attained at
\(\bm{w}^\star = (\delta/2)\,\bm{u}/\|\bm{u}\|\).
Therefore
\begin{align*}
\max_{\bm{v}^\top \bm{Q} \bm{v} \le \delta^2/4} \bm{c}^\top \bm{v}
= \max_{\|\bm{w}\| \le \delta/2} \bm{u}^\top \bm{w}
= \frac{\delta}{2} \|\bm{u}\|
= \frac{\delta}{2} \sqrt{\gamma}.
\end{align*}
The corresponding optimizer (in $\bm{v}$-coordinates) has
\begin{align*}
\bm{h}_1^\top \bm{v}_{\mathrm{unc}}(\delta)
= \bm{h}_2^\top \bm{v}_{\mathrm{unc}}(\delta)
= \frac{\delta}{2} \frac{\alpha}{\sqrt{\gamma}}.
\end{align*}
The Lipschitz constraints remain slack as long as
\(\bigl|\bm{h}_1^\top \bm{v}_{\mathrm{unc}}(\delta)\bigr|\le L\eta\),
i.e.
\begin{align*}
\frac{\delta}{2} \frac{\alpha}{\sqrt{\gamma}} \le L\eta.
\end{align*}
This defines the threshold
\begin{align*}
\delta_c
:= \frac{2L\eta\sqrt{\gamma}}{\alpha}
= \frac{2\sqrt{2nk(k+1)}}{k-1}\,L\eta,
\end{align*}
where we used the explicit expressions for $\gamma$ and $\alpha$ and the
assumption $k>1$.
For $0\le \delta \le \delta_c$ the Lipschitz constraints do not bind and
the unconstrained ellipsoid solution is feasible. Plugging
\(\bm{c}^\top \bm{v}_{\mathrm{unc}}(\delta) = (\delta/2)\sqrt{\gamma}\) into
\eqref{eqn:w-in-F-v-step-2-knnnk} yields
\begin{align}
\omega(\delta)
&= \frac{2k}{k+1}
\Bigl( \frac{\delta}{2}\sqrt{\gamma} + 2L(2\xi+\eta) \Bigr),
\qquad 0 \le \delta \le \delta_c.
\label{eqn:omega-regime-1}
\end{align}
Substituting $\gamma = \frac{2}{n}\bigl(1+\frac{1}{k}\bigr)$ gives the explicit
Regime~I expression.

\medskip
\emph{Regime~II: both Lipschitz constraints active.}
We now analyze the case where the quadratic constraint is loose enough that both
Lipschitz constraints bind at the maximizer. 
From \eqref{eqn:quadratic-in-Q},
we want to maximize $\bm{c}^\top \bm{v}$ subject to
\begin{align*}
\bm{v}^\top \bm{Q} \bm{v} \;\le\; \frac{\delta^2}{4},
\qquad
|\bm{h}_1^\top \bm{v}| \le L\eta,
\qquad
|\bm{h}_2^\top \bm{v}| \le L\eta.
\end{align*}
 By symmetry between the ``left'' and ``right'' coordinates, the maximizer must satisfy
\begin{align*}
\bm{h}_1^\top \bm{v} =
\bm{h}_2^\top \bm{v} = L\eta.
\end{align*}  

\emph{Reformulation.}
Introduce the variables
\begin{align*}
\bm{w} := \bm{Q}^{1/2} \bm{v}, \qquad \bm{u} := \bm{Q}^{-1/2} \bm{c},
\end{align*}
so that
\begin{align*}
\bm{c}^\top \bm{v} = \bm{u}^\top \bm{w}, \qquad \|\bm{w}\|^2 = \bm{v}^\top \bm{Q} \bm{v} \le \frac{\delta^2}{4}.
\end{align*}
Similarly, define
\begin{align*}
\bm{k}_j := \bm{Q}^{-1/2} \bm{h}_j, \quad j=1,2,
\qquad
K := [\,\bm{k}_1\; \bm{k}_2\,] \in \mathbb{R}^{4\times 2},
\qquad
\bm{d}_{\mathrm{vec}} := (L\eta, L \eta)^\top.
\end{align*}
The Lipschitz equalities become
\begin{align*}
\bm{h}_j^\top \bm{v} =  L \eta
\quad\Longleftrightarrow\quad
\bm{k}_j^\top \bm{w} =  L \eta
\quad\Longleftrightarrow\quad
K^\top \bm{w} = \bm{d}_{\mathrm{vec}}.
\end{align*}

Hence the Regime~II problem is equivalent to
\begin{equation}\label{eq:regime2-whitened}
\max_{\bm{w}\in\mathbb{R}^4} \; \bm{u}^\top \bm{w}
\quad\text{s.t.}\quad
\|\bm{w}\|^2 \le \frac{\delta^2}{4},
\;\;
K^\top \bm{w} = \bm{d}_{\mathrm{vec}}.
\end{equation}
This is exactly of the form in Lemma~\ref{lemma:ball-linear}
with $m=4$, $p=2$, $\bm{u}$ as above, $K$ as above, $\bm{d}=\bm{d}_{\mathrm{vec}}$,
and radius $R = \delta/2$.
Lemma~\ref{lemma:ball-linear} therefore yields
\begin{equation}\label{eq:regime2-master}
\max \bm{u}^\top \bm{w}
= \bm{b}^\top A^{-1} \bm{d}_{\mathrm{vec}}
+ \sqrt{\gamma - \bm{b}^\top A^{-1} \bm{b}}\;
  \sqrt{ \frac{\delta^2}{4} - \bm{d}_{\mathrm{vec}}^\top A^{-1} \bm{d}_{\mathrm{vec}} },
\end{equation}
where
\begin{align*}
A := K^\top K,\qquad
\bm{b} := K^\top \bm{u},\qquad
\gamma := \bm{u}^\top \bm{u} = \bm{c}^\top \bm{Q}^{-1}\bm{c}.
\end{align*}

\emph{Explicit evaluation.}
From the definitions of $\bm{h}_1,\bm{h}_2,\bm{Q}$ and $\bm{u}$, a direct computation shows
\begin{align*}
A
=
\begin{pmatrix}
\beta & 0\\
0 & \beta
\end{pmatrix},
\qquad
\bm{b} =
\begin{pmatrix}
\alpha\\
\alpha
\end{pmatrix},
\end{align*}
with
\begin{align*}
\gamma = \frac{2}{n}\Bigl(1+\frac{1}{k}\Bigr),\qquad
\beta = \frac{1}{n}\Bigl(1+\frac{1}{k}\Bigr),\qquad
\alpha = \frac{1}{n}\Bigl(\frac{k-1}{k}\Bigr).
\end{align*}
Using this structure,
\begin{align*}
\bm{b}^\top A^{-1} \bm{d}_{\mathrm{vec}}
= \frac{2\alpha  L \eta}{\beta}
= \frac{2(k-1)}{k+1}L\eta,
\end{align*}
\begin{align*}
\bm{b}^\top A^{-1} \bm{b}
= \frac{2\alpha^2}{\beta}
= \frac{2(k-1)^2}{k n (k+1)},
\qquad
\bm{d}_{\mathrm{vec}}^\top A^{-1} \bm{d}_{\mathrm{vec}}
= \frac{2(L \eta)^2}{\beta}
= \frac{2kn}{k+1}L^2\eta^2.
\end{align*}
Moreover
\begin{align*}
\gamma - \frac{2\alpha^2}{\beta}
= \frac{8}{n(k+1)}.
\end{align*}
Plugging these into \eqref{eq:regime2-master} yields
\begin{align*}
\max \bm{u}^\top \bm{w}
= \frac{2(k-1)}{k+1}L\eta
+ \sqrt{\frac{8}{n(k+1)}}\;
\sqrt{\frac{\delta^2}{4} - \frac{2kn}{k+1}L^2\eta^2}.
\end{align*}
Recalling that $\bm{c}^\top \bm{v} = \bm{u}^\top \bm{w}$ 
and substituting the above equation back into \eqref{eqn:w-in-F-v-step-2-knnnk} gives
the Regime~II modulus
\begin{align}
\omega(\delta)
= \frac{2k}{k+1}\Biggl(
  2L(2\xi+\eta)
  + \frac{2(k-1)}{k+1} L\eta
  + \sqrt{\frac{8}{n(k+1)}}
    \sqrt{\frac{\delta^2}{4} - \frac{2kn}{k+1}L^2\eta^2}
  \Biggr),
\qquad \delta \ge \delta_c.
\label{eqn:omega-regime-2}
\end{align}

\medskip
Combining the expressions from Regime~I~\eqref{eqn:omega-regime-1} and Regime~II~\eqref{eqn:omega-regime-2}, we obtain the claimed
piecewise closed-form expression for the modulus $\omega(\delta)$ with
weights $(n_1,n_2,n_3,n_4) = (kn,n,n,kn)$.

Using~\eqref{eqn:max-bias-equiv-formula}, we have  $\maxbias(\hat{\tau}_\delta) 
= \half(\omega(\delta) - \delta \omega'(\delta))$.
In addition, from~\eqref{eqn:std-formula},
$\sd(\hat{\tau}_\delta) = \omega'(\delta)$, and we 
finish the proof.
\end{proof}

\subsection{Derivation of the confidence sequence}
\label{section:confidence-sequence-proof}

\begin{lemma} 
\label{lemma:gaussian-conf-seq-cv}
Fix $\alpha \in (0,1)$ and a sequence of nonnegative numbers
$\{\alpha_t\}_{t\ge 1}$ such that
\begin{align*}
  \sum_{t=1}^\infty \alpha_t \le \alpha.
\end{align*}
For each $t \ge 1$, let $\tau_t$ denote the target estimand and  write
\begin{align*}
  \hl_t
  = \tau_t + \bias_t(f) + \sd(\hl_t)\,Z_t,
\end{align*}
where $Z_t \sim \mathcal N(0,1)$ and the bias satisfies
\begin{align*}
  \bigl|\bias_t(f)\bigr|
  \le \maxbias(\hl_t)
  \quad\text{for all } f \in \mc{F}.
\end{align*}
Define the standardized worst-case bias
\begin{align*}
  y_t := \frac{\maxbias(\hl_t)}{\sd(\hl_t)}.
\end{align*}
Let $\cv_{\alpha}(z)$ be    the $(1-\alpha)$-quantile of $\lvert N(z,1)\rvert$.
Consider the confidence sequence
\begin{align}
  \mc{C}_t
  := \hl_t \pm
     \sd(\hl_t)\,
     \cv_{\alpha_t}\bigl(y_t\bigr),
  \label{eqn:conf-seq-cv}
\end{align}
for $t = 1,2,\dots$.
Then
\begin{align*}
  \inf_{f \in \mc{F}}
  \P_f\bigl(\forall t \ge 1:\ \tau_t \in \mc{C}_t\bigr)
  \ge 1 - \alpha.
\end{align*}
In particular, the choice
\begin{align*}
  \alpha_t := \frac{6\alpha}{\pi^2 t^2}
\end{align*}
satisfies $\sum_{t=1}^\infty \alpha_t = \alpha$, and therefore
$\{\mc{C}_t\}_{t\ge 1}$ in \eqref{eqn:conf-seq-cv} is an always-valid
$(1-\alpha)$ confidence sequence.
\end{lemma}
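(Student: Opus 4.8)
The statement is a standard ``stitching'' result: turn a collection of per-time $(1-\alpha_t)$ coverage guarantees into a single uniform-in-time $(1-\alpha)$ guarantee by a union bound over $t$. The plan is therefore to (i) show that for each fixed $t$ and each $f\in\mc{F}$ the interval $\mc{C}_t$ in \eqref{eqn:conf-seq-cv} covers $\tau_t$ with probability at least $1-\alpha_t$, uniformly over $f$; (ii) sum the miscoverage probabilities over $t$; and (iii) plug in $\alpha_t = 6\alpha/(\pi^2 t^2)$ using the Basel identity $\sum_{t\ge1} t^{-2} = \pi^2/6$. A key point I would emphasize is that the union bound needs only countable subadditivity of $\P_f$ and \emph{no} joint-distributional assumption on $(Z_1,Z_2,\dots)$; this is exactly what makes the conclusion valid for arbitrary, data-dependent stopping rules rather than only for a pre-specified horizon $T$.

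\textbf{Per-time coverage.} Fix $f\in\mc{F}$ and $t\ge1$, and set $b_t \defeq \bias_t(f)/\sd(\hl_t)$. The representation $\hl_t = \tau_t + \bias_t(f) + \sd(\hl_t)\,Z_t$ gives
\begin{align*}
  \frac{\hl_t - \tau_t}{\sd(\hl_t)} \;=\; b_t + Z_t \;\sim\; N(b_t,1),
\end{align*}
so the event $\{\tau_t\in\mc{C}_t\}$ is precisely $\{\,|N(b_t,1)| \le \cv_{\alpha_t}(y_t)\,\}$. By hypothesis $|b_t| \le \maxbias(\hl_t)/\sd(\hl_t) = y_t$. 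Since $z \mapsto \cv_\alpha(z)$, the $(1-\alpha)$-quantile of $|N(z,1)|$, is nondecreasing on $[0,\infty)$ (equivalently, for fixed threshold $c$ the map $b\mapsto\P(|N(b,1)|\le c)$ is nonincreasing in $|b|$), we get $\cv_{\alpha_t}(y_t) \ge \cv_{\alpha_t}(|b_t|)$, and then $\P(|N(b_t,1)| \le \cv_{\alpha_t}(y_t)) \ge \P(|N(b_t,1)| \le \cv_{\alpha_t}(|b_t|)) \ge 1-\alpha_t$ by definition of the quantile. Hence $\P_f(\tau_t \notin \mc{C}_t) \le \alpha_t$ for every $f\in\mc{F}$ and every $t$. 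Assembling, for each $f$,
\begin{align*}
  \P_f(\exists\,t\ge1:\ \tau_t\notin\mc{C}_t) \;\le\; \sum_{t=1}^\infty \P_f(\tau_t\notin\mc{C}_t) \;\le\; \sum_{t=1}^\infty \alpha_t \;\le\; \alpha,
\end{align*}
and taking complements and then $\inf_{f\in\mc{F}}$ yields the claim. For the explicit schedule $\alpha_t = 6\alpha/(\pi^2 t^2)$, the Basel sum gives $\sum_{t\ge1}\alpha_t = \alpha$, so $\{\mc{C}_t\}$ is an always-valid $(1-\alpha)$ confidence sequence.

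\textbf{Main obstacle.} The only nonroutine ingredient is the per-time coverage bound of Step~(i), and within it the monotonicity of $\cv_\alpha(\cdot)$ in its argument: this is what guarantees that calibrating the critical value at the \emph{worst-case} standardized bias $y_t$ dominates the realized bias ratio $|b_t|$ for every admissible $f$. This is the same fixed-length-CI coverage fact that underlies the modulus construction of \citet{Donoho94} and \citet{ArmstrongKo21}; I would either cite it directly or give the short argument that $\P(|N(b,1)|\le c)$ is decreasing in $|b|$ (e.g.\ by noting $|N(b,1)|$ is stochastically increasing in $|b|$, since its CDF $\Phi(c-b)-\Phi(-c-b)$ has derivative in $b$ equal to $-\varphi(c-b)+\varphi(c+b)\le 0$ for $b\ge0$). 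Everything else---the Gaussian location representation, subadditivity, and the Basel series---is mechanical.
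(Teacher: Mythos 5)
Your proof is correct and follows essentially the same route as the paper: standardize to a Gaussian location family, bound the per-time miscoverage by $\alpha_t$ using the worst-case bias $y_t$, and apply a union bound over $t$ with the Basel series. The only difference is cosmetic---you spell out the monotonicity of $b\mapsto\P(|N(b,1)|\le c)$ via the derivative of $\Phi(c-b)-\Phi(-c-b)$, whereas the paper simply asserts the corresponding quantile property of $\cv_{\alpha_t}(\cdot)$.
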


\begin{proof}[Proof of Lemma~\ref{lemma:gaussian-conf-seq-cv}]
Fix an arbitrary data-generating function $f$ and write $\P_f$ for the
corresponding probability measure.
We will show that
\begin{align*}
  \P_f\bigl(\forall t \ge 1:\ \tau_t \in \mc{C}_t\bigr)
  \ge 1 - \alpha,
\end{align*}
and since $f$ was arbitrary this implies the claimed uniform bound over $f$.

For each $t\ge 1$, define the failure event
\begin{align*}
  F_t
  := \bigl\{\tau_t \notin \mc{C}_t\bigr\}
  = \Bigl\{
      \bigl|\hl_t - \tau_t\bigr|
      > \sd(\hl_t)\,\cv_{\alpha_t}(y_t)
    \Bigr\}.
\end{align*}
By the decomposition
\begin{align*}
  \hl_t - \tau_t
  = \bias_t(f) + \sd(\hl_t)\,Z_t,
\end{align*}
on $F_t$ we have
\begin{align*}
  \bigl|\bias_t(f) + \sd(\hl_t)\,Z_t\bigr|
  > \sd(\hl_t)\,\cv_{\alpha_t}(y_t).
\end{align*}
Dividing by $\sd(\hl_t)$ and setting
\begin{align*}
  \mu_t(f) := \frac{\bias_t(f)}{\sd(\hl_t)},
\end{align*}
we can write
\begin{align*}
  F_t
  = \Bigl\{
      \bigl|Z_t + \mu_t(f)\bigr|
      > \cv_{\alpha_t}(y_t)
    \Bigr\},
  \qquad
  \bigl|\mu_t(f)\bigr|
  \le y_t,
\end{align*}
where the last inequality uses the bias bound
$\bigl|\bias_t(f)\bigr|\le \maxbias(\hl_t)$.

By the definition of $\cv_{\alpha_t}(z)$,
for every $z\ge 0$ and every $|\mu|\le z$,
\begin{align*}
  \P\bigl(\lvert N(\mu,1)\rvert > \cv_{\alpha_t}(z)\bigr)
  \le \alpha_t.
\end{align*}
Applying this with $z = y_t$ and $\mu = \mu_t(f)$ shows that
\begin{align}
  \P_f(F_t)
  = \P_f\Bigl(
        \bigl|Z_t + \mu_t(f)\bigr|
        > \cv_{\alpha_t}(y_t)
      \Bigr)
  \le \alpha_t
  \quad\text{for all } t \ge 1.
  \label{eq:per-time-alpha}
\end{align}

Now consider the event that the confidence sequence ever fails:
\begin{align*}
  F
  := \bigl\{\exists\,t\ge 1:\ \tau_t \notin \mc{C}_t\bigr\}
  = \bigcup_{t=1}^\infty F_t.
\end{align*}
By the union bound and \eqref{eq:per-time-alpha},
\begin{align*}
\P_f(F)
= \P_f\Bigl( \bigcup_{t=1}^\infty F_t \Bigr) 
\le \sum_{t=1}^\infty \P_f(F_t)
\le \sum_{t=1}^\infty \alpha_t 
\le \alpha.
\end{align*}
Therefore
\begin{align*}
\P_f\bigl(\forall t\ge 1:\ \tau_t \in \mc{C}_t\bigr)
= 1 - \P_f(F)
\ge 1 - \alpha.
\end{align*}
Since this holds for every $f$, taking the infimum over $f$ completes the proof.
\end{proof}

\section{Experimental details}
\label{sec:experimental-details}

\subsection{Details of Example~\ref{example:simulation}}
\label{sec:experiment-sim}

First, we provide additional details and analysis of
Example~\ref{example:simulation}.  We let $\P(Z=1|X=x) = \pi(x)$ given
in~\eqref{eqn:sim-prop}, and we also set the true outcome function to be
$f(x,z) = F(x) + \I{z = 1} \cdot h(x)$ given by~\eqref{eqn:sim-ate}.  Both
$\pi(x)$ and $f(x,z)$ are parameterized by two parameters: $\overlap$ and $\overrange $ that
govern the size of the limited overlap region.  
A smaller $\overlap$ and $\overrange$ means
the limited overlap region is smaller, while $H$ controls the value of the ATE
$\tau$.  Given the true outcome function, we generate data
$y_i= f(x_i,z_i) + \epsilon_i$ where $\epsilon_i \sim N(0,\sigma^2)$ with
$\sigma = 0.06$.  Despite the low noise, we see the unreliability of typical
asymptotic estimators.  We take $\overlap = 0.05, \overrange = 0.05,H=0.25$ unless
specified
otherwise. 

\begin{align}
\pi(x) &= 
\begin{cases} 
\frac{\overlap}{\overrange }x, & \text{if } 0 \leq x \leq \overrange, \\
-\frac{\overlap}{\overrange }(x-\overrange) + (1-\overlap), & \text{if } \overrange 
< x \leq 2\overrange, \\
\frac{1-2\overlap}{1-2\overrange}(x-2\overrange) + \overlap, & \text{if } x > 2\overrange,
\end{cases}\label{eqn:sim-prop}
\end{align}
\begin{align}
h(x) = 8H\left(x - \frac{1}{2}\right)^2,
 ~~
F(x) &= 
\begin{cases} 
\frac{4H}{\overrange^2}\left(x - \frac{\overrange}{2}\right)^2 + H, & \text{if } 0 \leq x \leq \frac{\overrange}{2}, \\
-\frac{4H}{\overrange^2}x(x - \overrange), & \text{if } \frac{\overrange}{2} < x \leq \overrange, \\
32H\left(x - \overrange\right)\left(x - \overrange - \frac{1}{2}\right), & \text{if } \overrange < x \leq \overrange+\frac{1}{2}, \\
-\frac{16H}{(2\overrange -1)^2}\left(x - \overrange - \frac{1}{2}\right)\left(x - 1\right), & \text{if } \overrange +\frac{1}{2} < x \leq 1. \\
\end{cases} \label{eqn:sim-ate}
\end{align}

We simulate $n=1000$ points of $X$ according to the 
data-generating process
described above.  As discussed, for $\AIPWP$, after truncation, one needs to
analyze the ATE in the non-overlap region
$\tau_-$~\eqref{eqn:ATE-decomposition}, and our minimax approach $\MP$ can
build a minimax CI that guarantees coverage of $\tau_-$.  On the other hand,
as we see earlier in Figure~\ref{fig:simulation-results}, the naive minimax
approach $\M$ does not fully exploit the data in the overlap region and
produces an overly conservative interval that is driven by the worst-case bias
of the ATE applied on the entire data.

\subsection{Details of the case study in Section~\ref{section:experiment} }

We first present a lemma that allows us to 
generate an observational dataset given a propensity score function 
from a randomized controlled trial (RCT) dataset.
We treat the ATE estimated from the RCT dataset as the ground truth treatment effect
and generate observational datasets using the approach described in
Lemma~\ref{lemma:rct-sampling}.

\begin{lemma} \label{lemma:rct-sampling}
Let the dataset be $\mc{D} = \set{x_i,y_i,z_i}_{i=1}^n$ and assume $z_i \sim \Ber(p)$ for some $p \in (0,1)$.
For every $i$, let $C_i \sim \Ber(\pi(x_i))$.
Define a random variable $O$ as follows:
if $p \le \half$, then
\begin{align*}
O_i \defeq \begin{cases}
1 & \text{ if } z_i= 1, \\ 
\Ber(\frac{p}{1-p}) & \text{ if } z_i= 0, 
     \end{cases}
\end{align*} 
and if $p > \half$, let
\begin{align*}
O_i \defeq \begin{cases}
1 & \text{ if } z_i= 0, \\ 
\Ber(\frac{1-p}{p}) & \text{ if } z_i= 1.
\end{cases}
\end{align*} 
And let $I_i = \I{z_i = C_i \text{ and } O_i = 1}$. We define another dataset $\mc{D}'$ as 
\begin{align*}
\mc{D}' \defeq \set{x_i,y_i,z_i}_{i \in \set{j \in [n]: I_j =  1}},
\end{align*}
then 
$P(z_i=1 | I_i = 1)=\pi(x_i)$ so that
$\mc{D}'$ is an observational 
dataset with propensity score $\pi$.
In addition, we have 
$\P(I_i=1) = \min\set{p,1-p}$.
%which does not depend on $i$, implying $\mc{D}(X) \deq \mc{D}'(X)$. 
\end{lemma}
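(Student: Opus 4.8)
The plan is to compute the joint law of $(z_i, I_i)$ directly and then read off $\P(z_i = 1 \mid I_i = 1)$ and $\P(I_i = 1)$ by Bayes' rule, treating the two cases $p \le \half$ and $p > \half$ separately. Throughout I would use that $C_i$ and the fresh Bernoulli randomness defining $O_i$ are drawn independently of $z_i$ (and of $y_i$ and the potential outcomes), that $\P(C_i = 1) = \pi(x_i)$, and that $O_i \equiv 1$ on one of the two values of $z_i$; note also that the construction is well defined since $\frac{p}{1-p} \le 1$ when $p \le \half$ and $\frac{1-p}{p} \le 1$ when $p > \half$.

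For $p \le \half$, since $\{I_i = 1\} = \{z_i = C_i\}\cap\{O_i = 1\}$ and $O_i = 1$ deterministically when $z_i = 1$, the two relevant joint probabilities are
\begin{align*}
\P(I_i = 1,\, z_i = 1) &= \P(z_i = 1)\,\P(C_i = 1) = p\,\pi(x_i), \\
\P(I_i = 1,\, z_i = 0) &= \P(z_i = 0)\,\P(C_i = 0)\,\P\!\left(O_i = 1 \mid z_i = 0\right) = (1-p)(1-\pi(x_i))\cdot\tfrac{p}{1-p} = p\,(1-\pi(x_i)).
\end{align*}
Summing gives $\P(I_i = 1) = p = \min\{p, 1-p\}$, and dividing the first line by this total gives $\P(z_i = 1 \mid I_i = 1) = \pi(x_i)$. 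The case $p > \half$ is the same computation with the roles of $0$ and $1$ swapped: now $O_i \equiv 1$ when $z_i = 0$ and $\P(O_i = 1 \mid z_i = 1) = \tfrac{1-p}{p}$, so $\P(I_i = 1, z_i = 1) = (1-p)\pi(x_i)$ and $\P(I_i = 1, z_i = 0) = (1-p)(1-\pi(x_i))$; summing yields $\P(I_i = 1) = 1-p = \min\{p, 1-p\}$ and again $\P(z_i = 1 \mid I_i = 1) = \pi(x_i)$. Finally, to justify that $\mc{D}'$ is a genuine observational dataset with propensity score $\pi$, I would observe that $I_i$ is a function only of $(x_i, z_i)$ and exogenous randomness, hence independent of the outcome and of any potential outcomes given $(x_i, z_i)$, so conditioning on selection $\{I_i = 1\}$ leaves unconfoundedness intact while replacing the assignment mechanism by $\pi$.

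There is essentially no obstacle here: the only thing requiring care is the direction of the conditioning in the definition of $O_i$ (which value of $z_i$ forces $O_i = 1$) and keeping the independence of $C_i$ and the auxiliary Bernoulli explicit; after that each case is a one-line application of Bayes' rule.
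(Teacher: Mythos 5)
Your proposal is correct and follows essentially the same argument as the paper: compute the law of $(z_i, I_i)$ using the independence of $C_i$ and the auxiliary Bernoulli from $z_i$, sum to get $\P(I_i=1)$, and apply Bayes' rule. The only differences are cosmetic---you work with joint rather than conditional probabilities and you write out the $p>\tfrac{1}{2}$ case, which the paper omits as symmetric---plus a welcome extra remark that selection on $I_i$ preserves unconfoundedness.
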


\begin{proof}[Proof of Lemma~\ref{lemma:rct-sampling}]
We  assume $p \le \half$ (the other case can be proved similarly).
Then
we have
\begin{align*}
\P(I_i = 1| z_i = 1) &= \P(O_i=1 | z_i=1) \cdot \P(C_i = 1)= \pi(x_i),  \\
\P(I_i = 1| z_i = 0) &= \P(O_i=1 | z_i=0) \cdot \P(C_i = 0) =  \frac{p}{1-p} (1-\pi(x_i)), 
\end{align*}
so that
\begin{align*}
    \P(I_i=1) = 
    \P(I_i = 1 | z_i=1) \P(z_i=1) 
    + 
    \P(I_i = 1 | z_i=0) \P(z_i=0)
    =p \pi(x_i) + (1-p) \frac{p}{1-p} (1-\pi(x_i))
    =p,
\end{align*}
\begin{align*}
\P(z_i = 1|I_i=1) = \frac{\P(I_i = 1 | z_i=1) \P(z_i=1)}{\P(I_i=1)}
=  \frac{p \pi(x_i)}{p} = \pi(x_i),
\end{align*} and we finish the proof.
\end{proof}

We now describe the propensity score we use. We first use the entire RCT dataset to train a   \textsf{lightgbm} regressor
to estimate  the conditional treatment effect $\tilde{\tau}(x)$, then construct the propensity score $\pi(x)$ based on the quantile of $\tilde{\tau}(x)$, see~\eqref{eqn:RCT-prop-E-k-def}.

%\ym{TBD: RCT multiple run results}

% \subsection{Other results for methods based on~\citep{CrumpHoImMi06}}

%%% Local Variables:
%%% mode: latex
%%% TeX-master: "main"
%%% End:

 \subsection{Experimental results for continual sampling}
 \label{sec:experimenta-results-continual-sampling}

We use the same data as in Section~\ref{sec:data-collection-two-option}.
 Using the confidence sequence~\eqref{eqn:conf-seq}, we simulate the confidence sequence assuming the analyst can sample 6 epochs of new data \\ 
 ($x  \in (0.40, 0.47),\ (0.47, 0.53),\ (0.53, 0.60),\ (0, 0.03),\ (0.03, 0.07),\ (0.07, 0.10)$)
 in the non-overlap region with a fixed truncation threshold and $L = 5.48$. 
From Figure~\ref{fig:sim-conf-seq}, 
we observe that the length of the confidence interval decreases and 
that the interval becomes closer to the origin as the number of iterations increases, indicating that the uncertainty over the non-overlap region gradually decreases.

 \begin{figure}\centering
\begin{minipage}[b]{0.49\textwidth}
\centering
\includegraphics[width=\textwidth, height=6cm]{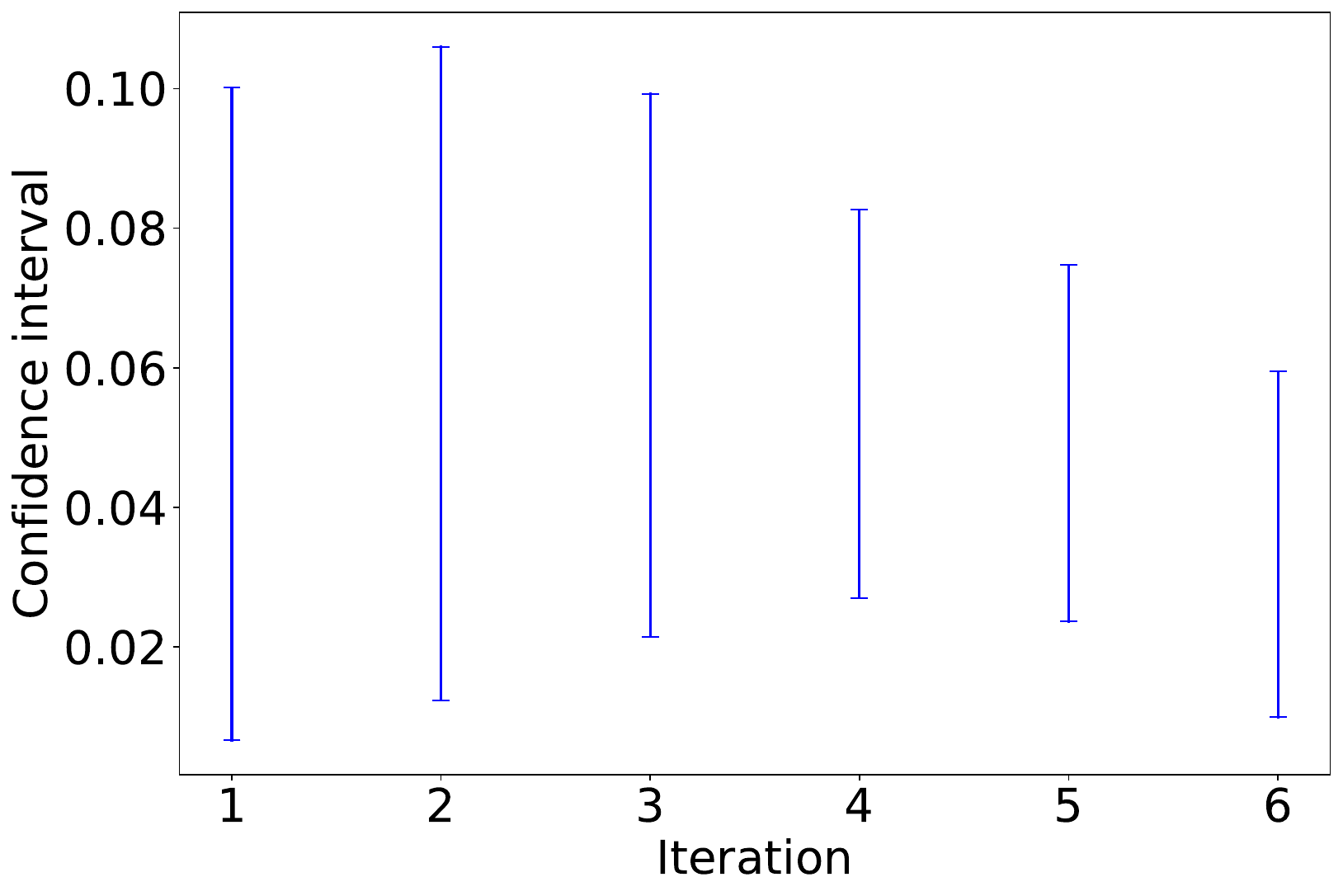}
\end{minipage}
\hfill
\begin{minipage}[b]{0.49\textwidth}
\centering \includegraphics[width=\textwidth, height=6cm]{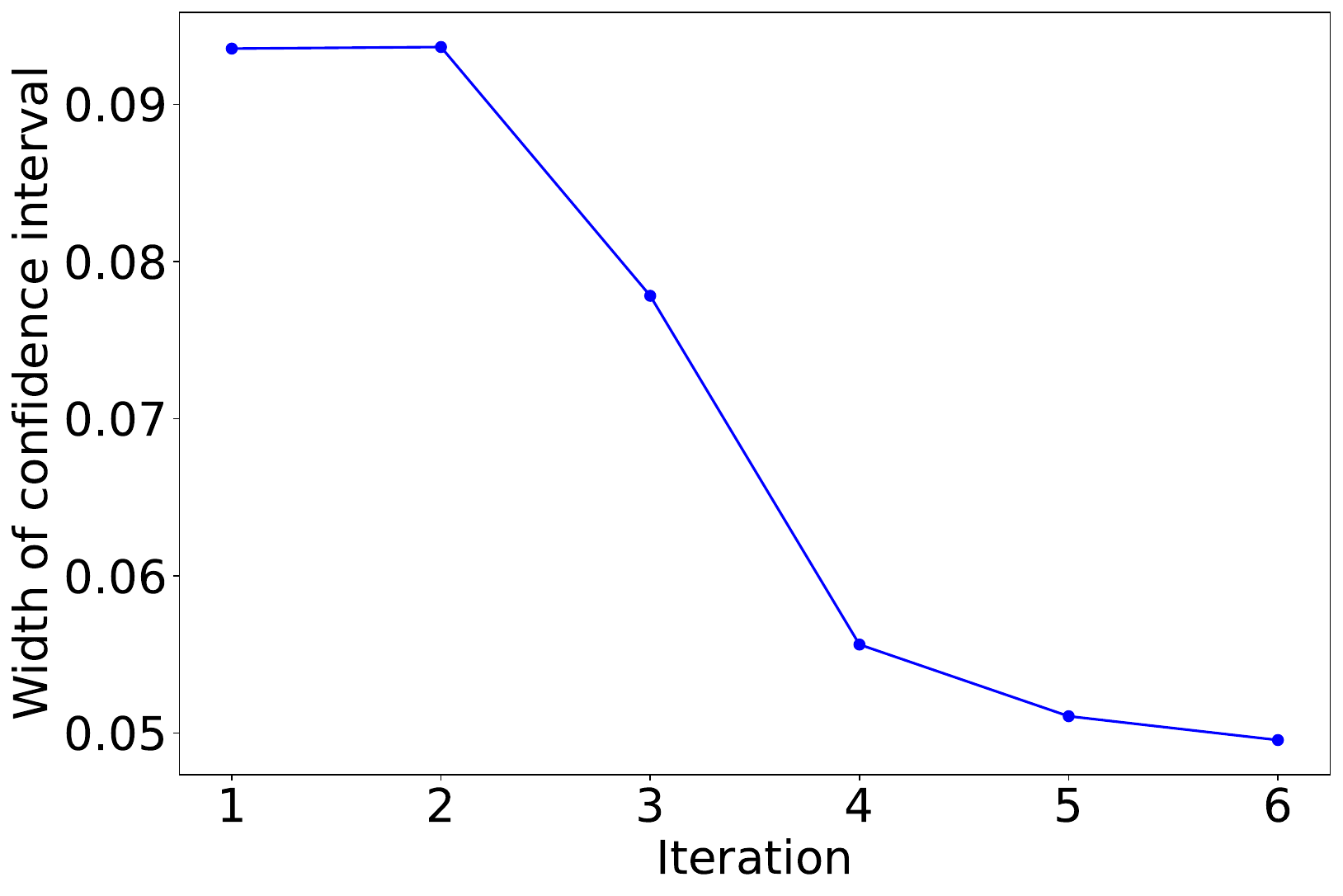}
\end{minipage}  
\caption{A confidence sequence generated during continual sampling}
\label{fig:sim-conf-seq}
\end{figure}

%%% Local Variables:
%%% mode: latex
%%% TeX-master: "main"
%%% End:

%include appendix sections

\end{document}